\newenvironment{packed_enum}{
\begin{enumerate}
\setlength{\itemsep}{1pt}
\setlength{\parskip}{0pt}
\setlength{\parsep}{0pt}
}{\end{enumerate}}
\begin{document} 

\title{On the Power of Adaptivity in Matrix Completion and Approximation}

\author[1]{
Akshay Krishnamurthy
\thanks{akshaykr@cs.cmu.edu}}
\author[2]{
Aarti Singh
\thanks{aarti@cs.cmu.edu}}

\affil[1]{Computer Science Department\\
Carnegie Mellon University}
\affil[2]{Machine Learning Department\\
Carnegie Mellon University}

\maketitle

\begin{abstract}
We consider the related tasks of matrix completion and matrix approximation from missing data and propose adaptive sampling procedures for both problems. 
We show that adaptive sampling allows one to eliminate standard incoherence assumptions on the matrix row space that are necessary for passive sampling procedures. 
For exact recovery of a low-rank matrix, our algorithm judiciously selects a few columns to observe in full and, with few additional measurements, projects the remaining columns onto their span. 
This algorithm exactly recovers an $n \times n$ rank $r$ matrix using $O(n r \mu_0 \log^2(r))$ observations, where $\mu_0$ is a coherence parameter on the column space of the matrix. 
In addition to completely eliminating any row space assumptions that have pervaded the literature, this algorithm enjoys a better sample complexity than any existing matrix completion algorithm.
To certify that this improvement is due to adaptive sampling, we establish that row space coherence is necessary for passive sampling algorithms to achieve non-trivial sample complexity bounds.

For constructing a low-rank approximation to a high-rank input matrix, we propose a simple algorithm that thresholds the singular values of a zero-filled version of the input matrix.
The algorithm computes an approximation that is nearly as good as the best rank-$r$ approximation using $O(nr \mu \log^2(n))$ samples, where $\mu$ is a slightly different coherence parameter on the matrix columns. 
Again we eliminate assumptions on the row space.
\end{abstract}

\section{Introduction}
\label{sec:intro}

While the cost of data acquisition has decreased significantly across the spectrum of scientific applications, it has failed to keep up with the increasing complexity of the systems and processes being studied. 
As a concrete example, routing optimization in communication networks or personalization in social networks involve making inferences at the granularity of individual nodes and links, and these tasks become more challenging as the networks grow.
In other words, both the amount of data generated by the network \emph{and} the complexity of statistical problems scale with the size of the network.
This phenomenon is prevalent across modern statistical applications and the result is that analysts face the challenge of making meaningful inferences from extremely undersampled datasets.

A number of innovative algorithmic ideas, including the compressive sensing paradigm \citep{candes2008introduction}, show that one can reliabily perform inference in the presence of undersampling, but many of these methods are limited in that the algorithms do not \emph{interact} with the acquisition mechanism. 
Adaptive sampling, where the inference algorithm interacts with the data measurement process, is a promising approach to tolerate more severe undersampling.
Indeed this is true in several settings, where we are now aware of adaptive sampling methods that outperform all passive schemes~\citep{haupt2011distilled}.

This paper proposes adaptive sampling algorithms for low rank matrix completion and matrix approximation. 
In the completion problem, we would like to exactly recover a low rank matrix after observing only a small fraction of its entries.
In the approximation problem, rather than exact recovery, we aim to find a low rank matrix that approximates, in a precise sense, the input matrix, which need not be low rank. 
In both problems, we are only allowed to observe a small number of matrix entries, although these entries can be chosen sequentially and in a feedback-driven manner.

The thesis of our work is that adaptive sampling allows us to remove incoherence assumptions that have pervaded the literature. 
Previous analyses show that if the energy of the matrix is spread out fairly uniformly across its coordinates, then passive uniform-at-random samples suffice for completion or approximation. 
In contrast, our work shows that adaptive sampling algorithms can focus measurements appropriately to solve these problems even if the energy is non-uniformly distributed.
Handling non-uniformity is essential in a variety of problems involving outliers, for example network monitoring problems with anomalous hosts, or recommendation problems with popular items or highly active users. 
This is a setting where passive algorithms fail, as we will show.

We make the following contributions:

\begin{enumerate}
\item For the matrix completion problem, we give a simple algorithm that exactly recovers an $n \times n$ rank $r$ matrix using at most $O(nr \mu_0 \log^2(r))$ measurements where $\mu_0$ is the coherence parameter on the column space of the matrix. 
This algorithm outperforms all existing results on matrix completion both in terms of sample complexity and in the fact that we place no assumptions on the row space of the matrix.
The algorithm is extremely simple, runs in $O(nr^2)$ time, and can be implemented in one pass over the columns of the matrix.
\item We complement this sufficient condition with a lower bound showing that in the absence of row-space incoherence, \emph{any} passive scheme must see $\Omega(n^2)$ entries.
This concretely demonstrates the power of adaptivity in the matrix completion problem.
\item For matrix approximation, we analyze an algorithm that, after an adaptive sampling phase, approximates the input matrix by the top $r$ ranks of an appropriately rescaled zero-filled version of the matrix. 
We show that with just $O(nr \mu \log^2(n))$ samples, this approximation is competitive with the best rank $r$ approximation of the matrix.
Here $\mu$ is a coherence parameter on each column of the matrix; as before we make no assumptions about the row space of the input. 
Again, this result significantly outperforms existing results on matrix approximation from passively collected samples.
\end{enumerate}

This paper is organized as follows: we conclude this introduction with some basic definitions and then turn to related work in Section~\ref{sec:related}.
The main results, consequences, and more detailed comparisons to existing work are given in Section~\ref{sec:results}.
In Section~\ref{sec:experiments}, we provide some simulation that validate our theoretical results. 
We provide proofs in Section~\ref{sec:proofs} and conclude with some future directions in Section~\ref{sec:discussion}.
Some details are deferred to the appendix. 



\subsection{Preliminaries}
Before proceeding, let us set up some notation used throughout the paper.
We are interested in recovering, or approximating, a $d \times n$ matrix $X$ from a set of at most $M$ observations (We assume $d \le n$.).
We denote the columns of $X$ by $x_1, \ldots, x_n \in \RR^d$ and use $t$ to index the columns. 
We use $x_t(i)$ to denote the $i$th coordinate of the column vector $x_t$. 

We will frequently work with the truncated singular value decomposition (SVD) of $X$ which is given by zero-ing out its smaller singular values.
Specifically, write $X = U_r \Sigma_r V_r^T + U_{-r}\Sigma_{-r} V_{-r}^T$ where $[U_r, U_{-r}]$ (respectively $[V_r, V_{-r}]$) forms an orthonormal matrix and $\Sigma_r = \textrm{diag}(\sigma_1, \ldots, \sigma_r), \Sigma_{-r} = \textrm{diag}(\sigma_{r+1}, \ldots, \sigma_{d})$ are diagonal matrices with $\sigma_1 \ge \ldots \ge \sigma_r \ge \sigma_{r+1} \ge \ldots \ge \sigma_d$.
The truncated singular value decomposition is $X_r = U_r \Sigma_r V_r^T$, which is the best rank-$r$ approximation to $X$ both in Frobenius and spectral norm.

We also use capital letters to denote subspaces.
We often overload notation by using the same symbol to refer to a subspace and any orthonormal basis for that subspace.
Specifically, if $U \subset \RR^d$ is a subspace with dimension $r$, we may sometimes use $U$ to refer to a $d\times r$ matrix whose columns are an orthonormal basis for that subspace and vice versa. 
We use $U^\perp$ to denote the orthogonal complement to the subspace $U$ and $\Pcal_U$ to refer to the orthogonal projection operator onto $U$. 

As we are dealing with missing data and sampling, we also need some notation for subsampling operations. 
Let $[d]$ denote the set $\{1, \ldots, d\}$ and let $\Omega$ be a list of $m$ values from $[d]$, possibly with duplicates (One can think of $\Omega$ as a vector in $[d]^m$ and $\Omega(j)$ is the $j$th coordinate of this vector).
If $x \in \RR^d$, then $x_\Omega \in \RR^m$ is the vector formed putting $x(i)$ in the $j$th coordinate if $\Omega(j) = i$ and $\Rcal_{\Omega}x$ is a zero-filled rescaled version of $x$ with $\Rcal_{\Omega}x(i) = 0$ if $i \notin \Omega$ and $\Rcal_{\Omega}x(i) = dx(i)/|\Omega|$ if $i \in \Omega$.
In other words, $\Rcal_{\Omega}$ is a $d\times d$ diagonal matrix with the $(i,i)$th entry equal to $d/|\Omega|$ if $i \in \Omega$ and zero otherwise. 

For a $r$-dimensional subspace $U\subset \RR^d$, $U_{\Omega} \in \RR^{m \times r}$ is a \emph{matrix} formed by doing a similar subsampling operation to the \emph{rows} of any orthonormal basis for the subspace $U$, e.g. the $j$th row of $U_{\Omega}$ is the $i$th row of $U$ if $\Omega(j) = i$.
Note that $U_{\Omega}$, and even the span of the columns of $U_{\Omega}$, may not be uniquely defined, as they both depend on the choice of basis for $U$.
Nevertheless, we will use $\Pcal_{U_{\Omega}}$ to denote the projection onto the span of any single set of columns constructed by this subsampling operation.

In the matrix completion problem, where we aim for exact recovery, we require that $X$ has rank at most $r$, meaning that $\sigma_{r+1} = \ldots = \sigma_n = 0$. 
Thus $X = X_r$, and our goal is to recover $X_r$ exactly from a subset of entries.
Specifically, we focus on the $0/1$ loss; given an estimator $\hat{X}$ for $X$, we would like to bound the probability of error:
\begin{align}
\label{eq:exact_risk}
R_{01}(\hat{X}) \triangleq \PP\left( \hat{X} \ne X \right).
\end{align}

In the approximation problem, we relax the low rank assumption but are only interested in approximating the action of $X_r$.
The goal is to find a rank $r$ matrix $\hat{X}$ that minimizes:
\[
R(\hat{X}) = \|X - \hat{X}\|_F.
\]
The matrix $X_r$ is the global minimizer (subject to the rank-$r$ constraint), and our task is to approximate this low rank matrix effectively. 
Specifically, we will be interested in finding matrices $\hat{X}$ that satisfy excess risk bounds of the form:
\begin{align}
\label{eq:approx_risk}
R(\hat{X}) \triangleq \|X - \hat{X}\|_F \le \|X - X_r\|_F + \epsilon \|X\|_F
\end{align}
Rescaling the excess risk term by $\|X\|_F$ is a form of normalization that has been used before in the matrix approximation literature~\citep{frieze2004fast,drineas2006fastii,drineas2006fastiii,rudelson2007sampling}.
While bounds of the form $(1+\epsilon)\|X - X_r\|_F$ may seem more appropriate when the bottom ranks are viewed as noise term, achieving such a bound seems to require highly accurate approximations of the SVD of the input matrix~\citep{drineas2008relative}, which is not possible given the extremely limited number of observations in our setting.
Equation~\ref{eq:approx_risk} can be interpreted by dividing by $\|X\|_F$, which shows that $\hat{X}$ captures almost as large a fraction of the energy of $X$ as $X_r$ does.

Apart from the observation budget $M$ and the approximation rank $r$, the other main quantity governing the difficulty of these problems is the subspace coherence parameter.
For a $r$ dimensional subspace $U$ of $\RR^d$, define
\[
\mu(U) = \frac{d}{r} \max_{i \in [d]} \|\Pcal_U e_i\|_2^2,
\]
which is a standard measure of subspace coherence~\citep{recht2011simpler}).
The quantity $\mu_0 \triangleq \mu(U_r)$, which is bounded between $1$ and $d/r$, measures how correlated the principal column space of the matrix $X$ is with any single standard basis element.
When this maximal correlation is small, the energy of the matrix is spread out fairly uniformly across the rows of the matrix, although it can be non-uniformly distributed across the columns. 
Without loss of generality we use the matrix column-space coherence $\mu_0$ instead of the row-space analog, and we will see that the parameter $\mu_0$ controls the sample complexity of our adaptive procedure. 

In classical results on matrix completion, the parameter $\mu_0' \triangleq \max\{\mu(U_r), \mu(V_r)\}$ instead governs the sample complexity.
When $\mu_0'$ is small, both principal subspaces are incoherent, so that the energy of the matrix is uniformly spread across the entries. 
Informally, this means that a random sample of entries captures the salient features of the matrix, and, indeed, the number of uniform-at-random samples necessary and sufficient for exact recovery scales linearly with $\mu_0'$~\citep{recht2011simpler}.

Such an incoherence assumption does not translate appropriately to the approximate recovery problem, since the matrix is no longer low rank, but some measure of uniformity is still necessary. 
One one hand, the statistics literature typically assumes that the matrix $X$ can be decomposed into an incoherent low rank matrix and a stochastic perturbation~\citep{negahban2012restricted,koltchinskii2011nuclear}.
On the other hand, classical results on matrix approximation make no stochastic assumptions, but also do not need uniformity, as they do not consider the missing data setting~\citep{frieze2004fast,drineas2006fastii}. 
As we aim to bridge these two lines of research by considering matrix approximation with missing data, we remove the stochastic assumption.
We instead turn to an alternative assumption to ensure that the high ranks of the matrix are well-behaved under sampling. 


We parameterize the problem by a quantity related to the usual definition of incoherence:
\[
\mu = \max_{t \in [n]} \frac{d||x_t||_{\infty}^2}{||x||_2^2},
\]
which is the maximal column coherence. 
Here, we make no stochastic assumptions, but notice that this is a restriction on the higher ranks of the matrix.
We also make no assumptions about the row space of the matrix\footnote{As before this could equivalently be the column space with assumption on the maximal row coherence. Without loss of generality we parametrize columns and column spaces throughout this paper.}. 

\section{Related Work}
\label{sec:related}


The literature on low rank matrix approximation is extremely vast and we do not attempt to cover all of the existing ideas. 
Instead, we focus on the most relevant lines of work to our specific problems. 
We briefly mention some related work on adaptive sensing.

For matrix completion, a series of papers provide better and better analyses of the nuclear norm minimization procedure, finally showing that $M \asymp (n+d)r \mu_0' \log(n)$ uniform-at-random observations are sufficient to exactly recover a rank $r$ matrix with high probability~\citep{candes2010power, candes2009exact, gross2011recovering, recht2011simpler, chen2013incoherence}.
These results involve the parameter $\mu_0'$, implying that both matrix subspaces must be incoherent for strong guarantees.
Recently, two papers have relaxed the row-space incoherence assumption with adaptive sampling~\citep{krishnamurthy2013low,chen2014coherent}.
Our analysis leads to a better sample complexity than both of these results.
We defer a more detailed discussion to after Theorem~\ref{thm:exact_ub}.


A number of authors have studied matrix completion with noise and under weaker assumptions.
The most prominent difference between our work and all of these is a relaxation of the main incoherence assumptions. 
Both \citet{candes2010matrix}, and \citet{keshavan2010matrix} require that both the row and column space of the matrix of interest is highly incoherent.
\citet{negahban2012restricted} instead use a notion of \emph{spikiness}, but that too places assumptions on the row space of interest.
\citet{koltchinskii2011nuclear} consider matrices with bounded entries, which is related to the spikiness assumption.
In comparison, our results make essentially no assumptions about the row space, leading to substantially more generality.
This is the thesis of our work; one can eliminate row space assumptions in matrix recovery problems through adaptive sampling. 


Another close line of work is on matrix sparsification~\citep{achlioptas2007fast,arora2006fast,achlioptas2013near}.
Here, the goal is to zero out a large number of entries of a given matrix while preserving global properties such as the principal subspace.
The main difference from the matrix completion literature is that the entire matrix is observed, and this allows one to relax incoherence assumptions.
The only result from this literature that does not require knowledge of the matrix is a random sampling scheme of~\citet{achlioptas2007fast}, but it is only competitive with matrix completion results when the input matrix has entries of fairly constant magnitude~\cite{koltchinskii2011nuclear}.
Interestingly, this requirement is essentially the same as the spikiness assumption of \citet{negahban2012restricted} and the bounded magnitude assumption of \citet{koltchinskii2011nuclear}.

Several techniques have been proposed for matrix approximation in the fully observed setting, optimizing computational complexity or other objectives. 
A particularly relevant series of papers is on the column subset selection (CSS) problem, where the span of several judiciously chosen columns is used as to approximate the principal subspace. 
One of the best approaches involves sampling columns according to the statistical leverage scores, which are the norms of the rows of the $n \times r$ matrix formed by the top $r$ right singular vectors~\citep{boutsidis2009improved,boutsidis2011near,drineas2008relative}. 
Unfortunately, this strategy does not seem to apply in the missing data setting, as the distribution used to sample columns -- which are subsequently used to approximate the matrix -- depends on the unobserved input matrix.
Approximating this distribution seems to require a very accurate estimate of the matrix itself, and this initial estimate would suffice for the matrix approximation problem. 
This difficulty also arises with volume sampling~\citep{guruswami2012optimal}, another popular approach to CSS; the sampling distribution depends on the input matrix and we are not aware of strategies for approximating this distribution in the missing data setting.

In terms of adaptive sampling, a number of methods for recovery of sparse, possibly structured, signals have been shown to outperform passive methods~\citep{haupt2011distilled,malloy2011sequential,tanczos2013adaptive,balakrishnan2012recovering,krishnamurthy2013recovering}.
While having their share of differences, these methods can all be viewed as either binary search or local search methods, that iteratively discard irrelevant coordinates and focus measurements on the remainder. 
In particular, these methods rely heavily on the sparsity and structure of the input signal, and extensions to other settings have been elusive. 
While a low rank matrix is sparse in its eigenbasis, the search-style techniques from the signal processing community do not seem to leverage this structure effectively and these approaches do not appear to be applicable to our setting.

Some of these adaptive sampling efforts focus specifically on recovering or approximating highly structured matrices, which is closely related to our setting. 
\cite{tanczos2013adaptive} and~\cite{balakrishnan2012recovering} consider variants of biclustering, which is equivalent to recovering a rank-one binary matrix from noisy observations. 
\cite{singh2012completion} recover noisy ultrametric matrices while~\cite{krishnamurthy2012efficient} uses a similar idea to find hierarchical clustering from adaptively sampled similarities.
All of these results can be viewed as matrix completion or approximation, but impose significantly more structure on the target matrix than we do here.
For this reason, many of these algorithmic ideas also do not appear to be useful in our setting. 


\section{Results}
\label{sec:results}
In this section we develop the main theoretical contributions of this manuscript. 
We first turn to the matrix completion problem, where we improve the results of~\citet{krishnamurthy2013low} and show that $O(dr + nr \mu_0 \log^2r)$ samples suffice to recover a rank $r$ matrix whose column space has coherence bounded by $\mu_0$. 
We complement this result with some necessary conditions on passive and adaptive matrix completion algorithms.
Then, we turn to the low rank approximation problem, where we describe a simple algorithm and show that it achieve the excess risk bound in Equation~\ref{eq:approx_risk} with $O(nr\mu\log^2(n)/\epsilon^4)$ samples. 
We also provide a detailed comparison of this result with prior work.

\subsection{Matrix Completion} 
\label{sec:exact}

\begin{algorithm}[t]
\caption{Adaptive Matrix Completion $(X \in \RR^{d \times n}, m)$}
\begin{packed_enum}
\item{} Let $U = \emptyset$.
\item{} Randomly draw entries $\Omega \subset [d]$ of size $m$ uniformly with replacement.
\item{} For each column $x_t$ of $X$ ($t \in [N]$):
\begin{packed_enum}
\item{} If $||x_{t\Omega} - \Pcal_{U_{\Omega}} x_{t\Omega}||_2^2 > 0$:
\begin{packed_enum}
\item{} Fully observe $x_t$ and add to $U$ (orthogonalize $U$). 
\item{} Randomly draw a new set $\Omega$ of size $m$ uniformly with replacement. 
\end{packed_enum}
\item{} Otherwise $\hat{x}_t \gets U(U_{\Omega}^TU_{\Omega})^{-1}U_{\Omega} x_{t\Omega}$.
\end{packed_enum}
\item{} Return $\hat{X}$ with columns $\hat{x}_t$.
\end{packed_enum}
\label{alg:exact_mc}
\end{algorithm}

Our algorithm for the matrix completion problem is identical to the algorithm of \citet{krishnamurthy2013low}.
The procedure, whose pseudocode is displayed in Algorithm~\ref{alg:exact_mc}, streams the columns of the matrix $X$ into memory and iteratively adds directions to an estimate for the column space of $X$.
The algorithm maintains a subspace $U$ and, when processing the $t$th column $x_t$, estimates the norm of $\Pcal_{U^\perp} x_t$ using only a few entries of $x_t$.
We will ensure that, with high probability, this estimate will be non-zero if and only if $x_t$ contains a new direction.
If the estimate is non-zero, the algorithm asks for the remaining entries of $x_t$ and adds the new direction to the subspace $U$.
Otherwise, $x_t$ lies in $U$ and we will see that the algorithm already has sufficient information to complete the column $x_t$.

Therefore, the key ingredient of the algorithm is the estimator for the projection onto the orthogonal complement of the subspace $U$. 
This quantity is estimated as follows.
Using a list of $m$ locations $\Omega$ from $[d]$, we downsample both $x_t$ and an orthonormal basis $U$ to $x_{t\Omega}$ and $U_{\Omega}$. 
We then use $\| x_{t \Omega} - \Pcal_{U_\Omega} x_{t \Omega}\|^2$ as our estimate. 
It is easy to see that this estimator leads to a test with one-sided error, since the estimator is identically zero if $x_t \in U$.
In our analysis, we establish a relative-error deviation bound, which allows us to control the error of our test for energy outside of $U$.

A subtle but critical aspect of the algorithm is the choice of $\Omega$.
The list $\Omega$ always has $m$ elements, and each element is sampled uniformly with replacement from $[d]$.
More importantly, we only resample $\Omega$ when we add a direction to $U$.
This ensures that the algorithm does not employ too much randomness, which would lead to an undesireable logarithmic dependence on $n$. 

The analysis of this test statistic and the reconstruction procedure leads to the following guarantee on the performance of the algorithm, whose proof is deferred to Section~\ref{sec:proofs}.
\begin{theorem}
\label{thm:exact_ub}
Let $X \in \RR^{d \times n}$ be a matrix of rank $r$ whose column space $U$ has coherence $\mu(U) \le \mu_0$. 
Then the output of Algorithm~\ref{alg:exact_mc} has risk:
\begin{align}
R_{01}(\hat{X}) \le 10r^2 \exp\left\{-\sqrt{\frac{m}{32r \mu_0}}\right\}
\end{align}
provided that $m \ge 4r \mu_0\log(2r/\delta)$.
Equivalently, whenever $m \ge 32r \mu_0 \log^2(10r^2/\delta)$, we have $R_{01}(\hat{X}) \le \delta$.
The sample complexity is $dr + nm$ and the running time is $O(nmr + r^3m + dr^2)$.
\end{theorem}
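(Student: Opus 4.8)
The plan is to separate the analysis into a deterministic core and a probabilistic core. Deterministically, two facts hold regardless of the sampling. First, the test has one-sided error: if $x_t \in \text{span}(U)$ then $x_t = Uc$ for some coefficient vector $c$, so $x_{t\Omega} = U_\Omega c$ lies in the column span of $U_\Omega$ and the statistic $\|x_{t\Omega} - \Pcal_{U_\Omega} x_{t\Omega}\|_2^2$ vanishes identically. Second, whenever $x_t \in \text{span}(U)$ and $U_\Omega$ has full column rank, the reconstruction is exact: $\hat{x}_t = U(U_\Omega^T U_\Omega)^{-1}U_\Omega^T x_{t\Omega} = U (U_\Omega^T U_\Omega)^{-1}(U_\Omega^T U_\Omega) c = Uc = x_t$. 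Thus the algorithm can only err in two ways: it misses a genuinely new direction (the statistic is zero even though $\Pcal_{U^\perp} x_t \ne 0$), or $U_\Omega^T U_\Omega$ fails to be invertible during a reconstruction.

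The crucial observation for the $\log^2 r$ rather than $\log n$ rate is that only $O(r)$ random events must be controlled, not $O(n)$. Because $\Omega$ is resampled \emph{only} when a direction is added, the run decomposes into at most $r$ batches, one per added direction, each using a single fresh sample $\Omega_b$ drawn independently of the past. Within a batch every column tested lies in the current subspace $U_{b-1}$ except the last, so by the one-sided property all but one test is automatically correct; detection is only at issue for the single ``critical'' column that introduces the new direction. Moreover, conditioned on all earlier batches executing correctly, both $U_{b-1}$ and the identity of the critical column are deterministic functions of $X$, so $\Omega_b$ is independent of them, which is precisely the setup needed to apply concentration.

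The technical heart is therefore a relative-error deviation bound: for a fixed subspace $U'$ and a fixed vector $y$ independent of $\Omega$, provided $m \ge 4 r \mu_0 \log(2r/\delta)$, with high probability (i) $U'_\Omega$ has full column rank with $(U'_\Omega)^T U'_\Omega$ well conditioned, and (ii) $\|y_\Omega - \Pcal_{U'_\Omega} y_\Omega\|_2^2 \ge c_0 \frac{m}{d}\|y - \Pcal_{U'} y\|_2^2$. I would prove this by using matrix Bernstein to show $(U'_\Omega)^T U'_\Omega \approx \frac{m}{d} I$ (giving (i)), together with scalar Bernstein/Chernoff bounds on the sampled residual and the cross terms to get (ii). Coherence enters because the relevant subspaces are sub-subspaces of the rank-$r$ column space $U$; since $\|\Pcal_{U'} e_i\|_2^2 \le \|\Pcal_U e_i\|_2^2 \le r\mu_0/d$, the product $(\dim U')\,\mu(U')$ is uniformly bounded by $r\mu_0$, the quantity appearing throughout. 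Tuning the deviation parameters to keep the lower bound in (ii) strictly positive yields a per-batch failure probability of order $r \exp\{-\sqrt{m/(32 r\mu_0)}\}$; the square root is exactly what converts the $\log$ tail into the $\log^2$ sample complexity.

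Finally, I would assemble these by induction on the batch index: conditioned on batches $1,\dots,b-1$ succeeding, the lemma applied to $(U_{b-1}, x_{t_b})$ with the fresh sample $\Omega_b$ shows batch $b$ succeeds (critical column detected, in-subspace columns reconstructed exactly) except with the per-batch probability above, and a union bound over the at most $r$ batches gives the stated $10 r^2 \exp\{-\sqrt{m/(32 r\mu_0)}\}$; solving for $m$ yields the equivalent $m \ge 32 r\mu_0 \log^2(10r^2/\delta)$ form. The sample complexity $dr + nm$ follows by counting the at most $r$ fully observed columns and the $m$ test entries per column, and the running time from the $n$ projections ($O(nmr)$), the $r$ inversions ($O(r^3 m)$), and orthogonalizing the observed columns ($O(dr^2)$). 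I expect the deviation bound of the third paragraph to be the main obstacle, both in securing a strictly positive relative-error lower bound and in extracting the square-root tail with the correct $r\mu_0$ dependence; the bookkeeping that reduces everything to $O(r)$ independent events is conceptually delicate but becomes routine once the batch decomposition is in place.
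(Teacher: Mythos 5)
Your proposal is correct and follows essentially the same route as the paper: the deterministic one-sided error and exact-reconstruction facts, the observation that resampling $\Omega$ only upon adding a direction reduces the analysis to $O(r)$ independent events, a relative-error deviation bound for $\|y_\Omega - \Pcal_{U'_\Omega}y_\Omega\|_2^2$ proved via Bernstein-type inequalities with the sub-subspace coherence controlled by $r\mu_0$, and a union bound over the at most $r$ critical columns. The one technical detail to be careful about is the cross term $\|U_\Omega^T v_\Omega\|_2$, which the paper controls with a \emph{vector} Bernstein inequality to get the $(1+2\sqrt{\log(1/\delta)})^2$ dependence that yields the square-root tail; a coordinatewise scalar bound with a union bound would be lossier.
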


To the best of our knowledge, this result provides the strongest guarantee for the matrix completion problem. 
The vast majority of results require both incoherent row and column spaces and are therefore considerably more restrictive than ours~\citep{candes2010power,candes2009exact,gross2011recovering,recht2011simpler,chen2013incoherence}. 
For example, Recht shows that by solving the nuclear norm minimization program, one can recover $X$ exactly, provided that the number of measurements exceeds $32(d+n)r \max\{\mu_0', \mu_1^2\}\log^2(n)$ where recall that $\mu_0'$ upper bounds the coherence of both the row and column space, and $\mu_1$ provides another incoherence-type assumption (which can be removed~\citep{chen2013incoherence}).  
Our result improves on his not only in relaxing the row space incoherence assumption, but also in terms of sample complexity, as we remove the logarithmic dependence on problem dimension. 

As another example, \citet{gittens2011spectral} showed that Nystrom method can recover a rank $r$ matrix from randomly sampling $O(r \log r)$ columns. 
While his result matches ours in terms of sample complexity, he analyzes positive-semidefinite matrices with incoherent principal subspace, which translates to assuming that both row and column spaces are incoherent. 
Again, in relaxing this assumption, our result is substantially more general.

We mention two papers that allow coherent row spaces.
The first is the paper of \citet{krishnamurthy2013low}, that gives a weaker analysis of Algorithm~\ref{alg:exact_mc} resulting in a polynomially worse dependence on $r$. 
The other is the two-phase algorithm of \citet{chen2014coherent} based on local coherence sampling. 
Their algorithm requires $O((n+d)r \mu_0 \log(n))$ samples which is weaker than our guarantee in that it has a slightly super-linear dependence on problem dimension.
An interesting consequence of Theorem~\ref{thm:exact_ub} is that the amortized number of samples per column is \emph{completely independent} of the problem dimension.

Regarding computational considerations, the algorithm operates in one pass over the columns, and need only store the matrix in condensed form, which requires $O((n+d)r)$ space. 
Specifically, the algorithm maintains a (partial) basis for column space and the coefficients for representing each column by that basis, which leads to an optimally condensed representation.
Moreover, the computational complexity of the algorithm is \emph{linear} in the matrix dimensions $d,n$ with mild polynomial dependence on the rank $r$. 
For this run-time analysis, we work in a computational model where accessing any entry of the matrix is a constant-time operation, which allows us to circumvent the $\Omega(dn)$ time it would otherwise take to read the input. 
In comparison, the two standard algorithms for matrix completion, the iterative Singular Value Thresholding Algorithm~\citep{cai2010singular} and alternating least-squares~\citep{jain2013low,hardt2013understanding}, are significantly slower than Algorithm~\ref{alg:exact_mc}, not only due to their iterative nature, but also in per-iteration running time.

\subsection{Necessary Conditions for Matrix Completion}
We now establish a lower bound on any passive sampling algorithm for the matrix completion problem.
Our lower bound shows that if the matrix has coherent row space, then any passive sampling scheme followed by any recovery algorithm requires $\sim dn$ samples. 

To formalize our lower bound we fix a sampling budget $M$ and consider an estimator to be a sampling distribution $q$ over $\{(i,j) | i \in [d], j \in [n]\}^M$ and a (possibly randomized) function $f : \{(\Omega, X_{\Omega})\} \rightarrow \RR^{d \times n}$ that maps a set of indices and values to a $d \times n$ matrix. 
Let $\Qcal$ denote the set of all such sampling distributions and let $\Fcal$ denote the set of all such estimators.
Lastly let $\Xcal$ denote the set of all $d \times n$ rank $r$ matrices with column incoherence at most $\mu_0$. 
We consider the minimax probability of error:
\begin{align*}
R^\star = \inf_{f \in \Fcal} \inf_{q \in \Qcal} \sup_{X \in \Xcal} \PP_{\Omega \sim q}\left[ f(\Omega, X_{\Omega} \ne X\right]
\end{align*}
where the probability also accounts for potential randomness in the estimator $f$. 
Note that since we make no assumptions about the distribution $q$ other than excluding adaptive distributions, this setup subsumes essentially all passive sampling strategies including uniform-at-random, deterministic, and distributions sampling entire columns.
The one exception is the bernoulli sampling model, where each entry $(i,j)$ is observed with probability $q_{ij}$ independently of all other entries, although we believe a similar lower bound holds there. 

The following theorem lower bounds success probability of any passive strategy and consequently gives a necessary condition on the sample complexity.
\begin{theorem}
\label{thm:exact_lb}
The minimax risk $R^\star$ satisfies:
\begin{align}
R^\star \ge \frac{1}{2} - \left\lceil\frac{m}{(1- \frac{r-1}{r \mu_0})d} \right\rceil\frac{1}{2(n-r)},
\end{align}
which approaches $1/2$ whenever:
\begin{align}
m = o\left((dn - dr)(1 + \frac{1}{r \mu_0} - \frac{1}{\mu_0})\right).
\end{align}
\end{theorem}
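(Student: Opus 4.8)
The plan is to lower bound the minimax risk $R^\star$ by a Bayes risk: I will put a prior on $\Xcal$ under which exact recovery forces the estimator to resolve a single, uniformly random ``sensitive'' column, show that this column cannot be resolved unless at least $s := \big(1-\frac{r-1}{r\mu_0}\big)d$ of its entries are observed, and then count how many columns can possibly receive that many samples. Write $p_{\max} := \max_i\|\Pcal_U e_i\|_2^2 = r\mu_0/d$, so that $s = d - (r-1)/p_{\max}$. Since the target bound already has the shape $\tfrac12\big(1-\lceil m/s\rceil/(n-r)\big)$, this decomposition (one random sensitive column $\times$ a per-column threshold $s$) is exactly what the arithmetic dictates, and the whole argument is a clean generalization of the $r=1$ instance where a single nonzero column equal to a random sign vector must be observed in full.

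\emph{Construction.} I would fix a coordinate set $B\subseteq[d]$ with $|B|=(r-1)/p_{\max}$ and an $(r-1)$-dimensional subspace $U_1\subseteq\mathrm{span}\{e_i:i\in B\}$ all of whose leverage scores equal $p_{\max}$; this is possible because the average leverage over $B$ is exactly $(r-1)/|B|=p_{\max}$, so an equal-leverage subspace (e.g.\ a Hadamard/Fourier submatrix) realizes it. Columns $1,\dots,r$ are frozen to a fixed, publicly known configuration spanning $U_1$. I then draw a sensitive index $T$ uniformly from $\{r+1,\dots,n\}$ and a vector $v\in\{\pm1\}^{[d]\setminus B}$ with i.i.d.\ uniform entries; column $T$ is set to $v$ (and to $0$ on $B$), while every other column in the pool $\{r+1,\dots,n\}$ is $0$. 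Each realized matrix then has column space $U_1\oplus\mathrm{span}(v)$, hence rank exactly $r$, with leverage $p_{\max}$ on $B$ and $1/s$ off $B$; since $p_{\max}s = r\mu_0-(r-1)\ge1$ we have $1/s\le p_{\max}$, so $\mu(U)=\frac{d}{r}p_{\max}=\mu_0$ and $X\in\Xcal$. Checking this rank/coherence bookkeeping for \emph{every} sign pattern $v$ is the one genuinely technical step.

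\emph{Indistinguishability and counting.} The crucial structural fact is that $v$ appears only in column $T$: the other pool columns vanish and the frozen columns live on $B$, so no information about $v$ leaks elsewhere. Conditioned on $\{|\Omega_T|<s\}$, at least one coordinate $i\in[d]\setminus B$ of column $T$ is unobserved, and $v(i)$ is a fair coin independent of everything the estimator sees; hence any output disagrees with $X$ at $(i,T)$ with probability $\ge\tfrac12$, giving conditional error $\ge\tfrac12$. To bound $\PP(|\Omega_T|\ge s)$ I use that $q$ is non-adaptive, so $\Omega$ is independent of $T$, together with the deterministic fact that at most $m/s\le\lceil m/s\rceil$ columns can receive $\ge s$ samples (such columns already consume $s$ times their count $\le m$ observations). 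Averaging over the uniform choice of $T$ among the $n-r$ pool columns then yields $\PP(|\Omega_T|\ge s)\le \lceil m/s\rceil/(n-r)$.

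\emph{Putting it together.} Combining the conditional-error bound with the counting bound,
\[
R^\star \;\ge\; \tfrac12\,\PP\!\left(|\Omega_T| < s\right) \;\ge\; \tfrac12\Big(1 - \tfrac{\lceil m/s\rceil}{n-r}\Big) \;=\; \tfrac12 - \Big\lceil \tfrac{m}{(1-\frac{r-1}{r\mu_0})d}\Big\rceil \tfrac{1}{2(n-r)},
\]
which is the claimed inequality; rewriting $1-\frac{r-1}{r\mu_0}=1+\frac{1}{r\mu_0}-\frac{1}{\mu_0}$ and $s(n-r)=(dn-dr)\big(1-\frac{r-1}{r\mu_0}\big)$ shows it tends to $1/2$ exactly under the stated condition $m=o\big((dn-dr)(1+\frac{1}{r\mu_0}-\frac{1}{\mu_0})\big)$. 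I expect the main obstacle to be the coherence accounting in the construction rather than the probabilistic step: one must guarantee an equal-leverage subspace $U_1$ on $B$ of the prescribed size, confirm that gluing it to each sign pattern $v$ keeps $\mu(U)\le\mu_0$, and ensure that precisely $s$ off-$B$ coordinates stay genuinely free (so the threshold is neither larger nor smaller than $s$).
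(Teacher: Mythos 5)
Your proposal is correct and follows the same architecture as the paper's proof: lower bound the minimax risk by a Bayes risk over a family in which the first $r-1$ singular directions occupy a reserved coordinate block with leverage exactly $r\mu_0/d$, the last direction is hidden in one of $n-r$ candidate columns, resolving it requires observing all $s=(1-\frac{r-1}{r\mu_0})d$ free coordinates of that column, and a budget of $m$ passive samples can cover at most $\lceil m/s\rceil$ such columns. Where you genuinely differ is in the hard instance and the indistinguishability step. The paper hides a \emph{sparse} $\pm$-sign vector supported on an unknown $l=d/(r\mu_0)$-subset of the free coordinates, enumerates the $2^l\binom{dc_1}{l}(n-r)$ resulting matrices, bounds the fraction uniquely identified by a given $\Omega$, and invokes Yao's principle plus a convex-maximization argument to reduce randomized $q$ to full-column deterministic sampling. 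You instead use a \emph{dense} i.i.d.\ sign vector on all $s$ free coordinates, so any unobserved free entry of the sensitive column is a fair coin independent of everything observed and the conditional error is $\ge 1/2$ outright; and you handle randomized $q$ directly by conditioning on $\Omega$ and using its independence from $T$. Your route buys a cleaner probabilistic argument and sidesteps the paper's counting of $|\Ucal_\Omega|$ (whose write-up in fact contains a $d$-versus-$n$ slip relative to the theorem statement, which your derivation does not reproduce). The one step you flag as delicate---existence of an equal-leverage $(r-1)$-dimensional subspace on $B$---is handled in the paper without any Hadamard/Fourier machinery: take $u_1,\ldots,u_{r-1}$ constant on disjoint blocks of size $d/(r\mu_0)$, which you can adopt verbatim; with that substitution your coherence bookkeeping ($\max$ leverage $=r\mu_0/d$ since $1/s\le r\mu_0/d$ whenever $\mu_0\ge 1$) goes through and the proof is complete.
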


As a concrete instantiation of the theorem, if $\mu_0$ is bounded from below by any constant $c > 1$ (which is possible whenever $r \le d/c$), then the bound approaches $1/2$ whenever $m = o(d(n-r))$. 
Thus all passive algorithms must have sample complexity that is quadratic in the problem dimension.
In contrast, Theorem~\ref{thm:exact_ub} ensures that Algorithm~\ref{alg:exact_mc} has nearly linear sample complexity, which is a significant improvement over passive algorithms.

The literature contains several other necessary conditions on the sample complexity for matrix completion. 
A fairly simple argument shows that without any form of incoherence, one requires $\Omega(dn)$ samples to recover even a rank one matrix that is non-zero in just one entry.
This argument applies to both passive and adaptive sampling strategies and shows that some measure of incoherence is necessary.
With both row and column incoherence, but still under uniform sampling, \citet{candes2010power} prove that $\Omega(\mu_0' n r \log(n))$ observations are necessary to recover a $n \times n$ matrix. 

One can relax the incoherence assumption by non-uniform passive sampling, although the sampling distribution is matrix-specific as it depends on the local coherence structure~\citep{chen2014coherent}.
Unfortunately, one cannot compute the appropriate sampling distribution, before taking any measurements.
Our result shows that in the absence of row-space incoherence, there is no universal passive sampling scheme that can achieve a non-trivial sample complexity. 
Thus adaptive sampling is necessary to relax the incoherence assumption while retaining near-optimal sample complexity.

Finally, a parameter counting argument shows that even adaptive sampling requires $\Omega((d+n)r)$ samples. 
Each entry of a rank $r$ matrix can be expressed as a polynomial of the left and right singular vectors and the singular values, so the observations lead to a polynomial system in $(d+n)r+r$ variables. 
If $M < (d+n)r - r^2$ (there are $r(r+1)$ orthonormality constraints), then this system is underdetermined, and since it has one solution, it must have infinitely many, so that recovery is impossible. 
Consequently, our algorithm is nearly optimal, and significantly outperforms any passive sampling strategy. 

\subsection{Low Rank Approximation}
\label{sec:approx}
\begin{algorithm}[t]
\caption{Low Rank Approximation $(X, m_1, m_2)$}
\begin{packed_enum}
\item Pass 1: For each column, observe $\Omega_t$ of size $m_1$ uniformly at random with replacement and estimate $\hat{c}_t = \frac{d}{m_1}||x_{t,\Omega_t}||_2^2$. Estimate $\hat{f} = \sum_t \hat{c}_t$. 
\item Pass 2: Set $\tilde{X} = 0 \in \RR^{d \times n}$. 
\begin{packed_enum}
\item For each column $x_t$, sample $m_{2,t} = m_2 n \hat{c}_2/\hat{f}$ observations $\Omega_{2,t}$ uniformly at random with replacement.
\item Update $\tilde{X} = \tilde{X} + (\Rcal_{\Omega_{2,t}}x_t)e_t^T$. 
\end{packed_enum}
\item Compute the SVD of $\tilde{X}$ and output $\hat{X}$ which is formed by the top-$r$ ranks of $\tilde{X}$. 
\end{packed_enum}
\label{alg:approx}
\end{algorithm}

For the matrix approximation problem, we propose an adaptive sampling algorithm to obtain a low-rank approximation to $X$. 
The algorithm (see Algorithm~\ref{alg:approx} for pseudocode) makes two passes through the columns of the matrix. 
In the first pass, it subsamples each column uniformly at random and estimates each column norm and the matrix Frobenius norm. 
In the second pass, the algorithm samples additional observations from each column, and for each $t$, places the rescaled zero-filled vector $\Rcal_{\Omega_{2,t}}x_t$ into the $t$th column of a new matrix $\tilde{X}$, which is a preliminary estimate of the input, $X$.
Once the initial estimate $\tilde{X}$ is computed, the algorithm zeros out all but the top $r$ ranks of $\tilde{X}$ to form $\hat{X}$.
We will show that $\hat{X}$ has low excess risk, when compared with the best rank-$r$ approximation, $X_r$. 

A crucial feature of the second pass is that the number of samples per column is proportional to the squared norm of that column.
Of course this sampling strategy is only possible if the column norms are known, motivating the first pass of the algorithm, where we estimate precisely this sampling distribution. 
This feature allows the algorithm to tolerate highly non-uniform column norms, as it focuses measurements on high-energy columns, and leads to significantly better approximation.
This idea has been used before, although only in the exactly low-rank case~\citep{chen2014coherent}.


For the main performance guarantee, we only assume that the matrix has incoherent columns, that is $ d \|x_t\|_{\infty}^2/\|x_t\|_2^2 \le \mu$ for each column $x_t$. 
In particular we make no additional assumptions about the high-rank structure of the matrix.
We have the following theorem:
\begin{theorem}
\label{thm:approx_adv}
Set $m_1 \ge 32\mu \log(n/\delta)$ and assume $n \ge d$ and that $X$ has $\mu$-incoherent columns. 
With probability $\ge 1 - 2\delta$, Algorithm~\ref{alg:approx} computes an approximation $\hat{X}$ such that:
\begin{align*}
\|X - \hat{X}\|_F \le \|X - X_r\|_F + \|X\|_F\left(6\sqrt{\frac{r \mu}{m_2}}\log\left(\frac{d+n}{\delta}\right) + \left(6\sqrt{\frac{r \mu}{m_2}}\log\left(\frac{d+n}{\delta}\right)\right)^{1/2}\right)
\end{align*}
using $n(m_1+m_2)$ samples. 
In other words, the output $\hat{X}$ satisfies $\|X - \hat{X}\|_F \le \|X - X_r\|_F + \epsilon \|X\|_F$ with probability $\ge 1-2\delta$ and with sample complexity:
\begin{align}
32n\mu\log(n/\delta) + \frac{576}{\epsilon^4} n r \mu \log^2\left(\frac{d+n}{\delta}\right).
\label{eq:sample_comp}
\end{align}
\end{theorem}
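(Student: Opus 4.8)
The plan is to treat $\tilde X$ as a noisy but (conditionally) unbiased observation of $X$ and then invoke a deterministic perturbation argument to pass from a spectral-norm bound on the noise to a Frobenius-norm bound on the excess risk of the rank-$r$ truncation. Concretely, write $E = \tilde X - X$ and note that, conditioned on the first pass, $\tilde X = \sum_t (\Rcal_{\Omega_{2,t}} x_t) e_t^T$ is a sum of random matrices that are independent across columns and whose conditional expectation is $X$, so $\mathbb{E}[E \mid \text{pass 1}] = 0$.

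First I would analyze Pass~1. Since $X$ has $\mu$-incoherent columns, each coordinate satisfies $x_t(i)^2 \le \mu\|x_t\|_2^2/d$, so $\hat c_t = (d/m_1)\|x_{t,\Omega_t}\|_2^2$ is an average of $m_1$ bounded i.i.d.\ terms with mean $\|x_t\|_2^2$. A scalar Bernstein/Chernoff bound together with the assumption $m_1 \ge 32\mu\log(n/\delta)$ and a union bound over the $n$ columns shows that, with probability $\ge 1-\delta$, every $\hat c_t$ is within a constant factor of $\|x_t\|_2^2$ and hence $\hat f$ is within a constant factor of $\|X\|_F^2$. This guarantees the Pass~2 allocation $m_{2,t} = m_2 n\hat c_t/\hat f$ is, up to constants, $m_2 n\|x_t\|_2^2/\|X\|_F^2$, which is exactly the importance-sampling budget that balances the per-column variance.

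Next, conditioning on this good event, I would bound $\|E\|_2$ by matrix Bernstein applied to $E = \sum_t v_t e_t^T$ with $v_t = \Rcal_{\Omega_{2,t}}x_t - x_t$. The two variance proxies are $\|\sum_t \mathbb{E}[v_t v_t^T]\|_2$ and $\max_t \mathbb{E}\|v_t\|_2^2$. The key computation is that $\mathbb{E}\|v_t\|_2^2 \approx (d/m_{2,t})\|x_t\|_2^2 \approx d\|X\|_F^2/(m_2 n)$ is essentially the same for every column --- this is precisely what the norm-proportional sampling buys --- and that the diagonal of $\sum_t \mathbb{E}[v_tv_t^T]$, after substituting the incoherence bound $x_t(i)^2/\|x_t\|_2^2 \le \mu/d$, is at most $\mu\|X\|_F^2/m_2$. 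Bounding the per-summand spectral norm $\|v_t\|_2 \lesssim \|X\|_F\sqrt{\mu/m_2}$ similarly, matrix Bernstein yields
\[
\|\tilde X - X\|_2 \lesssim \|X\|_F\sqrt{\tfrac{\mu}{m_2}}\log\!\left(\tfrac{d+n}{\delta}\right)
\]
with probability $\ge 1-\delta$. I expect this step to be the main obstacle, both because of the bookkeeping in handling with-replacement sampling (so that $\tilde X$ is only conditionally unbiased and the $m_{2,t}$ are random and integer-valued) and because getting the coherence parameter $\mu$ rather than the ambient dimension $d$ into the variance requires combining the incoherence assumption with the norm-proportional budget in exactly the right way.

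Finally I would convert this into the Frobenius excess-risk bound by a deterministic argument. Writing $\hat X = P\tilde X$ where $P$ is the projection onto the top-$r$ left singular subspace of $\tilde X$, the triangle inequality gives $\|X-\hat X\|_F \le \|(I-P)X\|_F + \|PE\|_F$, and $\|PE\|_F \le \sqrt r\|E\|_2$ since $P$ has rank $r$. To control $\|(I-P)X\|_F$ I would use the optimality of $P$ for $\tilde X$ (i.e.\ $\|P\tilde X\|_F^2 \ge \|P_r\tilde X\|_F^2$, where $P_r$ projects onto the top-$r$ left singular subspace of $X$), expand both sides in $X$ and $E$, and bound the cross terms by $\|PX\|_F\|PE\|_F \le \sqrt r\|E\|_2\|X\|_F$; this produces the squared bound $\|(I-P)X\|_F^2 \le \|X-X_r\|_F^2 + O(\sqrt r\|E\|_2\|X\|_F + r\|E\|_2^2)$. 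Taking square roots and using $\sqrt{a+b+c}\le\sqrt a+\sqrt b+\sqrt c$ splits the excess into a term linear in $\|E\|_2$ and a square-root cross term, which become $6\sqrt{r\mu/m_2}\log(\frac{d+n}{\delta})$ and its square root after substituting the spectral bound, matching the statement. The sample-complexity claim then follows by noting $\sum_t m_{2,t} = m_2 n$, so the total budget is $n(m_1+m_2)$, and solving for the $m_2$ that makes the dominant (square-root) excess term at most $\epsilon$, which gives the $\epsilon^{-4}$ dependence and the constant $576$.
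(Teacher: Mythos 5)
Your proposal is correct and follows essentially the same route as the paper: a scalar Bernstein bound for the Pass 1 norm estimates, a rectangular matrix Bernstein bound on $\|\tilde{X}-X\|_2$ exploiting the norm-proportional budget $m_{2,t}$, and a deterministic translation to a Frobenius excess-risk bound for the rank-$r$ truncation. The only difference is that where you re-derive the perturbation step from scratch (via optimality of the top-$r$ projection of $\tilde{X}$ and bounding the cross terms), the paper simply invokes the corresponding lemma of Achlioptas and McSherry, which yields the same $\sqrt{r}\|E\|_2 + O\bigl((\sqrt{r}\|E\|_2\|X\|_F)^{1/2}\bigr)$ excess and hence the same $\epsilon^{-4}$ sample complexity.
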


The proof is deferred to Section~\ref{sec:proofs}.
The theorem shows that the matrix $\hat{X}$ serves as nearly as good an approximation to $X$ as $X_r$.
Specifically, with $O(nr\mu\log^2(d+n))$ observations, one can compute a suitable approximation to $X$. 
The running time of the algorithm is dominated by the cost of computing the truncated SVD, which is at most $O(d^2n)$.

While the dependence between the number of samples and the problem parameters $n,r$, and $\mu$ is quite mild and matches existing matrix completion results, the dependence on the error $\epsilon$ in Equation~\ref{eq:sample_comp} seems undesirable.
This dependence arises from our translation of a bound on $\|\tilde{X} - X\|_2$ into a bound on $\|\hat{X} - X\|_F$, which results in the $m_2^{-1/4}$-dependence in the error bound.
We are not aware of better results in the general setting, but a number of tighter translations are possible under various assumptions, and these can result in better guarantees. 
We mention just two such results here. 

\begin{proposition}
\label{cor:approx_exact_cor}
Under the same assumptions as Theorem~\ref{thm:approx_adv}, suppose further that $X$ has rank at most $r$. 
Then with probability $\ge 1- 2\delta$:
\begin{align*}
\|X - \hat{X}\|_F \le 20 \|X\|_F \sqrt{\frac{r\mu}{m_2}}\log\left(\frac{d+n}{\delta}\right)
\end{align*}
\end{proposition}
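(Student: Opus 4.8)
The plan is to reuse the spectral-norm control already established inside the proof of Theorem~\ref{thm:approx_adv} and to exploit the exact rank-$r$ structure to obtain a far tighter passage into Frobenius norm than is possible in general. Specifically, that proof bounds the deviation of the preliminary (unbiased) estimate $\tilde{X}$ from $X$ in spectral norm via a matrix Bernstein argument, producing an event of probability at least $1-2\delta$ on which
\begin{align*}
\|\tilde{X} - X\|_2 \le C\|X\|_F\sqrt{\frac{\mu}{m_2}}\log\left(\frac{d+n}{\delta}\right)
\end{align*}
for an absolute constant $C$. In the general high-rank setting, converting this spectral estimate into a Frobenius guarantee on the truncated output $\hat{X}$ is what forces the lossy $m_2^{-1/4}$ term in Theorem~\ref{thm:approx_adv}; the entire content of the proposition is that the exact rank-$r$ hypothesis removes this loss.

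First I would condition on the event above. Since $X$ has rank at most $r$ we have $X = X_r$, and $\hat{X}$ is by construction the best rank-$r$ approximation of $\tilde{X}$, which by the Eckart--Young--Mirsky theorem is optimal in spectral norm as well. Because $X$ is itself a rank-$r$ competitor in that same minimization, this gives $\|\tilde{X} - \hat{X}\|_2 \le \|\tilde{X} - X\|_2$. A triangle inequality then yields
\begin{align*}
\|X - \hat{X}\|_2 \le \|X - \tilde{X}\|_2 + \|\tilde{X} - \hat{X}\|_2 \le 2\|\tilde{X} - X\|_2.
\end{align*}

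The final step is the spectral-to-Frobenius conversion, which is where the rank-$r$ assumption pays off. Since $X$ and $\hat{X}$ both have rank at most $r$, their difference $X - \hat{X}$ has rank at most $2r$, so $\|X - \hat{X}\|_F \le \sqrt{2r}\,\|X - \hat{X}\|_2$. Chaining the previous two displays gives $\|X - \hat{X}\|_F \le 2\sqrt{2r}\,\|\tilde{X} - X\|_2 \le 2\sqrt{2}\,C\,\|X\|_F\sqrt{r\mu/m_2}\log((d+n)/\delta)$, and it remains only to verify that $2\sqrt{2}\,C \le 20$ for the constant $C$ emerging from the Bernstein step. I do not expect a genuine obstacle here: all of the analytic difficulty lives in the spectral deviation bound, which is inherited wholesale from Theorem~\ref{thm:approx_adv}, and everything afterward is elementary perturbation theory. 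The only points demanding care are matching the failure probability (the same $1-2\delta$ event suffices) and bookkeeping the absolute constants so that they collapse into the stated factor of $20$.
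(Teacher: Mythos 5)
Your argument is correct and is essentially the paper's own proof: the paper likewise conditions on the spectral bound $\|\tilde{X}-X\|_2 \le \tfrac{10}{\sqrt{3}}\|X\|_F\sqrt{\mu/m_2}\log((d+n)/\delta)$, obtains $\|X-\hat{X}\|_2 \le 2\|\tilde{X}-X\|_2$ (via the first Achlioptas--McSherry inequality, which is exactly your Eckart--Young-plus-triangle argument specialized to $X=X_r$), and then uses $\mathrm{rank}(X-\hat{X})\le 2r$ to pass to Frobenius norm with the factor $\sqrt{2r}$. The constant bookkeeping works out as you anticipate, since $2\sqrt{2}\cdot\tfrac{10}{\sqrt{3}} \approx 16.3 \le 20$.
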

This proposition tempers the dependence on the error $\epsilon$ from $1/\epsilon^4$ to $1/\epsilon^2$ in the event that the input matrix has rank at most $r$. 
This gives a relative error guarantee for Algorithm~\ref{alg:approx} on the matrix completion problem, which improves on the one implied by Theorem~\ref{thm:approx_adv}. 
Note that this guarantee is weaker than Theorem~\ref{thm:exact_ub}, but Algorithm~\ref{alg:approx} is much more robust to relaxations of the low rank assumption as demonstrated in Theorem~\ref{thm:approx_adv}.

A similarly mild dependence on $\epsilon$ can be derived under the assumption that $X = A+R$, $A$ has rank $r$ and $R$ is some perturbation, which has the flavor of existing noisy matrix completion results. 
Here, it is natural to recover the parameter $A$ rather than the top $r$ ranks of $X$ and we have the following parameter recovery guarantee for Algorithm~\ref{alg:approx}:
\begin{proposition}
\label{cor:approx_stoch}
Let $X = A + R$ where $A$ has rank at most $r$.
Suppose further that $X$ has $\mu$-incoherent columns and set $m_1 \ge 32\mu \log(n/\delta)$.
Then with probability $\ge 1 - 2\delta$:
\begin{align}
\|\hat{X} - A\|_F \le 20 \sqrt{\frac{r\mu}{m_2}}\log\left(\frac{d+n}{\delta}\right)\left(\|A\|_F + \|R_{\Omega}\|_F\right) + \sqrt{8r}\|R\|_2
\end{align}
where the number of samples is $n(m_1+m_2)$ and $\Omega$ is the set of all entries observed over the course of the algorithm. 
\end{proposition}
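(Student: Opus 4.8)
The plan is to convert the Frobenius-norm error into a spectral-norm error using the fact that both $\hat{X}$ and $A$ have rank at most $r$, and then to invoke the spectral concentration bound underlying the proof of Theorem~\ref{thm:approx_adv}. Since $\hat{X} - A$ has rank at most $2r$, we have $\|\hat{X} - A\|_F \le \sqrt{2r}\|\hat{X} - A\|_2$. Because $\hat{X}$ is the best rank-$r$ approximation of $\tilde{X}$ (Eckart--Young, which holds in spectral norm too) and $A$ is itself a rank-$r$ candidate, optimality gives $\|\hat{X} - \tilde{X}\|_2 \le \|A - \tilde{X}\|_2$, whence $\|\hat{X} - A\|_2 \le \|\hat{X} - \tilde{X}\|_2 + \|\tilde{X} - A\|_2 \le 2\|\tilde{X} - A\|_2$. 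So it suffices to control $\|\tilde{X} - A\|_2$, and the resulting factor $2\sqrt{2r}$ is exactly what produces both the leading constant $20$ and the $\sqrt{8r} = 2\sqrt{2r}$ in the statement.

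First I would exploit the linearity of the rescaling operator $\Rcal$ to split the preliminary estimate. Writing $x_t = a_t + r_t$ for the columns of $A$ and $R$, we have $\Rcal_{\Omega_{2,t}} x_t = \Rcal_{\Omega_{2,t}} a_t + \Rcal_{\Omega_{2,t}} r_t$, so $\tilde{X} = \tilde{A} + \tilde{R}$ with $\tilde{A} = \sum_t (\Rcal_{\Omega_{2,t}} a_t) e_t^T$ and $\tilde{R} = \sum_t (\Rcal_{\Omega_{2,t}} r_t) e_t^T$. Since each $\Rcal_{\Omega_{2,t}}$ is unbiased, both $\tilde{A} - A$ and $\tilde{R} - R$ are sums of independent, mean-zero, column-supported random matrices. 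I would then write $\tilde{X} - A = (\tilde{A} - A) + (\tilde{R} - R) + R$ and bound the three pieces separately: the mean-zero sampling error of the signal $A$, the mean-zero sampling error of the perturbation $R$, and the deterministic remainder $R$. The last piece contributes exactly $2\sqrt{2r}\|R\|_2 = \sqrt{8r}\|R\|_2$.

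For the two mean-zero terms I would apply the matrix Bernstein / spectral deviation bound from the proof of Theorem~\ref{thm:approx_adv}, which controls $\|\tilde{Y} - Y\|_2$ by $c\sqrt{\mu/m_2}\log((d+n)/\delta)$ times an appropriate energy of $Y$, using the per-column budgets $m_{2,t} \propto \|x_t\|_2^2$ set in the first pass. Applied to $\tilde{A} - A$ this yields the $\|A\|_F$ contribution and applied to $\tilde{R} - R$ the $\|R_\Omega\|_F$ contribution; collecting the three pieces and multiplying by $2\sqrt{2r}$ gives the stated bound, the two concentration terms sharing the common prefactor $20\sqrt{r\mu/m_2}\log((d+n)/\delta)$ (so $c = 5\sqrt{2}$). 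The main obstacle is the $R$-term: unlike $X$, the perturbation $R$ carries no incoherence guarantee, so the standard deviation bound, which relies on $\|r_t\|_\infty^2 \le \mu\|r_t\|_2^2/d$, is unavailable. I expect to handle this by a data-dependent argument that conditions on the realized sample locations and controls the sampling noise of $R$ through the entries actually observed, which is precisely what produces $\|R_\Omega\|_F$ (the energy of $R$ on the observed set $\Omega$) in place of the uncontrollable $\|R\|_F$. A secondary point is that the rates were tuned to $\|x_t\|_2^2$ rather than $\|a_t\|_2^2$, so I would verify that the proportional-sampling normalization, justified by the $m_1 \ge 32\mu\log(n/\delta)$ condition through the first-pass column-norm concentration used in Theorem~\ref{thm:approx_adv}, still yields the clean $\|A\|_F$ scaling for the signal term.
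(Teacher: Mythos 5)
Your outer skeleton is fine and even slightly cleaner than the paper's: the rank-$2r$ conversion $\|\hat{X}-A\|_F \le \sqrt{2r}\,\|\hat{X}-A\|_2$ and the factor of $2$ from Eckart--Young optimality of $\hat{X}=\tilde{X}_r$ against the rank-$r$ competitor $A$ are both correct and account for the $\sqrt{8r}$ and the leading constant. (The paper instead routes through $\|\hat{X}-X\|_2+\|R\|_2$, the spectral inequality of Achlioptas--McSherry, and $\|X-X_r\|_2\le\|R\|_2$; either works.)

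The gap is in the core step. You split $\tilde{X}-A = (\tilde{A}-A)+(\tilde{R}-R)+R$ and plan to run the matrix Bernstein argument separately on the first two pieces, but neither piece is controllable from the stated hypotheses. For $\tilde{A}-A$: the concentration in Lemma~\ref{lem:mb_app} needs column incoherence of the matrix being sampled and sampling rates proportional to \emph{its} column norms; only $X$ is assumed $\mu$-column-incoherent, and $m_{2,t}\propto\|x_t\|_2^2$, so if $\|a_t\|_2\gg\|x_t\|_2$ on some column (cancellation between $A$ and $R$) the variance term $\sum_t (d/m_{2,t})\|a_t\|_2^2$ does not collapse to $(d/(m_2 n))\|A\|_F^2$ and the clean $\|A\|_F$ scaling is lost. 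For $\tilde{R}-R$: this matrix equals $-R$ on the unobserved entries, and $\|R_{\Omega^c}\|_2$ is controlled by nothing in the statement ($R$ is arbitrary off $\Omega$, and zeroing entries can even increase spectral norm); no conditioning on the realized sample locations can produce a bound in terms of $\|R_\Omega\|_F$ for a quantity that lives entirely on $\Omega^c$. The paper's proof avoids the split altogether: it applies the concentration bound to $X$ itself, obtaining $\|\tilde{X}-X\|_2\le\epsilon\|X\|_F$ with $\epsilon = O(\sqrt{\mu/m_2}\log((d+n)/\delta))$, and the quantity $\|A\|_F+\|R_\Omega\|_F$ enters purely deterministically at the end: since $\hat{X}$ is a function of $X_\Omega$ alone, one may set $R_{\Omega^c}=0$ without changing the output, whence $\|X\|_F\le\|A\|_F+\|R_\Omega\|_F$. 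That observation --- not a new concentration argument for $R$ or for $A$ separately --- is the missing idea.
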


To interpret this bound, let $\|A\|_F = 1$, and let $R$ be a random matrix whose entries are independently drawn from a Gaussian distribution with variance $\sigma^2/(dn)$. 
Note that this normalization for the variance is appropriate in the high-dimensional setting where $n,d \rightarrow \infty$, since we keep the signal-to-noise ratio $\|A\|^2_F/\|R\|^2_F = 1/\sigma^2$ constant. 
In this setting, the last term can essentially be ignored, since by the standard bound on the spectral norm of a Gaussian matrix, $\|R\|_2 = O(\sigma \sqrt{\frac{1}{d}}\log((n+d)/\delta))$ which will be lower order~\citep{achlioptas2007fast}. 
We can also bound $\|R_{\Omega}\|_F \le O(\sigma\sqrt{\frac{m_1+m_2}{d}}\log((n+d)/\delta))$ using a Gaussian tail bound. 
With $m_1 \le m_2$ we arrive at:
\begin{align*}
\|\hat{X} - A\|_F \le c_\star \left(\sqrt{\frac{r\mu}{m_2}} + \sigma \sqrt{\frac{r\mu}{d}}\right) \log^2\left(\frac{d+n}{\delta}\right),
\end{align*}
where $c_\star$ is some positive constant. 
In the high dimensional setting, when $r\mu = \tilde{o}(d)$, this shows that Algorithm~\ref{alg:approx} consistently recovers $A$ as long as $m_2= \tilde{\omega}(r \mu)$.
This second condition implies that the total number of samples uses is $\tilde{\omega}(nr\mu)$\footnote{The notation $\tilde{o}(\cdot), \tilde{\omega}(\cdot)$ is the Bachmann-Landau asymptotic notation but suppressing logarithmic factors.}.

\subsubsection{Comparison with Matrix Completion Results}
The closest result to Theorem~\ref{thm:approx_adv} is the result of \citet{koltchinskii2011nuclear} who consider a soft-thresholding procedure and bound the approximation error in squared-Frobenius norm.
They assume that the matrix has bounded entrywise $\ell_{\infty}$ norm and give an entrywise squared-error guarantee of the form:
\begin{align}
\|\hat{X} - X\|_F^2 \le \|X - X_r\|_F^2 + c dn\|X\|_{\infty}^2 \frac{n r\log(d+n)}{M}
\end{align}
where $M$ is the total number of samples and $c$ is a constant. 
Their bound is quite similar to ours in the relationship between the number of samples and the target rank $r$. 
However, since $dn\|X\|_{\infty}^2 \ge \|X\|_F^2$, their bound is significantly worse in the event that the energy of the matrix is concentrated on a few columns.

To make this concrete, fix $\|X\|_F = 1$ and let us compare the matrix where every entry is $\frac{1}{\sqrt{dn}}$ with the matrix where one column has all entries equal to $\frac{1}{\sqrt{d}}$. 
In the former, the error term in the squared-Frobenius error bound of Koltchinskii et al. is $nr \log(d+n)/M$ while our bound on Frobenius error is, modulo logarithmic factors, the square root of this quantity.
In this example, the two results are essentially equivalent.
For the second matrix, their bound deteriorates significantly to $n^2r \log(d+n)/M$ while our bound remains the same.
Thus our algorithm is particularly suited to handle matrices with non-uniform column norms. 

Apart from adaptive sampling, the difference between our procedure and the algorithm of \citet{koltchinskii2011nuclear} is a matter of soft- versus hard-thresholding of the singular values of the zero-filled matrix. 
In the setting of Proposition~\ref{cor:approx_stoch}, soft thresholding seems more appropriate, as the choice of regularization parameter allows one to trade off the amount of signal and noise captured in $\hat{X}$. 
While in practice one could replace the hard thresholding step with soft thresholding in our algorithm, there are some caveats with the theoretical analysis.
First, soft-thresholding does not ensure that $\hat{X}$ will be at most rank $r$, so it is not suitable for the matrix approximation problem.
Second, the resulting error guarantee depends on the sampling distribution, which cannot be translated to the Frobenius norm unless the distribution is quite uniform~\citep{negahban2012restricted,koltchinskii2011nuclear}. 
Thus the soft-thresholding procedure does not give a Frobenius-norm error guarantee in the non-uniform setting that we are most interested in.


The majority of other results on low rank matrix completion focus on parameter recovery rather than approximation~\citep{negahban2012restricted,keshavan2010matrix,candes2010matrix}.
It is therefore best to compare with Proposition~\ref{cor:approx_stoch}, where we show that Algorithm~\ref{alg:approx} consistently recovers the parameter, $A$. 
These results exhibit similar dependence between the number of samples and the problem parameters $n, r, \epsilon$ but hold under different notions of uniformity, such as spikiness, boundedness, or incoherence. 
Our result agrees with these existing results but holds under a much weaker notion of uniformity.

Lastly, we emphasize the effect of adaptive sampling in our bound. 
We do not need \emph{any} uniformity assumption over the columns of the input matrix $X$. 
All existing works on noisy low rank matrix completion or matrix approximation from missing data have some assumption of this form, be it incoherence~\citep{keshavan2010matrix,candes2010matrix}, spikiness~\citep{negahban2012restricted}, or bounded $\ell_{\infty}$ norm~\citep{koltchinskii2011nuclear}. 
The detailed comparison with the result of Koltchinskii et al. gives a precise characterization of this effect and shows that in the absence of such uniformity, our adaptive sampling algorithm enjoys a significantly lower sample complexity.

In the event of uniformity, our algorithm performs similarly to existing ones.
Specifically, we obtain the same relationship between the total number of samples $M$, the problem dimensions $n,d$ and the target rank $r$. 
If we knew \emph{a priori} that the matrix had near-uniform column lengths, we could simply omit the first pass of the algorithm, sample uniformly in the second pass and avoid the need for any adaptivity.

\section{Simulations}
\label{sec:experiments}

\begin{figure}
\subfigure[]{
\includegraphics[scale=0.19]{./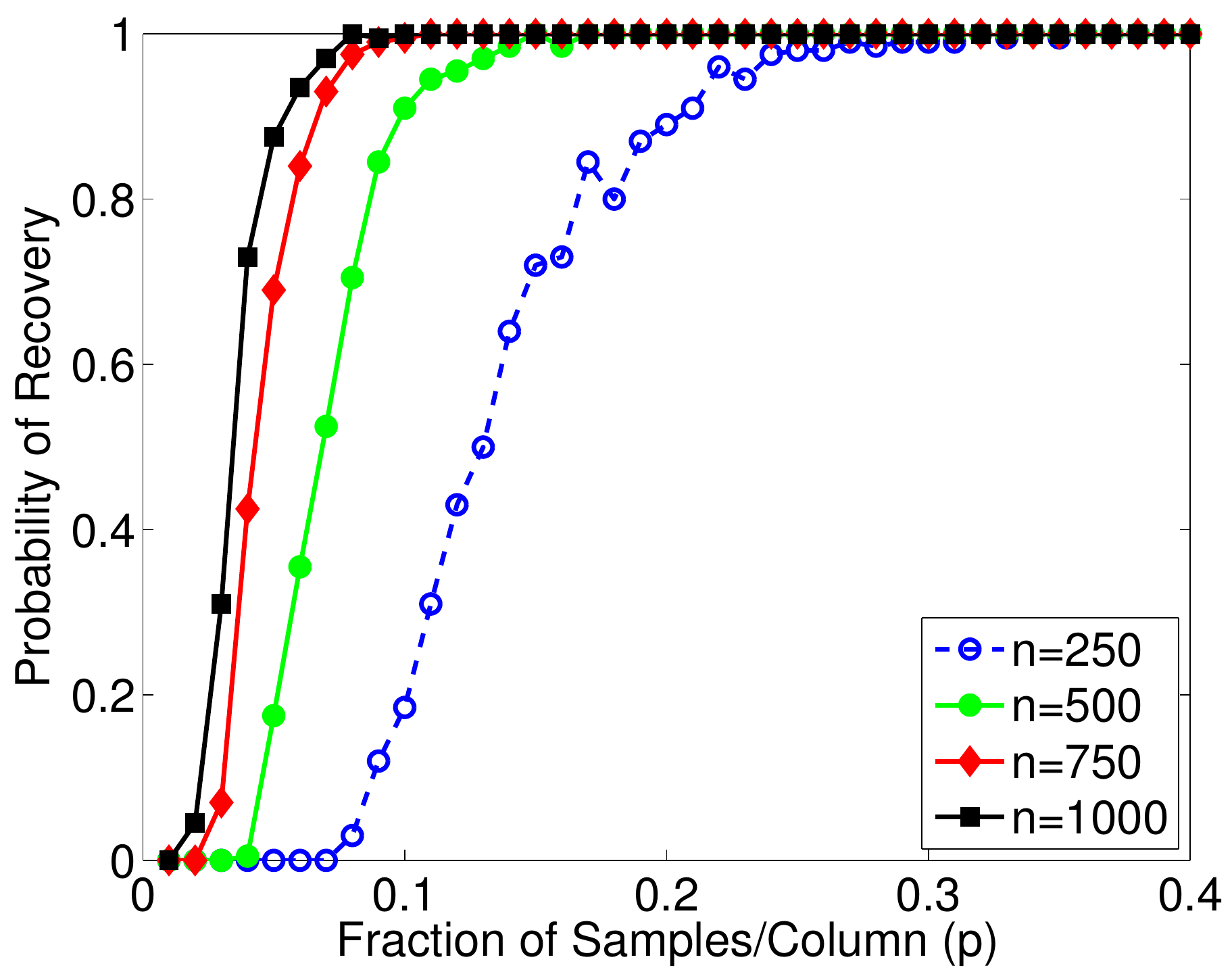}
\label{fig:exact_p_threshold_1}
}\subfigure[]{
\includegraphics[scale=0.19]{./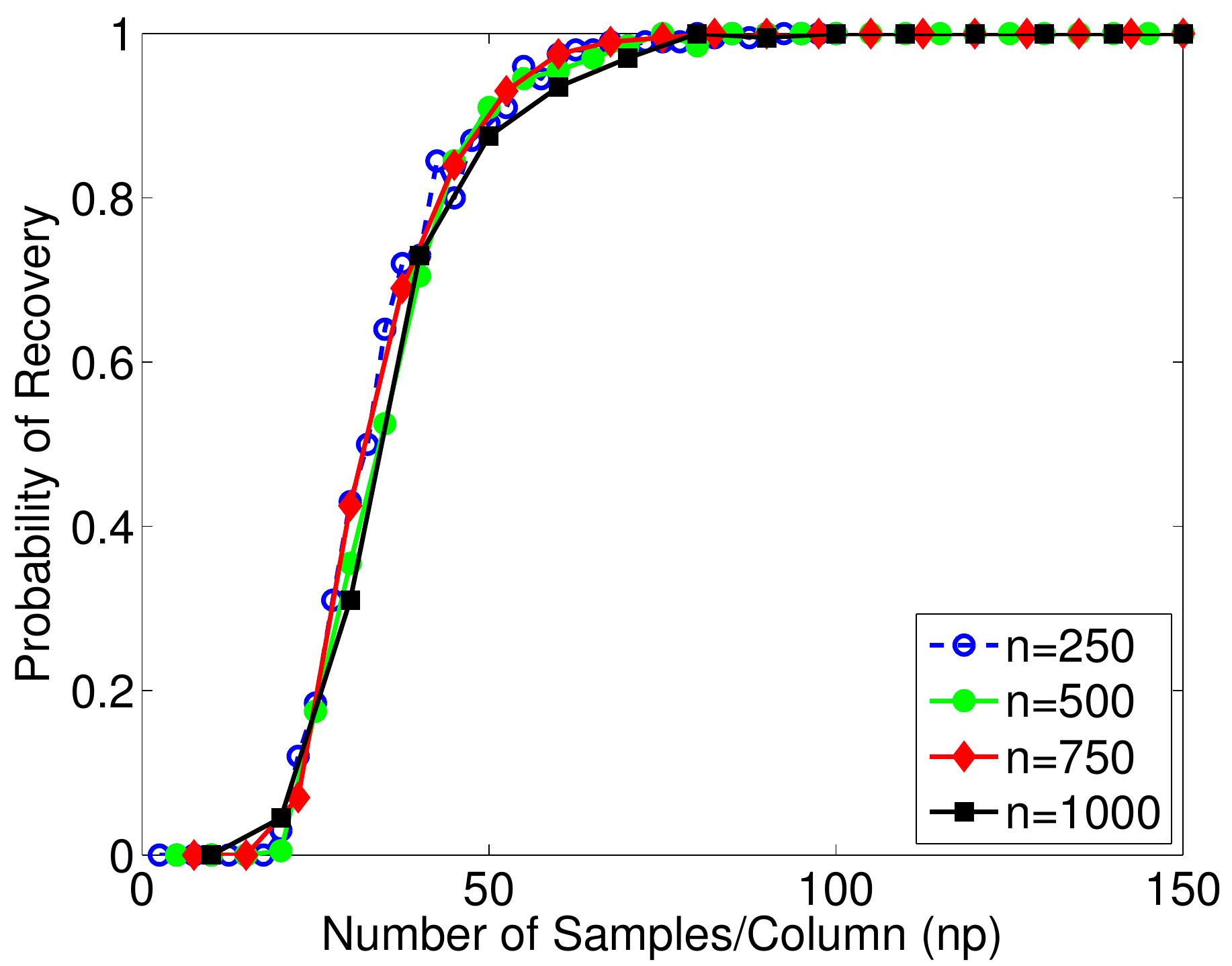}
\label{fig:exact_p_threshold_2}
}\subfigure[]{
\includegraphics[scale=0.19]{./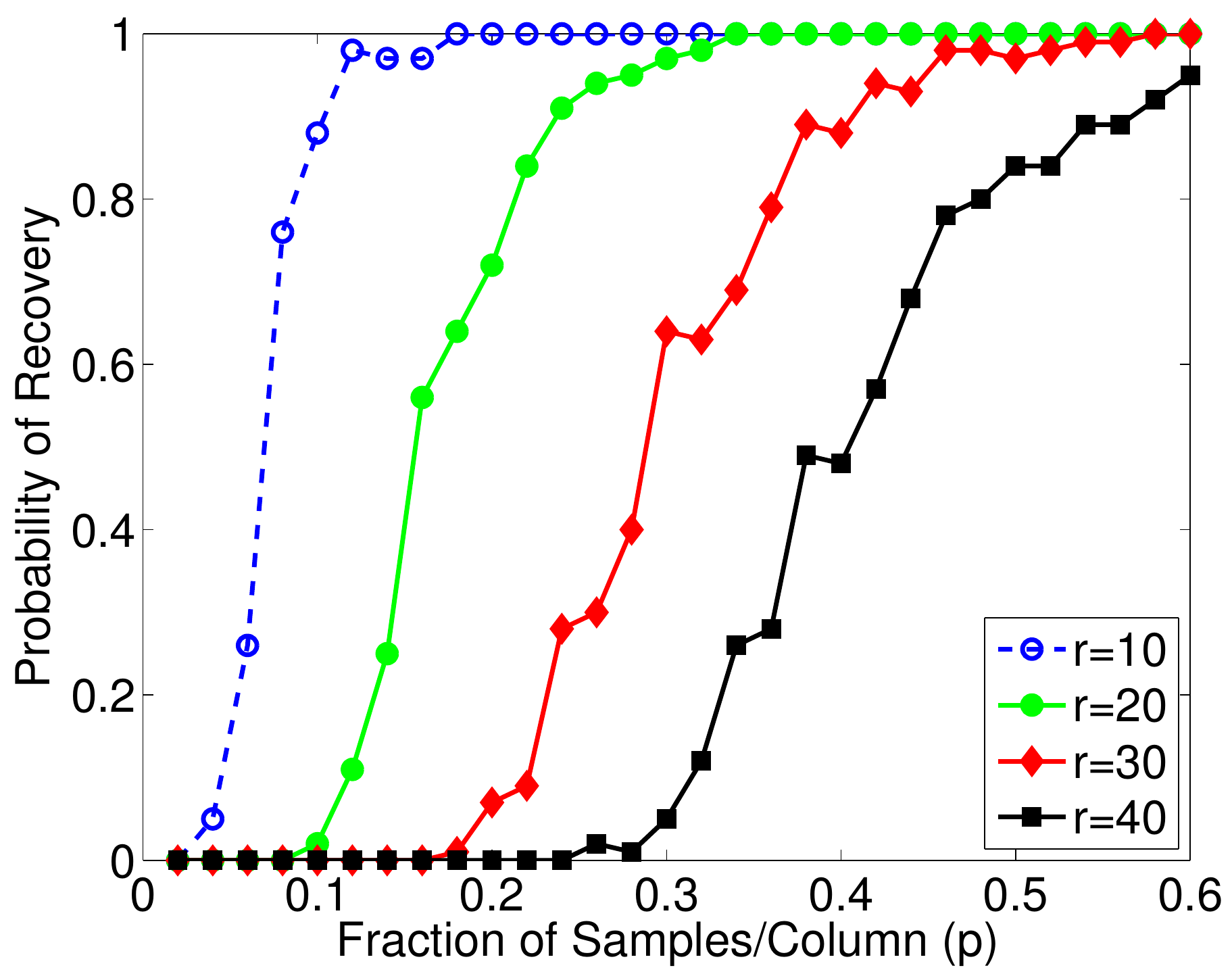}
\label{fig:exact_r_threshold_1}
}\subfigure[]{
\includegraphics[scale=0.19]{./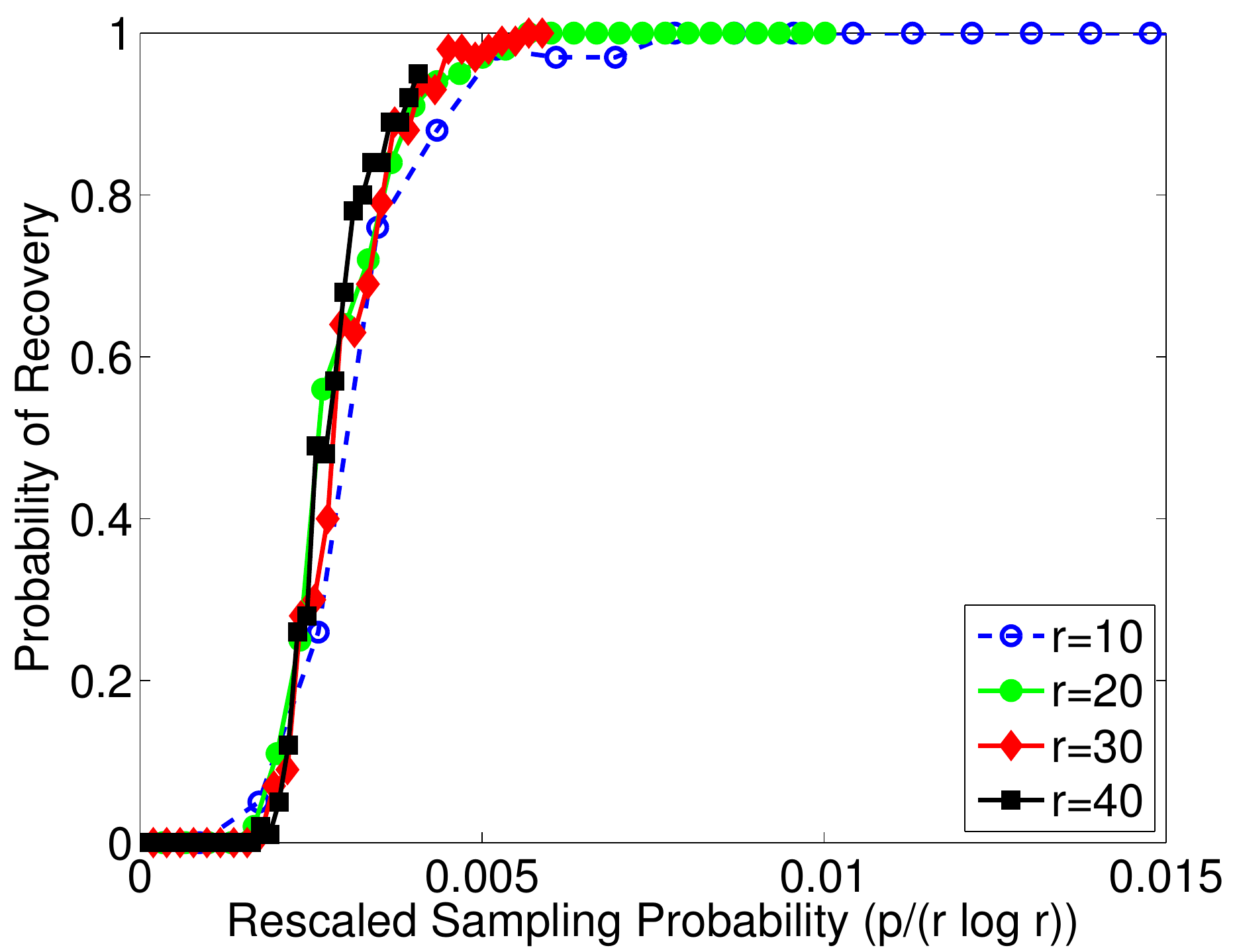}
\label{fig:exact_r_threshold_2}
}
\caption{\subref{fig:exact_p_threshold_1}: Probability of success of Algorithm~\ref{alg:exact_mc} versus fraction of samples per column ($p=m/d$) with $r=10, \mu_0=1$.
\subref{fig:exact_p_threshold_2}: Data from~\subref{fig:exact_p_threshold_1} plotted against samples per column, $m$.
\subref{fig:exact_r_threshold_1}: Probability of success of Algorithm~\ref{alg:exact_mc} versus fraction of samples per column ($p=m/d$) with $n=500, \mu_0=1$.
\subref{fig:exact_r_threshold_2}: Data from~\subref{fig:exact_r_threshold_1} plotted against rescaled sample probability $p/(r \log r)$.}
\label{fig:exact_simulations_1}
\end{figure}

We perform a number of simulations to analyze the empirical performance of both Algorithms~\ref{alg:exact_mc} and~\ref{alg:approx}.
The first set of simulations, in Figures~\ref{fig:exact_simulations_1} and~\ref{fig:exact_simulations_2}, examine the behavior of Algorithm~\ref{alg:exact_mc}.
We work with square matrices where the column space is spanned by binary vectors, constructed so that the matrix has the appropriate rank and coherence. 
The row space is spanned by either random gaussian vectors in the case of incoherent row space or a random collection of standard basis elements if we want high coherence. 

In the first two figures (\ref{fig:exact_p_threshold_1} and~\ref{fig:exact_p_threshold_2}) we study the algorithms dependence on the matrix dimension. 
For various matrix sizes, we record the probability of exact recovery as we vary the number of samples alloted to the algorithm.
We plot the probability of recovery as a function of the fraction of samples per column, denote by $p$, (Figure~\ref{fig:exact_p_threshold_1}) and as a function of the total samples per column $m$ (Figure~\ref{fig:exact_p_threshold_2}).
It is clear from the simulations that $p$ can decrease with matrix dimension while still ensuring exact recovery.
On the other hand, the curves in the second figure line up, demonstrating that the number of samples per column remains fixed for fixed probability of recovery.
This behavior is predicted by Theorem~\ref{thm:exact_ub}, which shows that the total number of samples scales linearly with dimension, so that the number of samples per column remains constant.

\begin{figure}
\subfigure[]{
\includegraphics[scale=0.19]{./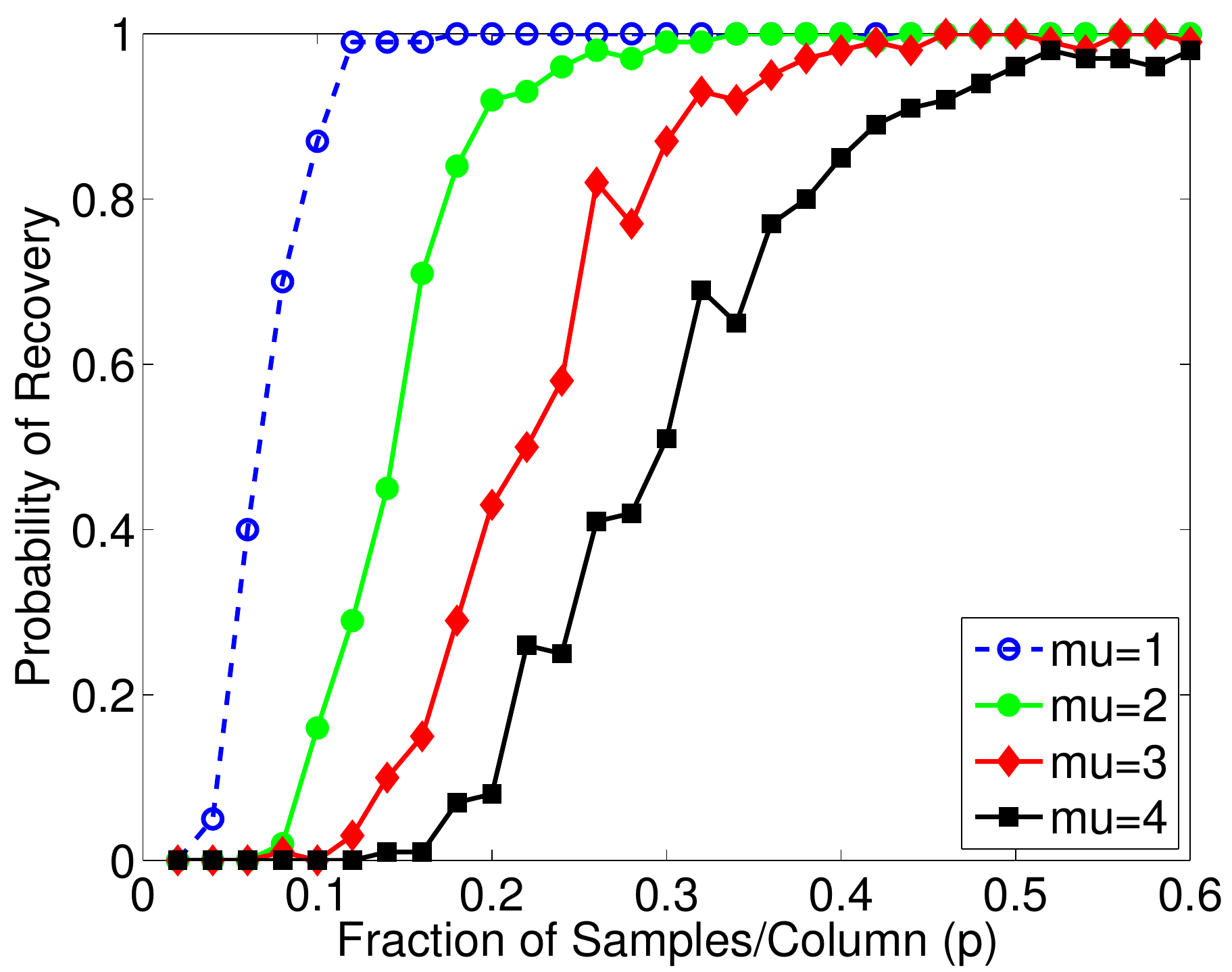}
\label{fig:exact_mu_threshold_1}
}\subfigure[]{
\includegraphics[scale=0.19]{./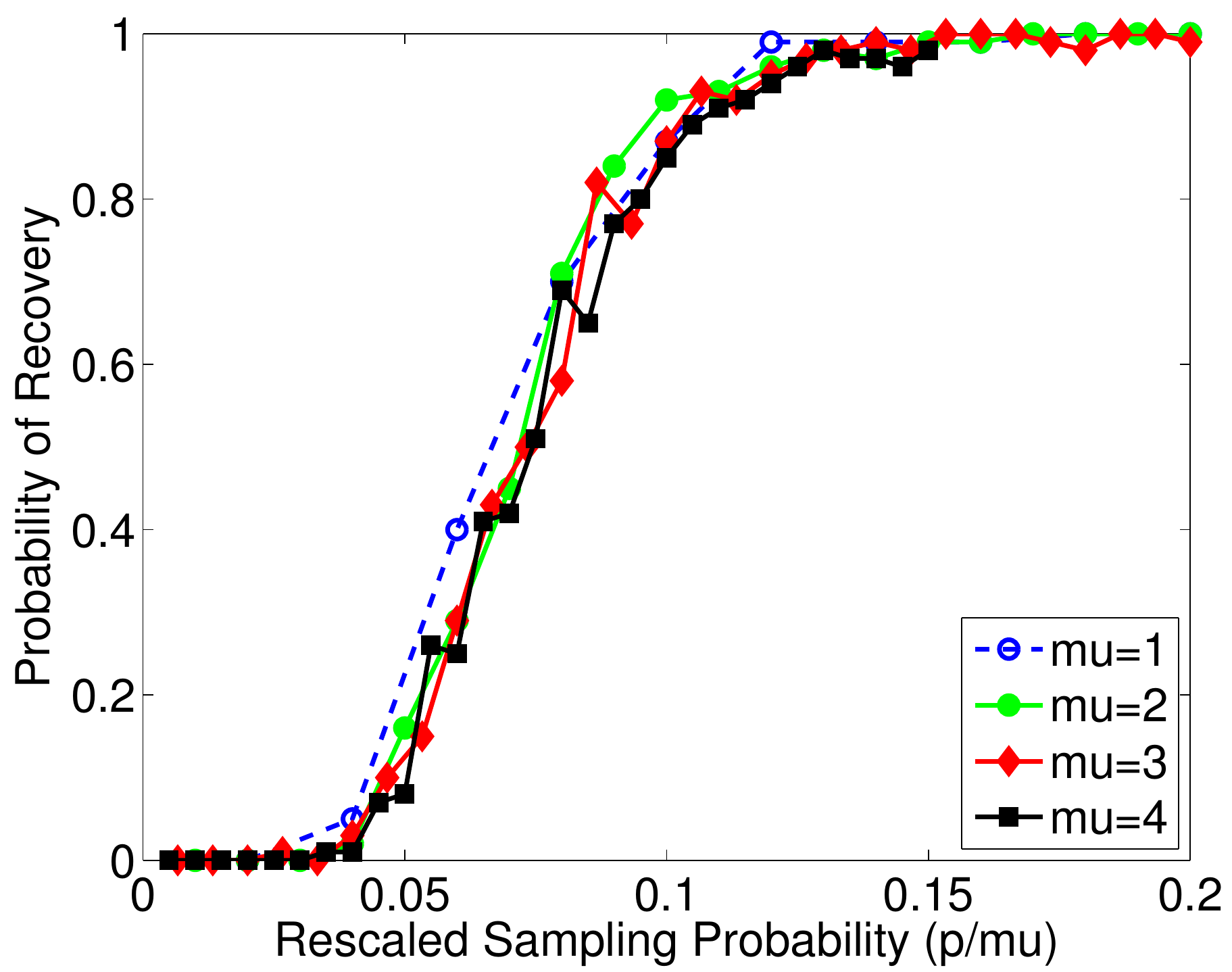}
\label{fig:exact_mu_threshold_2}
}\subfigure[]{
\includegraphics[scale=0.19]{./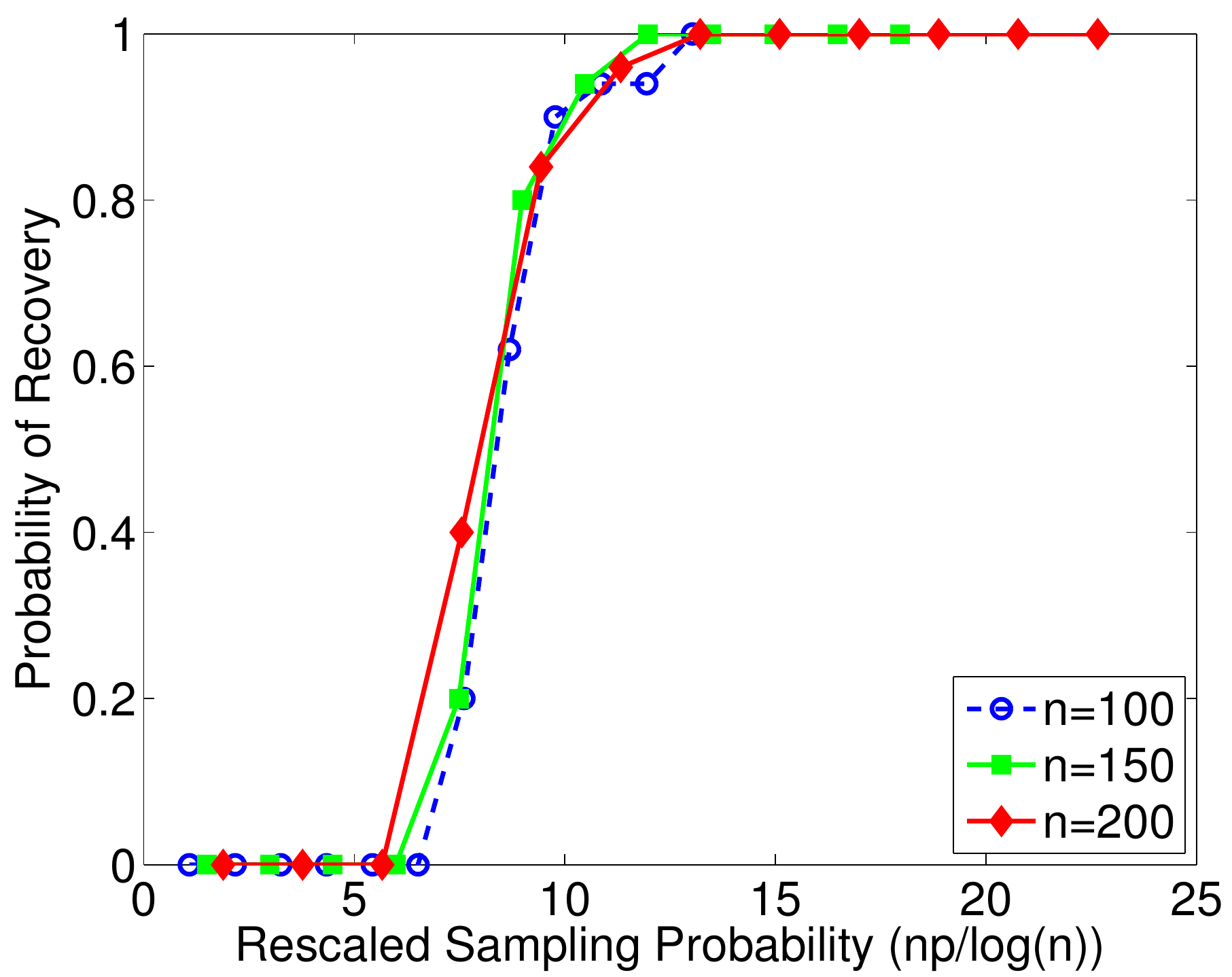}
\label{fig:exact_svt_threshold_2}
}\subfigure[]{
\includegraphics[scale=0.19]{./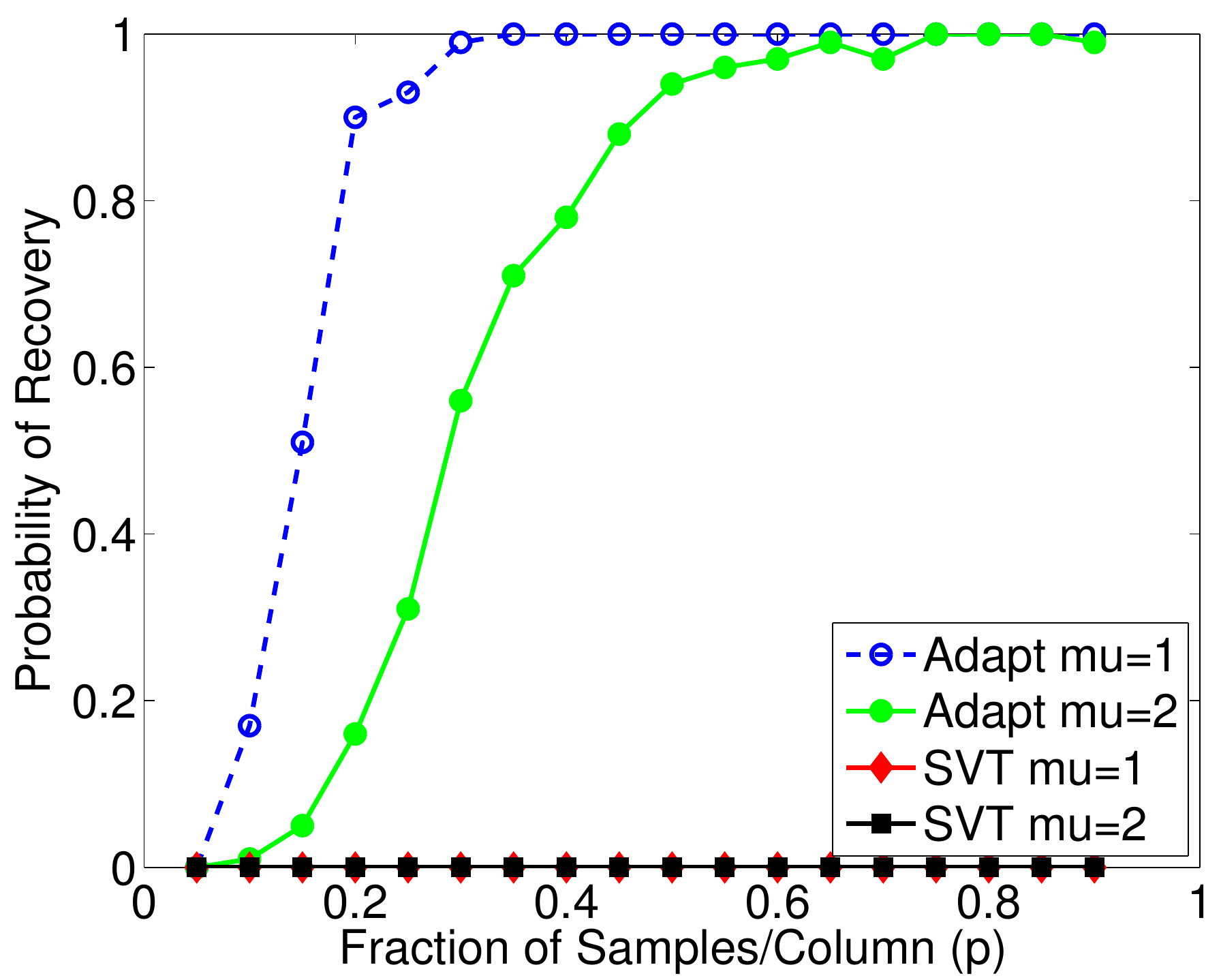}
\label{fig:exact_coherent_threshold}
}
\caption{\subref{fig:exact_mu_threshold_1}: Probability of success of Algorithm~\ref{alg:exact_mc} versus fraction of samples per column ($p=m/d$) with $n=500, r=10$.
\subref{fig:exact_mu_threshold_2}: Data from~\subref{fig:exact_mu_threshold_1} plotted against rescaled sampling probability $p/\mu_0$.
\subref{fig:exact_svt_threshold_2}: Probability of success of SVT versus rescaled sampling probability $np/\log(n)$ with $r=5, \mu_0=1$. 
\subref{fig:exact_coherent_threshold}: Probability of sucess of Algorithm~\ref{alg:exact_mc} and SVT versus sampling probability for matrices with highly coherent row space with $r=5, n=100$.}
\label{fig:exact_simulations_2}
\end{figure}

In Figures~\ref{fig:exact_r_threshold_1} and~\ref{fig:exact_r_threshold_2} we show the results of a similar simulation, instead varying the matrix rank $r$, with dimension fixed at $500$. 
The first figure shows that the fraction of samples per column must increase with rank to ensure successful recovery while second shows that the ratio $p/(r \log r)$ governs the probability of success.
Figures~\ref{fig:exact_mu_threshold_1} and~\ref{fig:exact_mu_threshold_2} similarly confirm a linear dependence between the incoherence parameter $\mu_0$ and the sample complexity.
Notice that the empirical dependence on rank is actually a better than what is predicted by Theorem~\ref{thm:exact_ub}, which suggests that $r \log^2r$ is the appropriate scaling.
Our theorem does seem to capture the correct dependence on the coherence parameter. 

In the last two plots we compare Algorithm~\ref{alg:exact_mc} against the Singular Value Thresholding algorithm (SVT) of \citet{cai2010singular}.
The SVT algorithm is a non-adaptive iterative algorithm for nuclear norm minimization from a set of uniform-at-random observations. 
In Figure~\ref{fig:exact_svt_threshold_2}, we show that the success probability is governed by $np/\log(n)$, 
which is predicted by the existing analysis of the nuclear norm minimization program.
This dependence is worse than for Algorithm~\ref{alg:exact_mc}, whose success probability is governed by $np$ as demonstrated in Figure~\ref{fig:exact_p_threshold_2}.
Finally, in Figure~\ref{fig:exact_coherent_threshold}, we record success probability versus sample complexity on matrices with maximally coherent row spaces. 
The simulation shows that our algorithm can tolerate coherent row spaces while the SVT algorithm cannot. 

\begin{figure}
\subfigure[]{
\includegraphics[scale=0.2]{./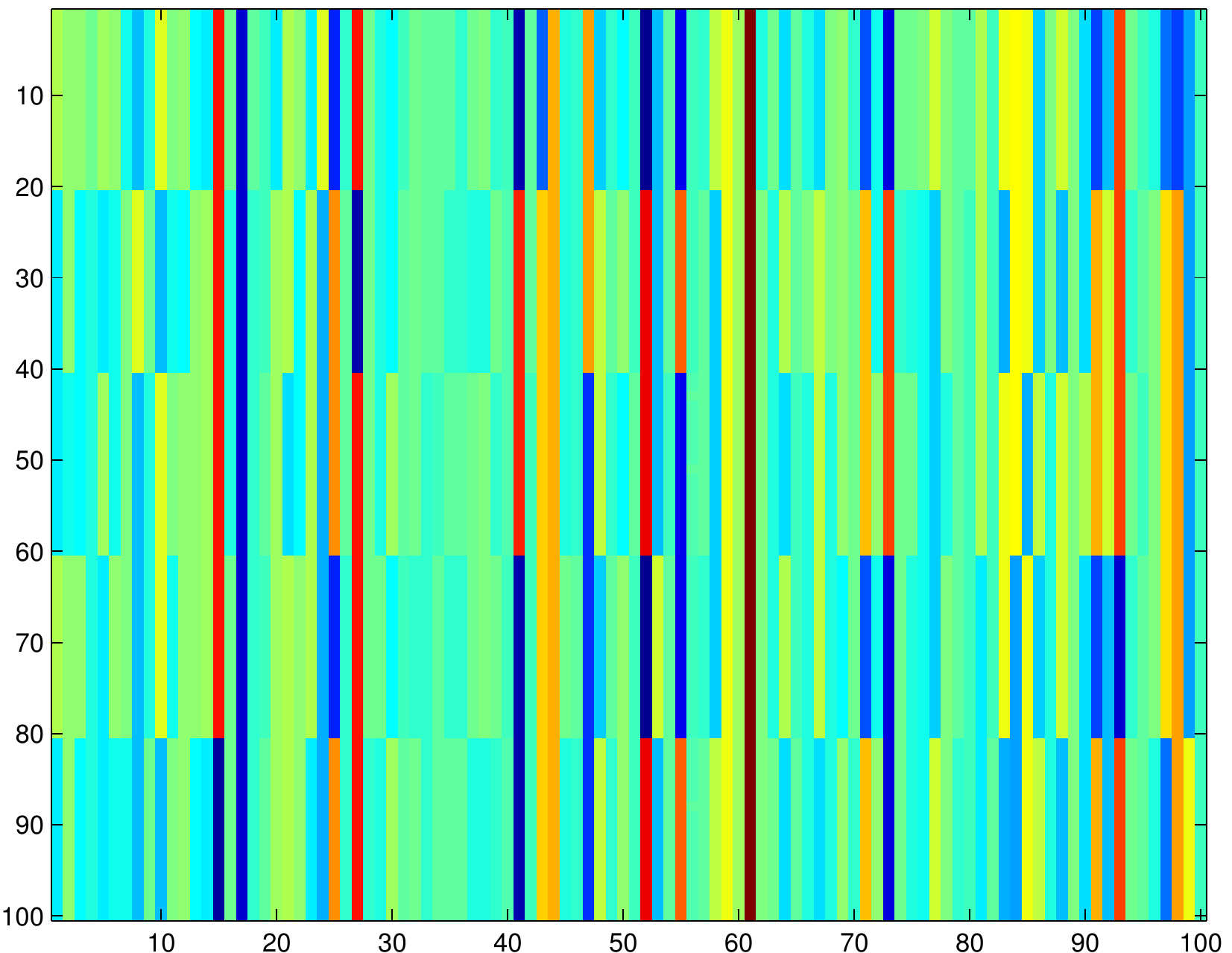}
\label{fig:incoherent_matrix}
} \subfigure[]{
\includegraphics[scale=0.2]{./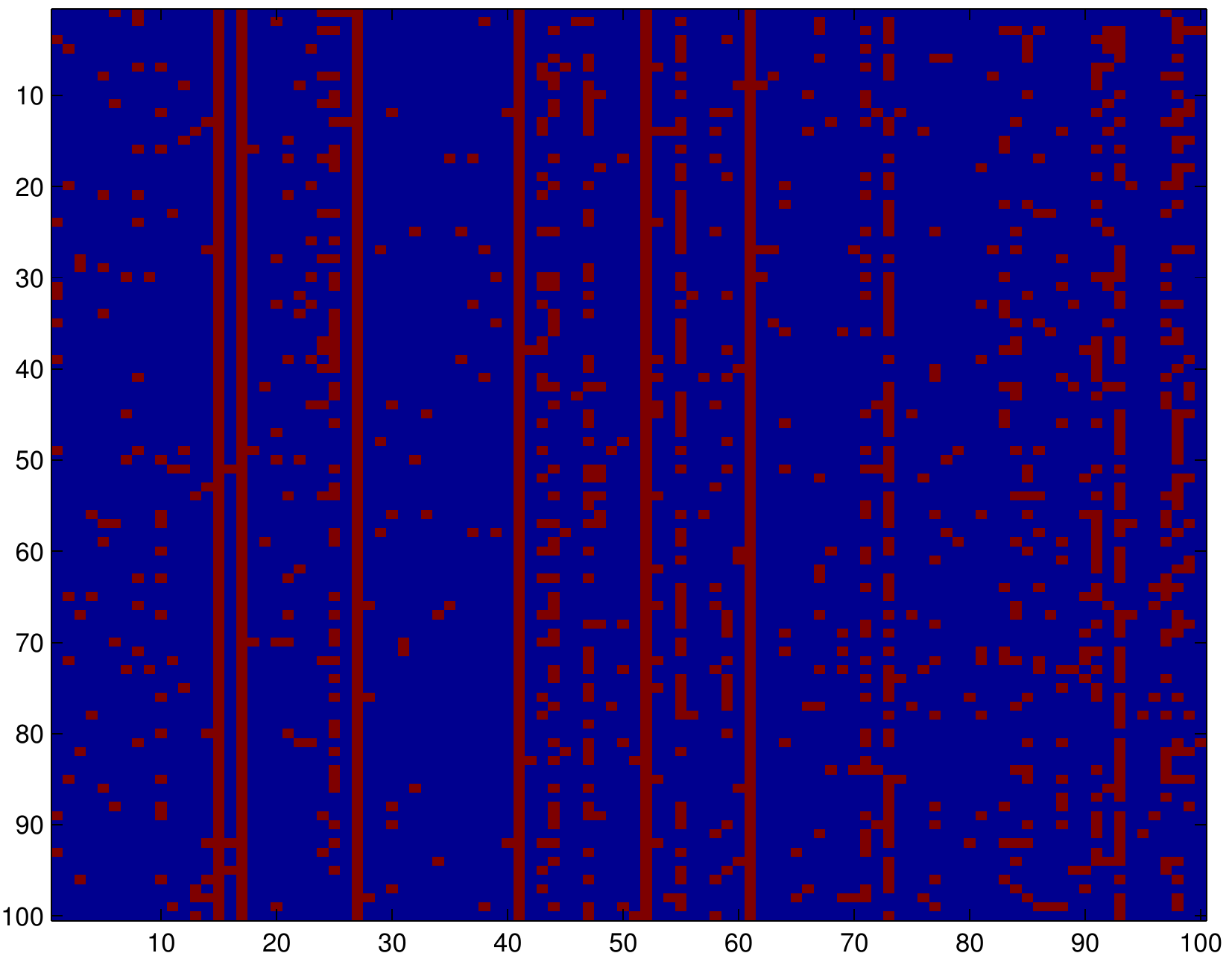}
\label{fig:sampling_pattern}
} \subfigure[]{
\includegraphics[scale=0.18]{./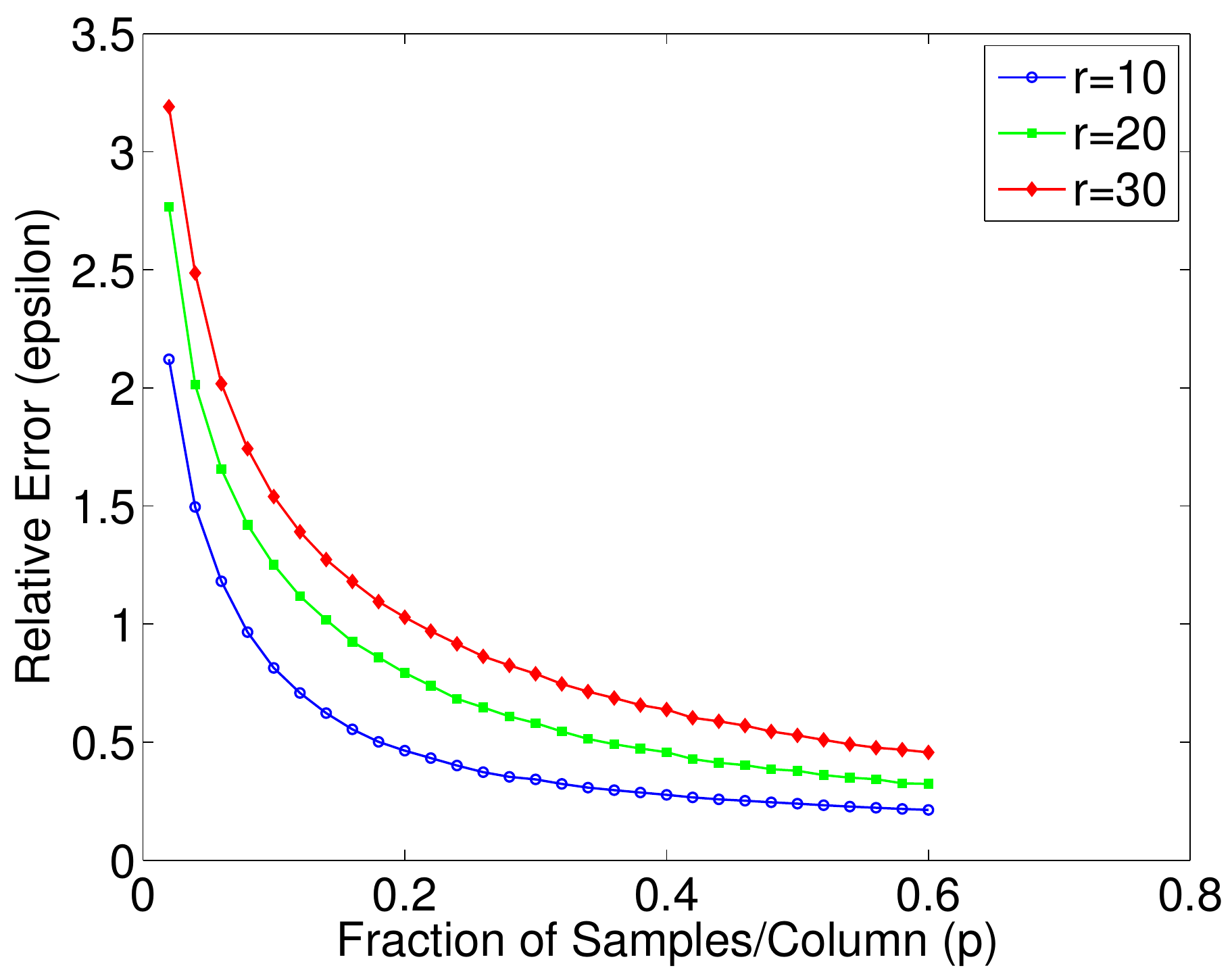}
\label{fig:approx_r_threshold_1}
} \subfigure[]{
\includegraphics[scale=0.18]{./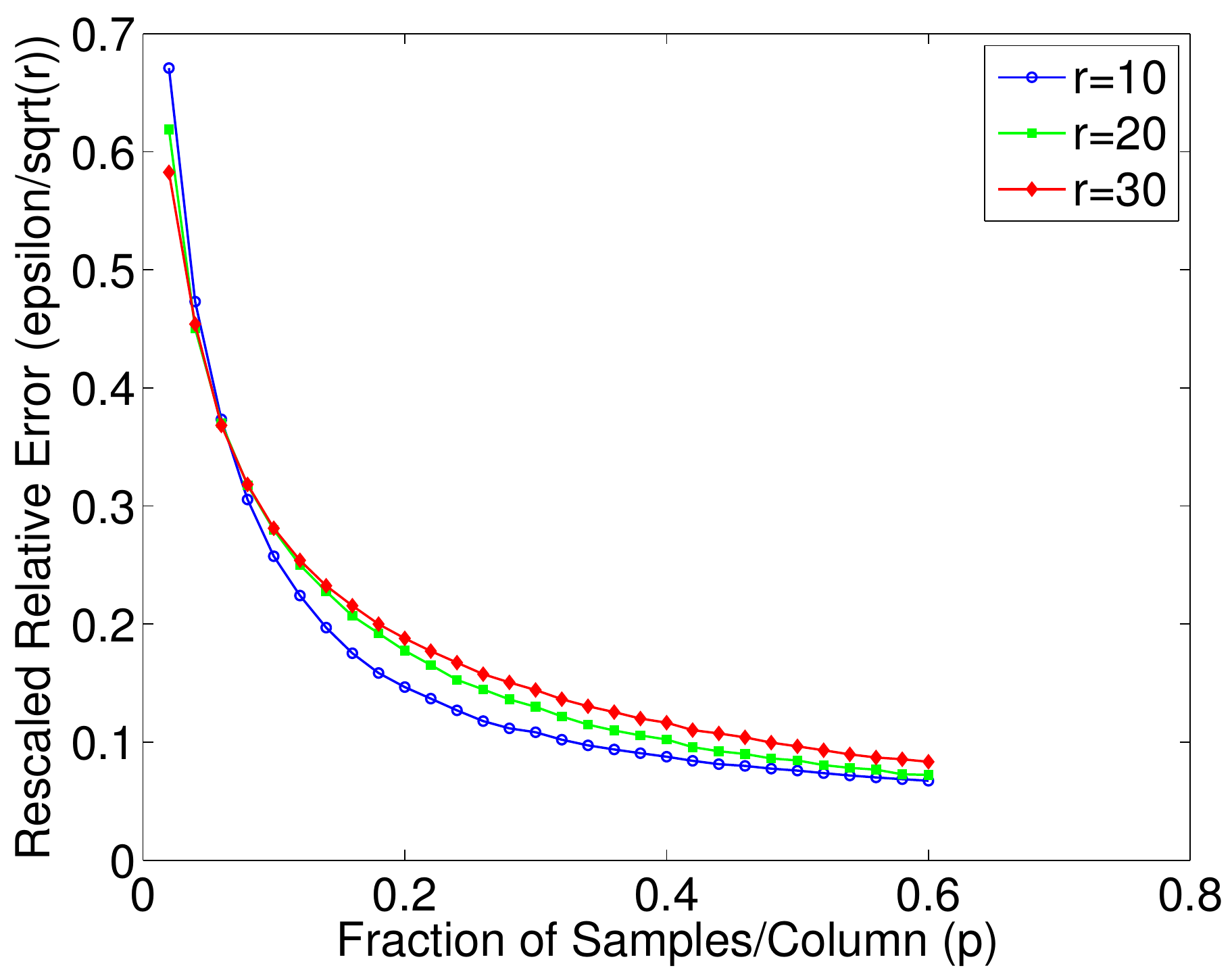}
\label{fig:approx_r_threshold_2}
}
\caption{\subref{fig:incoherent_matrix}: An example matrix with with highly non-uniform colum norms and \subref{fig:sampling_pattern} the sampling pattern of Algorithm~\ref{alg:approx}. 
\subref{fig:approx_r_threshold_1}: Relative error as a function of samping probability $p$ for different target rank $r$ ($\mu=1$).
\subref{fig:approx_r_threshold_2}: The same data where the $y$-axis is instead $\epsilon/\sqrt{r}$.}
\label{fig:approx_simulations_1}
\end{figure}

For Algorithm~\ref{alg:approx}, we display the results of a similar set of simulations in Figures~\ref{fig:approx_simulations_1} and~\ref{fig:approx_simulations_2}. 
Here, we construct low rank matrices whose column spaces are spanned by binary vectors and whose columns are also constant in magnitude on their support.
The length of the columns is distributed either log-normally, resulting in non-uniform column lengths, or uniformly between 0.9 and 1.1. 
We then corrupt this low rank matrix by adding a gaussian matrix whose entries have variance $\frac{1}{dn}$. 
In Figure~\ref{fig:incoherent_matrix} we show a matrix constructed via this process and in Figure~\ref{fig:sampling_pattern} we show the set of entries sampled by Algorithm~\ref{alg:approx} on this input. 
From the plots, it is clear that the algorithm focuses its measurements on the columns with high energy, while using very few samples to capture the columns with lower energy. 

In Figure~\ref{fig:approx_r_threshold_1}, we plot the relative error, which is the $\epsilon$ in Equation~\ref{eq:approx_risk}, as a function of the average fraction of samples per column (averaged over columns, as we are using non-uniform sampling) for $500 \times 500$ matrices of varying rank. 
In the next plot, Figure~\ref{fig:approx_r_threshold_2}, we rescale the relative error by $\sqrt{r}$, to capture the dependence on rank predicted by Theorem~\ref{thm:approx_adv}.
As we increase the number of observations, the relative error decreases quite rapidly.
Moreover, the algorithm needs more observations as the target rank $r$ increases. 
Qualitatively both of these effects are predicted by Theorem~\ref{thm:approx_adv}.
Lastly, the fact that the curves in Figure~\ref{fig:approx_r_threshold_2} nearly line up suggests that the relative error $\epsilon$ does scale with $\sqrt{r}$. 

\begin{figure}
\subfigure[]{
\includegraphics[scale=0.18]{./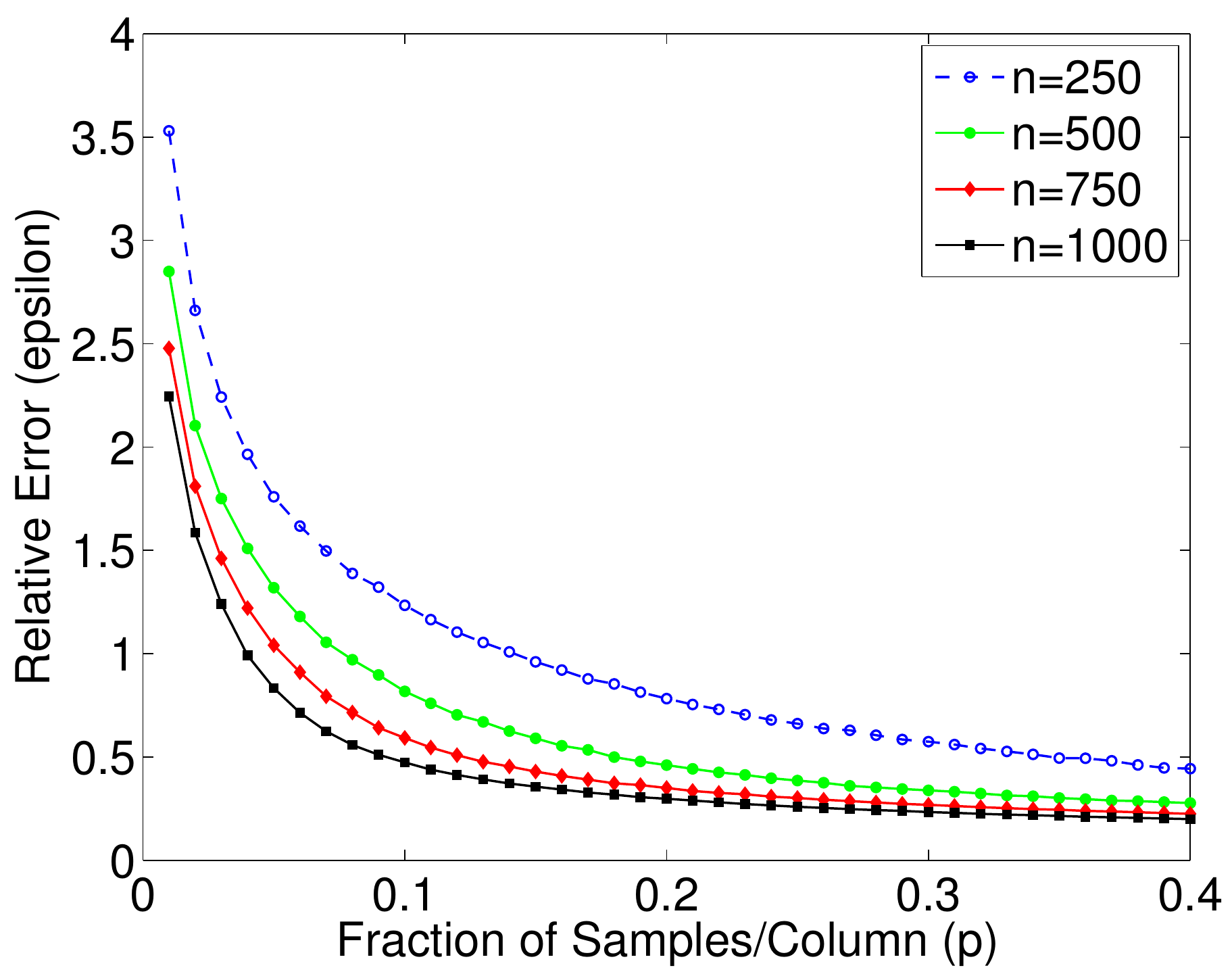}
\label{fig:approx_p_threshold_1}
} \subfigure[]{
\includegraphics[scale=0.18]{./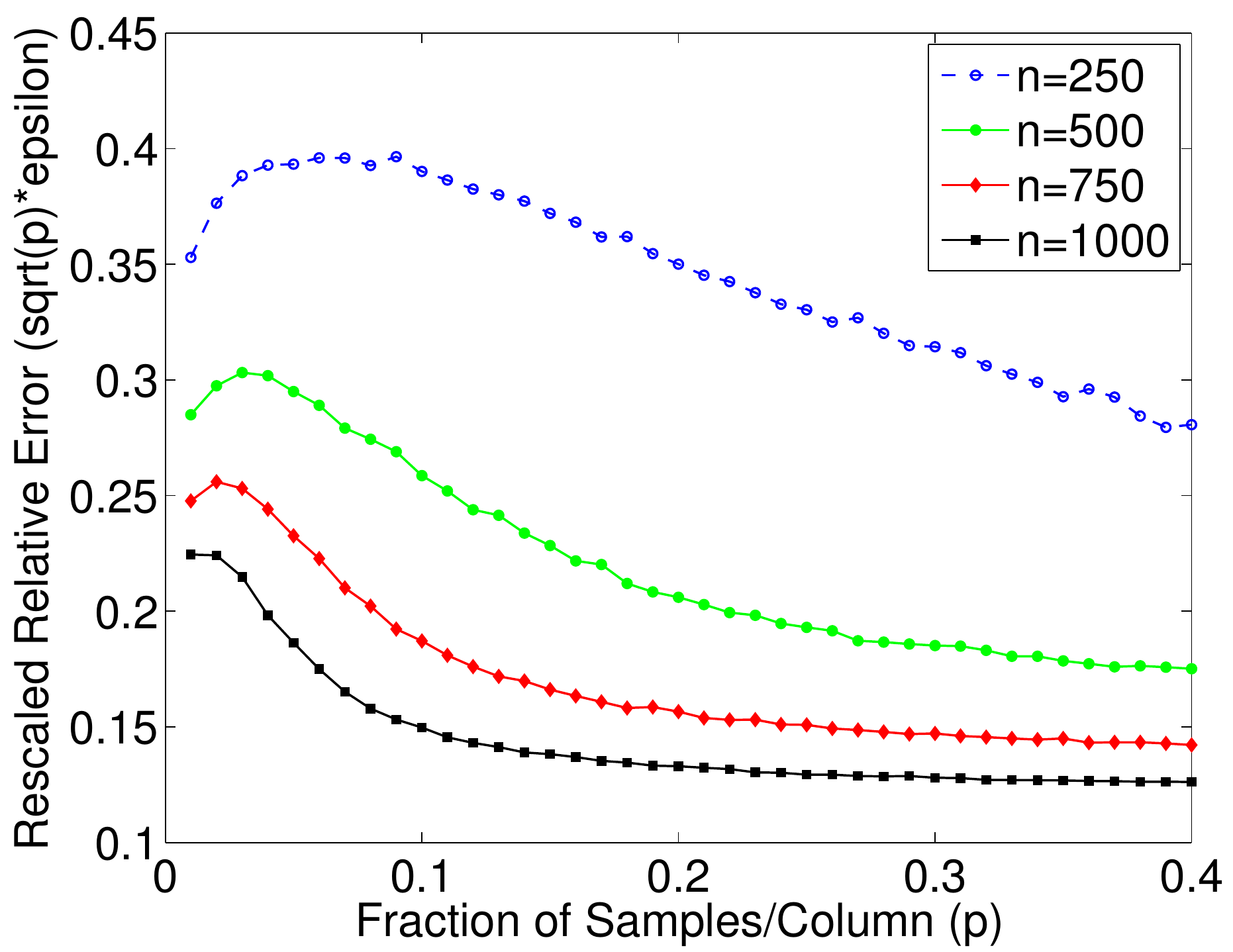}
\label{fig:approx_p_threshold_3}
} \subfigure[]{
\includegraphics[scale=0.18]{./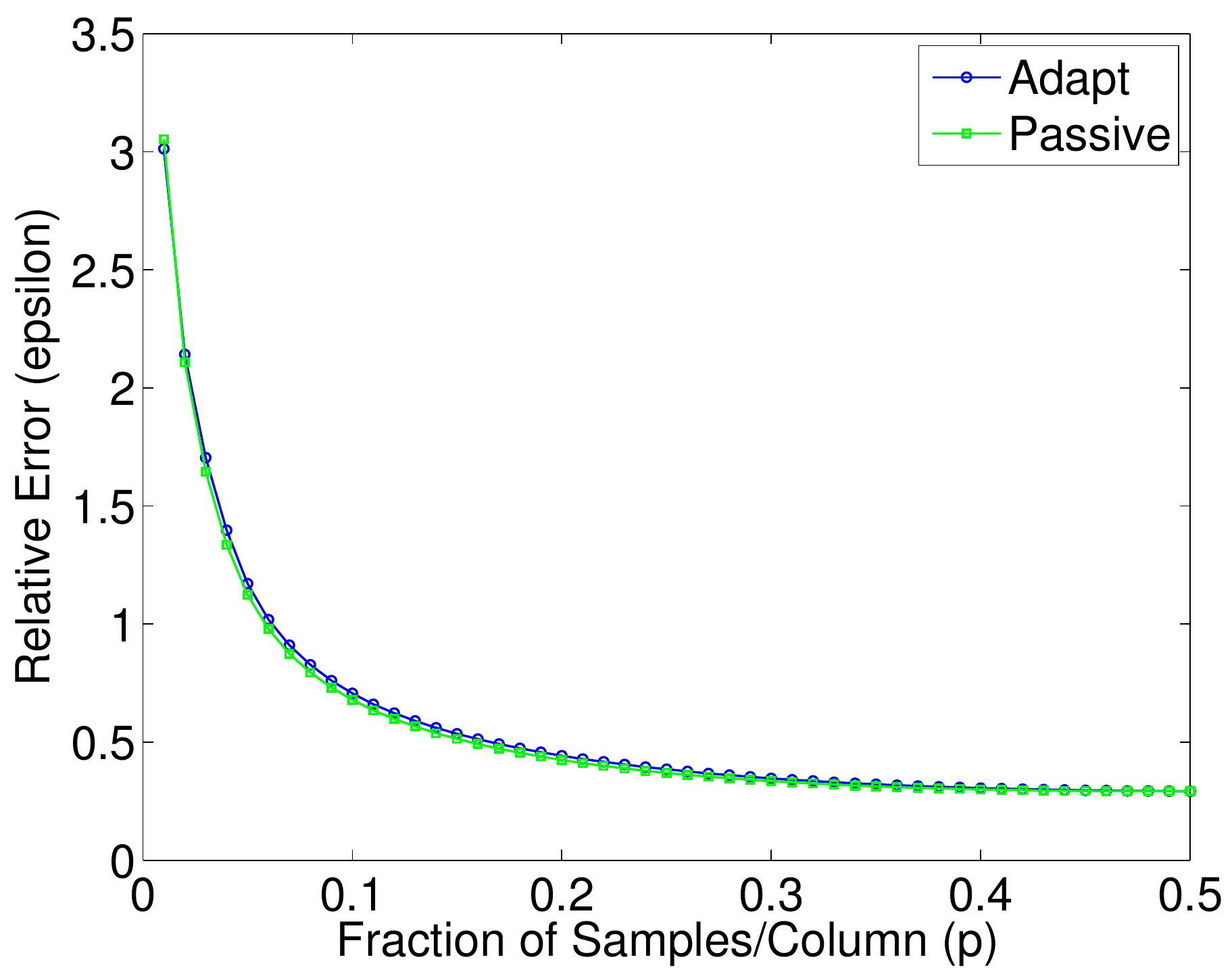}
\label{fig:approx_flat_comparison}
}\subfigure[]{
\includegraphics[scale=0.18]{./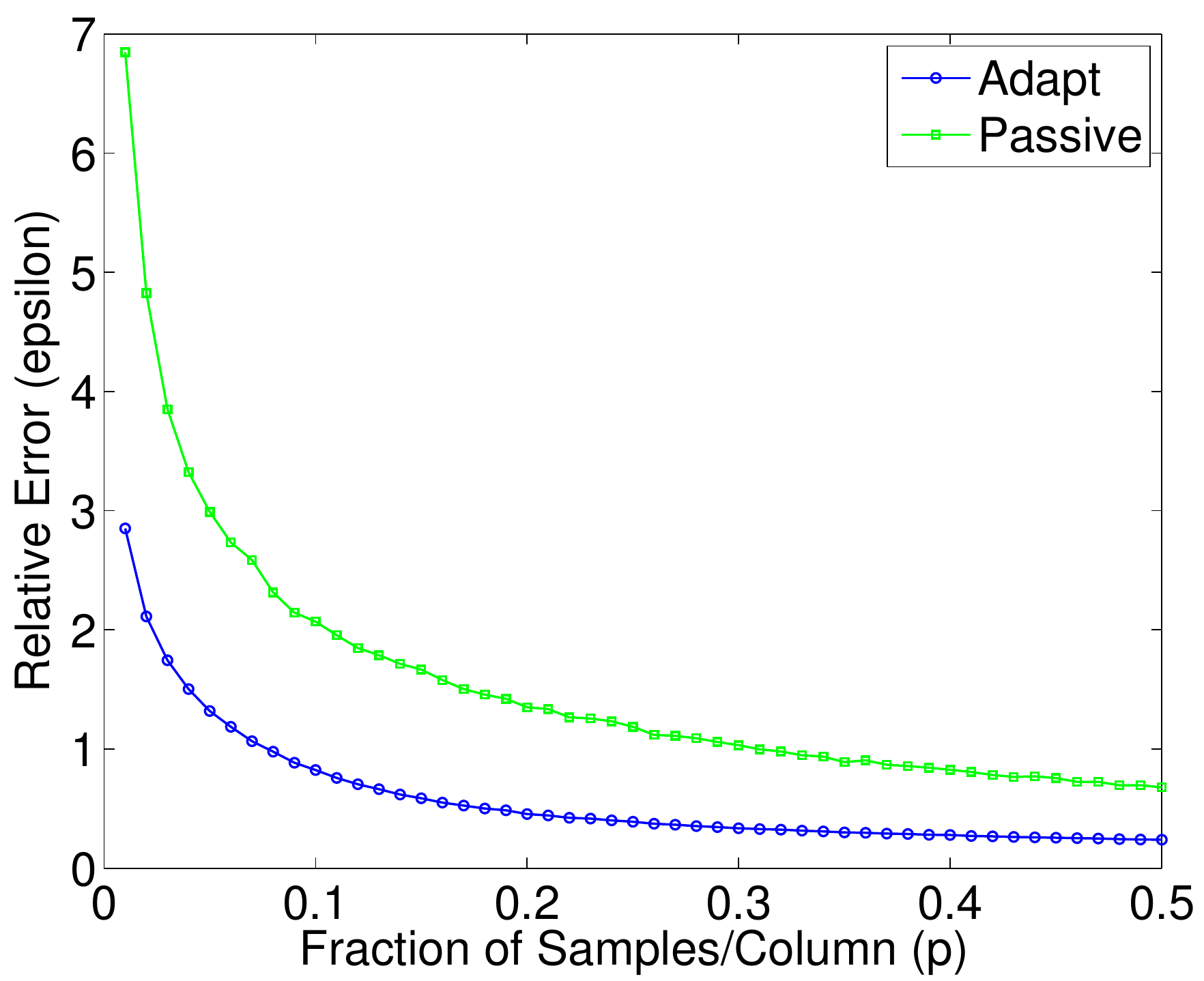}
\label{fig:approx_spiky_comparison}
}
\caption{\subref{fig:approx_p_threshold_1}: Relative error of Algorithm~\ref{alg:approx} as a function of sampling probability $p$ for different size matrices with fixed target rank $r=10$ and $\mu=1$. 
\subref{fig:approx_p_threshold_3}: The same data where the $y$-axis is instead $\sqrt{p}\epsilon$.
\subref{fig:approx_flat_comparison}: Relative error for adaptive and passive sampling on matrices with uniform column lengths (column coherence $\mu=1$ and column norms are uniform from $[0.9, 1.1]$).
\subref{fig:approx_flat_comparison}: Relative error for adaptive and passive sampling on matrices with highly nonuniform column lengths (column coherence $\mu=1$ and column norms are from a standard Log-Normal distribution). }
\label{fig:approx_simulations_2}
\end{figure}

In Figure~\ref{fig:approx_p_threshold_1}, we plot the relative error as a function of the average fraction of samples, $p$, per column for different matrix sizes. 
We rescale this data by plotting the $y$-axis in terms of $\sqrt{p}\epsilon$ (Figure~\ref{fig:approx_p_threshold_3}).
From the first plot, we see that the error quickly decays, while a smaller fraction of samples are needed for larger problems. 
In the second plot, we see that rescaling the error by $\sqrt{p}$ has the effect of flattening out all of the curves, which suggests that the relationship between $\epsilon$ and the number of samples is indeed $\epsilon\sqrt{p} \asymp 1$ or that $\epsilon \asymp \frac{1}{\sqrt{p}}$.
This phenomenon is predicted by Proposition~\ref{cor:approx_stoch}. 


In the last set of simulations, we compare our algorithm with an algorithm that first performs uniform sampling and then hard thresholds the singular values to build a rank $r$ approximation. 
In Figure~\ref{fig:approx_flat_comparison}, we use matrices with uniform column norms, and observe that both algorithms perform comparably.
However, in Figure~\ref{fig:approx_spiky_comparison}, when the column norms are highly non-uniform, we see that Algorithm~\ref{alg:approx} dramatically outperforms the passive sampling approach.
This confirms our claim that adaptive sampling leads to better approximation when the energy of the matrix is not uniformly distributed. 

\section{Proofs}
\label{sec:proofs}
In this section we provide the proofs of our main theorems. 
We defer some concentration results and some details to the appendix.

\subsection{Proof of Theorem~\ref{thm:exact_ub}}
The proof is identical to the proof by \citet{krishnamurthy2013low}, with an improved concentration of measure argument.
We reproduce some of the details here.

The main step in the proof analyzes of the test based on the projection $\|x_{t \Omega}- \Pcal_{U_\Omega}x_{t\Omega}\|_2^2$. 
Using various versions of Bernstein's inequality we are able to prove the following theorem, which builds off of \citet{balzano2010high} and \citet{krishnamurthy2013low}.
\begin{theorem}
Let $U$ be an $r$-dimensional subspace of $\RR^d$ and $y = x+v$ where $x \in U$ and $v \in U^\perp$. 
Fix $\delta > 0$ and $m \ge \max\{ \frac{8}{3} r \mu(U) \log(2d/\delta), 4 \mu(v)\log(1/\delta)\}$ and let $\Omega$ be an index set of $m$ entries sampled uniformly with replacement from $[d]$. 
With probability $\ge 1 - 4\delta$:
\begin{align}
\frac{m(1-\alpha) - r \mu(U) \frac{\beta}{1-\gamma}}{d} \|v\|_2^2 \le \|y_{\Omega} - \Pcal_{U_{\Omega}} y_{\Omega}\|_2^2 \le (1+\alpha)\frac{m}{d}\|v\|_2^2
\end{align}
where $\alpha = \sqrt{2 \frac{\mu(v)}{m}\log(1/\delta)} + \frac{2\mu(v)}{3m}\log(1/\delta), \beta = (1+2\log(1/\delta))^2$, and $\gamma = \sqrt{\frac{8 r \mu(U)}{3m} \log(2d/\delta)}$. 
\label{thm:subspace_proj}
\end{theorem}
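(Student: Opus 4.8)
The plan is to first reduce the statement to the orthogonal component $v$. Since $x \in U$, writing $x = Uc$ gives $x_\Omega = U_\Omega c$, so $x_\Omega$ lies in the column span of $U_\Omega$ and therefore $\Pcal_{U_\Omega} x_\Omega = x_\Omega$. Consequently $y_\Omega - \Pcal_{U_\Omega} y_\Omega = v_\Omega - \Pcal_{U_\Omega} v_\Omega$, so the test statistic depends only on $v$, and by the Pythagorean identity
\begin{align*}
\|y_\Omega - \Pcal_{U_\Omega} y_\Omega\|_2^2 = \|v_\Omega\|_2^2 - \|\Pcal_{U_\Omega} v_\Omega\|_2^2.
\end{align*}
The upper bound then follows immediately from an upper bound on $\|v_\Omega\|_2^2$ (dropping the nonnegative projected term), while the lower bound additionally requires an upper bound on the projected energy $\|\Pcal_{U_\Omega} v_\Omega\|_2^2$. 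The three parameters $\alpha, \beta, \gamma$ each arise from a separate Bernstein-type inequality, combined by a union bound to give the overall failure probability $4\delta$.

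For the first term, $\|v_\Omega\|_2^2 = \sum_{j=1}^m v(\Omega(j))^2$ is a sum of $m$ i.i.d. bounded random variables, each with mean $\frac{1}{d}\|v\|_2^2$ and bounded by $\|v\|_\infty^2 = \frac{\mu(v)}{d}\|v\|_2^2$. A scalar Bernstein inequality yields $(1-\alpha)\frac{m}{d}\|v\|_2^2 \le \|v_\Omega\|_2^2 \le (1+\alpha)\frac{m}{d}\|v\|_2^2$ with the stated $\alpha$ on an event of probability $\ge 1-2\delta$, where the hypothesis $m \ge 4\mu(v)\log(1/\delta)$ ensures that the deviation parameter is nontrivial.

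For the projected energy I would write $\Pcal_{U_\Omega} = U_\Omega (U_\Omega^T U_\Omega)^{-1} U_\Omega^T$, so that
\begin{align*}
\|\Pcal_{U_\Omega} v_\Omega\|_2^2 \le \|(U_\Omega^T U_\Omega)^{-1}\|_2 \, \|U_\Omega^T v_\Omega\|_2^2.
\end{align*}
Letting $u_i \in \RR^r$ denote the $i$th row of $U$, the Gram matrix $U_\Omega^T U_\Omega = \sum_{j=1}^m u_{\Omega(j)} u_{\Omega(j)}^T$ is a sum of independent rank-one PSD matrices with mean $\frac{m}{d}I_r$, and coherence bounds each summand by $\|u_i\|_2^2 = \|\Pcal_U e_i\|_2^2 \le \frac{r\mu(U)}{d}$; a matrix Chernoff bound then gives $\lambda_{\min}(U_\Omega^T U_\Omega) \ge (1-\gamma)\frac{m}{d}$, hence $\|(U_\Omega^T U_\Omega)^{-1}\|_2 \le \frac{d}{m(1-\gamma)}$, where $m \ge \frac{8}{3}r\mu(U)\log(2d/\delta)$ forces $\gamma < 1$. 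For the numerator, $U_\Omega^T v_\Omega = \sum_{j=1}^m u_{\Omega(j)} v(\Omega(j))$ is a sum of independent \emph{zero-mean} vectors, since $\mathbb{E}[U_\Omega^T v_\Omega] = \frac{m}{d}U^T v = 0$ as $v \in U^\perp$; its per-summand norm and variance are controlled by $\mu(U)$ and $\mu(v)$, and a vector Bernstein inequality gives $\|U_\Omega^T v_\Omega\|_2^2 \le \frac{m\, r\mu(U)\beta}{d^2}\|v\|_2^2$ with $\beta = (1+2\log(1/\delta))^2$. Combining the two sub-bounds yields $\|\Pcal_{U_\Omega} v_\Omega\|_2^2 \le \frac{r\mu(U)}{d}\frac{\beta}{1-\gamma}\|v\|_2^2$, which subtracted from the lower bound on $\|v_\Omega\|_2^2$ produces exactly the claimed lower bound.

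The main obstacle is obtaining the two non-scalar concentration bounds with the precise constants appearing in $\beta$ and $\gamma$: the lower bound on $\lambda_{\min}(U_\Omega^T U_\Omega)$ needs a matrix Chernoff/Bernstein inequality for sums of random rank-one projections, and the bound on $\|U_\Omega^T v_\Omega\|_2$ needs a vector-valued Bernstein inequality tuned so that squaring the deviation produces the factor $(1+2\log(1/\delta))^2$. I would relegate these two tail inequalities (which build directly on \citet{balzano2010high} and \citet{krishnamurthy2013low}) to supporting lemmas, reducing the argument above to bookkeeping. By contrast, the reduction to $v$ and the scalar Bernstein control of $\|v_\Omega\|_2^2$ are routine.
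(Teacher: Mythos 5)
Your proposal is correct and follows essentially the same route as the paper: the same decomposition $\|y_\Omega - \Pcal_{U_\Omega}y_\Omega\|_2^2 = \|v_\Omega\|_2^2 - \|\Pcal_{U_\Omega}v_\Omega\|_2^2$, the same Cauchy--Schwarz bound $\|\Pcal_{U_\Omega}v_\Omega\|_2^2 \le \|(U_\Omega^TU_\Omega)^{-1}\|_2\,\|U_\Omega^Tv_\Omega\|_2^2$, and the same three Bernstein-type lemmas (scalar for $\alpha$, vector for $\beta$, matrix Chernoff for $\gamma$) combined by a union bound accounting for $2\delta+\delta+\delta$. One minor correction: the hypothesis $m \ge 4\mu(v)\log(1/\delta)$ is actually consumed in the vector Bernstein step (it is the condition $t \le V(\max_i\|X_i\|_2)^{-1}$ under which that inequality applies), not in the scalar Bernstein control of $\|v_\Omega\|_2^2$, which needs no such restriction.
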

This result showcases much stronger concentration of measure than the result of~\citet{balzano2010high}. 
The main difference is in the definitions of $\alpha$ and $\beta$, which in their work have worse dependence on the coherence parameter $\mu(v)$. 
Some of these improvements were established by \citet{krishnamurthy2013low}, but our result further improves the dependence on $\beta$, which will play out into our stronger sample complexity guarantee for the matrix completion algorithm.
In terms of proof, we use scalar, vector, and matrix Bernstein's inequality to control the terms in the decomposition:
\begin{align*}
\|y_{\Omega} - \Pcal_{U_{\Omega}} y_{\Omega}\|_2^2 \ge \|v_{\Omega}\|_2^2 - \|(U_{\Omega}^TU_{\Omega})^{-1}\|_2 \|U_{\Omega}^Tv_{\Omega}\|_2^2.
\end{align*}
The decomposition is valid provided that $U_{\Omega}^TU_{\Omega}$ is invertible, which we will account for. 

The above result, followed by some algebraic manipulations, yields the following corollary, which we use in the analysis of the Algorithm~\ref{alg:exact_mc}:
\begin{corollary}
Suppose that $\tilde{U}$ is a subspace of $U$ and $x_t \in U$ but $x_t \notin \tilde{U}$.
Observe a set of coordinates $\Omega \subset [d]$ of $m$ entries sampled uniformly at random with replacement.
If $m \ge 32 r \mu_0 \log^2(2r/\delta)$ then with probability $\ge 1-4\delta$, $\|x_{t\Omega} - \Pcal_{\tilde{U}_{\Omega}}x_{t\Omega}\|_2 > 0$.
If $x_t \in \tilde{U}$, then conditioned on the fact that $U_{\Omega}^TU_{\Omega}$ is invertible, $\|x_{t\Omega} - \Pcal_{\tilde{U}_{\Omega}}x_{t\Omega}\|_2 = 0$ with probability 1.
\label{cor:exact_cor}
\end{corollary}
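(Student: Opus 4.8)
The plan is to deduce both statements from Theorem~\ref{thm:subspace_proj}, applied with the projection subspace taken to be $\tilde{U}$ rather than the full column space $U$. Write $\tilde{r} = \dim(\tilde{U})$ and decompose $x_t = \Pcal_{\tilde{U}} x_t + v$ with $v = \Pcal_{\tilde{U}^\perp} x_t \in \tilde{U}^\perp$. Since $x_t \in U$ but $x_t \notin \tilde{U}$, the orthogonal component is nonzero, i.e. $\|v\|_2 > 0$. Theorem~\ref{thm:subspace_proj} then yields, with probability $\ge 1 - 4\delta$, the lower bound
\begin{align*}
\|x_{t\Omega} - \Pcal_{\tilde{U}_\Omega} x_{t\Omega}\|_2^2 \ge \frac{m(1-\alpha) - \tilde{r}\,\mu(\tilde{U})\frac{\beta}{1-\gamma}}{d}\|v\|_2^2,
\end{align*}
so to prove strict positivity it suffices to show that $m \ge 32 r\mu_0\log^2(2r/\delta)$ forces the numerator to be positive.

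The key observation, which I would isolate as a short lemma, is that passing to the subspace $\tilde{U}\subseteq U$ does not inflate either coherence quantity appearing above. Because $\tilde{U}\subseteq U$ gives $\|\Pcal_{\tilde{U}} e_i\|_2^2 \le \|\Pcal_U e_i\|_2^2$ for every $i$, the dimension--coherence product collapses:
\begin{align*}
\tilde{r}\,\mu(\tilde{U}) = d\max_i \|\Pcal_{\tilde{U}} e_i\|_2^2 \le d\max_i\|\Pcal_U e_i\|_2^2 = r\,\mu(U) \le r\mu_0.
\end{align*}
Moreover, since $v \in U$ we have $v(i) = \langle \Pcal_U e_i, v\rangle$, and Cauchy--Schwarz with $\|\Pcal_U e_i\|_2^2 \le r\mu_0/d$ gives $\|v\|_\infty \le \|v\|_2\sqrt{r\mu_0/d}$, hence $\mu(v) = d\|v\|_\infty^2/\|v\|_2^2 \le r\mu_0$. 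These two bounds let me replace every occurrence of $\tilde{r}\mu(\tilde{U})$ and $\mu(v)$ by $r\mu_0$ in the parameters $\alpha$ and $\gamma$ and in the numerator, and they simultaneously verify the two hypotheses $m \ge \tfrac{8}{3}\tilde{r}\mu(\tilde{U})\log(2\tilde{r}/\delta)$ and $m \ge 4\mu(v)\log(1/\delta)$ needed to invoke Theorem~\ref{thm:subspace_proj} (note the log factor tracks the \emph{subspace} dimension $\tilde{r}\le r$, which is the source of the $\log(2r/\delta)$ rather than $\log(2d/\delta)$).

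It then remains to check the arithmetic: that $m \ge 32 r\mu_0\log^2(2r/\delta)$ implies $m(1-\alpha) > r\mu_0\,\beta/(1-\gamma)$. After the substitution, both $\alpha$ and $\gamma$ decay like $\sqrt{r\mu_0\log(2r/\delta)/m}$ and are driven below a fixed constant by the lower bound on $m$, while the dominant factor on the right is $\beta = (1+2\log(1/\delta))^2 = \Theta(\log^2(1/\delta))$. Balancing $r\mu_0\beta/(1-\gamma)$ against $m(1-\alpha)$ is precisely what produces the $\log^2$ term in the sample requirement, and I expect this constant-chasing --- controlling $\alpha$, $\gamma$, and $\beta$ together --- to be the only genuine obstacle; the rest is the coherence bookkeeping above.

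The second claim is deterministic and requires no concentration. If $x_t\in\tilde{U}$, write $x_t = \tilde{U}c$ for a coefficient vector $c$; subsampling the rows gives $x_{t\Omega} = \tilde{U}_\Omega c$, so $x_{t\Omega}$ already lies in the column span of $\tilde{U}_\Omega$ and the projection reproduces it exactly, making the residual zero. Conditioning on the invertibility of $U_\Omega^T U_\Omega$ guarantees that $\tilde{U}_\Omega$ has full column rank (its columns span a subsampled subspace of $U$), so $\Pcal_{\tilde{U}_\Omega}$ is well defined and $\|x_{t\Omega} - \Pcal_{\tilde{U}_\Omega}x_{t\Omega}\|_2 = 0$ with probability one.
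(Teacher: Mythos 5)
Your proposal follows essentially the same route as the paper's proof: apply Theorem~\ref{thm:subspace_proj} with the subspace $\tilde{U}$, reduce both coherence quantities via $\mu(v) \le r\mu_0$ for $v \in U$ and $\dim(\tilde{U})\mu(\tilde{U}) \le r\mu_0$ for $\tilde{U} \subseteq U$, verify that the stated $m$ drives $\alpha$ and $\gamma$ below fixed constants and makes $m(1-\alpha)$ dominate $r\mu_0\beta/(1-\gamma)$, and dispose of the second claim deterministically since $x_t \in \tilde{U}$ implies $x_{t\Omega} \in \operatorname{span}(\tilde{U}_\Omega)$. The only differences are cosmetic: you supply short proofs of the two coherence-monotonicity facts that the paper merely asserts, and you sketch rather than fully execute the final constant check, which the paper carries out by showing $\alpha < 1/2$, $\gamma < 1/3$, and $3r\mu_0\beta \le m$.
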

\begin{proof}
The second statement follows from the fact that if $x_t \in \tilde{U}$, then $x_{t\Omega} \in \tilde{U}_{\Omega}$, so the projection onto the orthogonal complement is identically zero.
As for the first statement, we apply Theorem~\ref{thm:subspace_proj}, noting that the conditions on $m$ are satisfied.

We now verify that the lower bound is strictly positive.
We will use the fact that any vector $v$ in $U$ has coherence $\mu(v) \le r \mu_0$ and similarly any subspace $\tilde{U} \subset U$ has $\textrm{dim}(\tilde{U}) \mu(\tilde{U}) \le r \mu_0$. 
Plugging in $m$ into the definition $\alpha, \gamma$, and using the previous facts, we see that $\alpha < 1/2$ and $\gamma < 1/3$.
We are left with:
\begin{align*}
\|x_{t\Omega}- \Pcal_{\tilde{U}_{\Omega}}x_{t\Omega}\|_2^2 \ge \frac{1}{d}\left(\frac{m}{2} - \frac{3r \mu \beta}{2}\right)
\end{align*}
and the lower bound is strictly positive whenever $3r \mu \beta \le m$. 
Plugging in the definition of $\beta$, we see that this relation is also satisfied, concluding the proof. 
\end{proof}

We are now ready to prove Theorem~\ref{thm:exact_ub}.
First notice that our estimate $U$ for the column space is always a subspace of the true column space, since we only ever add in fully observed vectors that live in the column space.
Also notice that we only resample the set $\Omega$ at most $r+1$ times, since the matrix is exactly rank $r$, and we only resample when we find a linearly independent column. 
Thus with probability $1-(r+1)\delta$, by application of Lemma~\ref{lem:gamma} from the appendix, all of the matrices $\tilde{U}_{\Omega}^T\tilde{U}_{\Omega}$ are invertible. 

When processing the $t$th column, one of two things can happen.
Either $x_t$ lives in our current estimate for the column space, in which case we know from the above corollary that with probability $1$, $\|x_{t \Omega} - \Pcal_{U_\Omega}x_{t\Omega}\|^2 = 0$.
This holds since we have already conditioned on the fact that $U_{\Omega}^TU_\Omega$ is invertible. 
When this happens we do not obtain additional samples and just need to ensure that we reconstruct $x_t$, which we will see below.
If $x_t$ does not live in $U$, then with probability $\ge 1 - 4\delta$ the estimated projection is strictly positive, in which case we fully observe the new direction $x_t$ and augment our subspace estimate.
In fact, this failure probability includes the event that $U_{\Omega}^TU_\Omega$ is not invertible. 

Since $X$ has rank at most $r$, this latter case can happen no more than $r$ times, and via a union bound, the failure probability is $\le 4r\delta + \delta$. 
Here the last factor of $\delta$ ensures that the last subsampled projection operator is well behaved.
In other words, with probability $\ge 1 - 4r\delta-\delta$, our estimate $U$ at the end of the algorithm is exactly the column space of $X$. 

The vectors that were not fully observed are recovered exactly as long as $(U_{\Omega}^TU_{\Omega})^{-1}$ is invertible.
This follows from the fact that, if $x_t \in U$, we can write $x_t = U\alpha_t$ and we have:
\[
\hat{x}_t = U (U_{\Omega}^TU_{\Omega})^{-1} U_{\Omega}^T U_{\Omega}\alpha_t = U\alpha_t = x_t
\]
We already accounted for the probability that these matrices are invertible. 
We showed above that the total failure probability is $\le 5r\delta$ and solving for $\delta$ in the sample complexity in Corollary~\ref{cor:exact_cor}, we find that:
\begin{align*}
\delta \le 2r \exp\left\{ - \sqrt{\frac{m}{32r \mu_0}}\right\},
\end{align*}
which gives the risk bound.

For the running time, per column, the dominating computational costs involve the projection $\mathcal{P}_{\tilde{U}_\Omega}$ and the reconstruction procedure. 
The projection involves several matrix multiplications and the inversion of a $r \times r$ matrix, which need not be recomputed on every iteration. 
Ignoring the matrix inversion, this procedure takes at most $O(m r)$ time per column, since the vector and the projector are subsampled to $m$-dimensions, for a total running time of $O(nmr)$. 
At most $r$ times, we must recompute $(U_{\Omega}^TU_{\Omega})^{-1}$, which takes $O(r^2m)$, contributing a factor of $O(r^3m)$ to the total running time. 
Finally, we run the Gram-Schmidt process once over the course of the algorithm, which takes $O(dr^2)$ time.

\subsection{Proof of Theorem~\ref{thm:exact_lb}}
The proof of the necessary condition in Theorem~\ref{thm:exact_lb} is based on a standard reduction-to-testing style argument. 
The high-level architecture is to consider a subset $\Xcal' \subset \Xcal$ of inputs and lower bound the Bayes risk.
Specifically, if we fix a prior $\pi$ supported on $\Xcal'$,
\begin{align*}
R^\star &= \inf_{f \in \Fcal} \inf_{q \in \Qcal} \max_{X \in \Xcal} \PP_{\Omega \sim q}[f(\Omega, X_{\Omega}) \ne X]\\
& \ge \inf_{f \in \Fcal} \inf_{q \in \Qcal} \EE_{\Omega \sim q, X \sim \pi}[\PP_{f}[f(\Omega, X_{\Omega}) \ne X]]\\
& \ge \inf_{f \in \Fcal} \min_{\Omega: |\Omega| = m} \EE_{X \sim \pi}[\PP_f[f(\Omega, X_{\Omega}) \ne X]]
\end{align*}
The first step is a standard one in information theoretic lower bounds and follows from the fact that the maximum dominates any expectation over the same set. 
The second step is referred to as Yao's Minimax Principle in the analysis of randomized algorithms, which says that one need only consider deterministic algorithms if the input is randomized. 
It is easily verified by the fact that in the second line, the inner expression is linear in $q$, so it is minimized on the boundary of the simplex, which is a deterministic choice of $\Omega$. 
We use $\PP_f$ to emphasize that $f$ can be randomized, although it will suffice to consider deterministic $f$.

Let $\pi$ be the uniform distribution over $\Xcal' \subset \Xcal$.
The minimax risk is lower bounded by:
\[
R^\star \ge 
1 - \max_{\Omega} \EE_{X \sim \pi}|\{X' \in \Xcal' | X'_\Omega  = X_\Omega\}|^{-1}
\]
since if there is more than one matrix in $\Xcal'$ that agrees with $X$ on $\Omega$, the best any estimator could do is guess. 
Notice that since $X$ is drawn uniformly, this is equivalent to considering an $f$ that deterministically picks on matrix $X' \in \Xcal'$ that agrees with the observations. 

To upper bound the second term, define $\Ucal_{\Omega} = \{X \in \Xcal' : |\{X' \in \Xcal'| X'_{\Omega} = X_{\Omega}\}| = 1\}$ which is the set of matrices that are uniquely identified by the entries $\Omega$.
Also set $\Ncal_{\Omega} = \Xcal' \setminus \Ucal_\Omega$, which is the set of matrices that are not uniquely identified by $\Omega$. 
We may write:
\begin{align*}
\max_{\Omega} \EE_{X \sim \pi}|\{X' \in \Xcal' | X'_\Omega  = X_\Omega\}|^{-1} \le 
\max_{\Omega} \frac{1}{2} + \frac{|\Ucal_\Omega|}{2|\Xcal'|}
\end{align*}
Since if $X \in \Ncal_\Omega$, there are at least two matrices that agree on those observations, so the best estimator is correct with probability no more than $1/2$. 

We now turn to constructing a set $\Xcal'$. 
Set $l = \frac{d}{r \mu_0}$.
The left singular vectors $u_1, \ldots, u_{r-1}$ will be constant on $\{1, \ldots, l\}, \{l+1, \ldots, 2l\}$ etc. while the first $r-1$ right singular vectors $v_1, \ldots, v_{r-1}$ will be the first $r-1$ standard basis elements. 
We are left with:
\begin{align*}
d - (r-1)l = d - \frac{r-1}{r}\frac{d}{\mu_0} \triangleq d c_1,
\end{align*}
coordinates where we will attempt to hide the last left singular vector.
Here we defined $c_1 = 1 - \frac{r-1}{r\mu_0}$, which is not a constant, but will ease the presentation. 
For $u_r$, we pick $l$ coordinates out of the $dc_1$ remaining, pick a sign for each and let $u_r$ have constant magnitude on those coordinates. 
There are $2^l {dc_1 \choose l}$ possible choices for this vector. 
The last right singular vector is one of the $n-r$ remaining standard basis vectors.
Notice that our choice of $l$ ensures that every matrix in this family meets the column space incoherence condition.

To upper bound $|\Ucal_{\Omega}|$ notice that since $u_r$ can have both positive and negative signs, a matrix is uniquely identified only if all of the entries corresponding to the last singular vector are observed.
Thus observations in the $t$th column only help to identify matrices whose last rank was hidden in that column.
If we use $m_t$ observations on the $t$th column, we uniquely identify $2^l{m_t \choose l}$ matrices, where ${m_t \choose l} = 0$ if $m_t < l$. 
In total we have:
\begin{align*}
|\Xcal'| = (n-r)2^l{dc_1 \choose l} \qquad \textrm{and} \qquad |\Ucal_{\Omega}| = 2^l \sum_{i=r}^{n}{m_i \choose l} 
\end{align*}

We are free to choose $m_i$ to maximize $|\Ucal_\Omega|$ subject to the constraints $m_i \le dc_1$ and $\sum_i m_i \le m$, the total sensing budget. 
Optimizing over $m_i$ is a convex maximization problem with linear constraints, and consequently the solution is on the boundary.
By symmetry, this means that that best sampling pattern is to observe columns in their entirety and devote the remaining observations to one more column. 
With $m$ observations, we can observe $\frac{m}{c_1 n}$ columns fully, leading to the bounds:
\begin{align*}
|\Ucal_\Omega| \le 2^l\lceil \frac{m}{c_1n}\rceil {n c_1 \choose l}, \qquad \textrm{and} \qquad
\frac{|\Ucal_\Omega|}{|\Xcal'|} \le \lceil \frac{m}{c_1 n}\rceil \frac{1}{n_2 -r},
\end{align*}
which, after plugging in for $c_1$, leads to the lower bound on the risk. 

\subsection{Proof of Theorem~\ref{thm:approx_adv}}
To prove the main approximation theorem, we must analyze the three phases of the algorithm.
The analysis of the first phase is fairly straightforward: we show that under the incoherence assumption, one can compute a reliable estimate of each column norm from a very small number of measurements per column. 
For the second phase, we show that by sampling according to the re-weighted distribution using the column-norm estimates, the matrix $\tilde{X}$ is close to $X$ in spectral norm.
We then translate this spectral norm guarantee into a approximation guarantee for $\hat{X} = \tilde{X}_r$. 

Let us start with this translation. 
We use a lemma of \cite{achlioptas2007fast}.
\begin{lemma}[\cite{achlioptas2007fast}]
Let $A$ and $N$ be any matrices and write $\hat{A} = A + N$.
Then:
\begin{align*}
\|A - \hat{A}_k\|_2 &\le \|A - A_k\|_2 + 2 \|N_k\|_2\\
\|A - \hat{A}_k\|_F &\le \|A - A_k\|_F + \|N_k\|_F + 2 \sqrt{\|N_k\|_F\|A_k\|_F}
\end{align*}
\end{lemma}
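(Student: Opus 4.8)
The statement splits into a spectral-norm bound and a Frobenius-norm bound, which I would treat separately. Throughout, $A_k$, $\hat A_k$, $N_k$ denote the best rank-$k$ approximations (truncated SVDs) of $A$, $\hat A$, $N$, and I recall $\|A-A_k\|_2 = \sigma_{k+1}(A)$, $\|A-A_k\|_F^2 = \sum_{i>k}\sigma_i(A)^2$, and $\|N\|_2 = \sigma_1(N) = \|N_k\|_2$.

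For the spectral bound the argument is essentially immediate: by the triangle inequality and optimality of $\hat A_k$ in spectral norm,
\[
\|A - \hat A_k\|_2 \le \|A - \hat A\|_2 + \|\hat A - \hat A_k\|_2 = \|N\|_2 + \sigma_{k+1}(\hat A).
\]
Weyl's inequality gives $\sigma_{k+1}(\hat A) \le \sigma_{k+1}(A) + \sigma_1(N) = \|A - A_k\|_2 + \|N\|_2$, and since $\|N\|_2 = \|N_k\|_2$, adding the two estimates yields $\|A - \hat A_k\|_2 \le \|A - A_k\|_2 + 2\|N_k\|_2$.

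For the Frobenius bound I would let $P$ be the orthogonal projection onto the span of the top $k$ left singular vectors of $\hat A$, so that $\hat A_k = P\hat A$. The key step is the decomposition
\[
A - \hat A_k = A - P(A+N) = (I-P)A - PN,
\]
whose two summands have column spaces in $\mathrm{range}(I-P)$ and $\mathrm{range}(P)$ and are therefore orthogonal, giving the Pythagorean identity $\|A - \hat A_k\|_F^2 = \|(I-P)A\|_F^2 + \|PN\|_F^2$. The noise term is controlled by the variational (Ky Fan) characterization: since $P$ is a rank-$k$ projection, $\|PN\|_F^2 = \mathrm{tr}(P N N^T)$ is at most the sum of the top $k$ eigenvalues of $NN^T$, namely $\|N_k\|_F^2$.

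The term $\|(I-P)A\|_F^2 = \|A\|_F^2 - \|PA\|_F^2$ is the crux, and since $\|A - A_k\|_F^2 = \|A\|_F^2 - \|A_k\|_F^2$ it suffices to bound $\|A_k\|_F^2 - \|PA\|_F^2 \le 4\|N_k\|_F\|A_k\|_F$. I would extract this from the optimality of $P$ for $\hat A$: letting $Q$ be the top-$k$ left singular projection of $A$, maximality gives $\|P\hat A\|_F \ge \|Q\hat A\|_F$; applying the triangle inequality on both sides together with $\|PN\|_F, \|QN\|_F \le \|N_k\|_F$ yields $\|PA\|_F \ge \|A_k\|_F - 2\|N_k\|_F$. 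Factoring the difference of squares and using $\|PA\|_F \le \|A_k\|_F$ then gives $\|A_k\|_F^2 - \|PA\|_F^2 \le 4\|N_k\|_F\|A_k\|_F$. Substituting both estimates produces $\|A-\hat A_k\|_F^2 \le \|A-A_k\|_F^2 + \|N_k\|_F^2 + 4\|N_k\|_F\|A_k\|_F$, and taking square roots finishes, since the target right-hand side squares to at least this quantity (all cross terms being nonnegative). \emph{The main obstacle is precisely this last paragraph:} the projection $P$ is chosen to fit $\hat A$ rather than $A$, so bounding how much of $A$'s energy it misses requires the two-sided comparison against the optimal $Q$; the spectral bound and the noise term are routine.
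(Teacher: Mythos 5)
Your proof is correct. Note that the paper itself does not prove this lemma -- it is imported verbatim from \citet{achlioptas2007fast} -- so there is no in-paper argument to compare against; your derivation (Weyl plus triangle inequality for the spectral bound; the orthogonal decomposition $A-\hat A_k=(I-P)A-PN$, Ky Fan's maximum principle for $\|PN\|_F^2$, and the two-sided comparison of $P$ against the optimal projector $Q$ for the Frobenius bound) is essentially the standard argument from that reference and all steps check out, including the final passage from $\|A-\hat A_k\|_F^2\le\|A-A_k\|_F^2+\|N_k\|_F^2+4\|N_k\|_F\|A_k\|_F$ to the stated square-root form by discarding the nonnegative cross terms.
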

The lemma states that if $\hat{A} - A$ is small, then the top $k$ ranks of $\hat{A}$ is nearly as good an approximation to $A$ as is the top $k$ ranks of $A$ itself. 
Notice that all of the error terms only depend on rank-$k$ matrices. 
We will use this lemma with $\tilde{X}$ and $X$ and of course with the target rank as $r$. 
We will soon show that $\|X - \tilde{X}\|_2 \le \epsilon \|X\|_F$, which implies:
\begin{align}
\|X - \hat{X}\|_F &\le \|X - X_r\| + \|(X - \tilde{X})_r\|_F + 2 \sqrt{\|(X - \tilde{X})_r\|_F \|X_r\|_F} \nonumber\\
& \le \|X - X_r\| + \sqrt{r}\|X - \tilde{X}\|_2 + 2 \sqrt{\sqrt{r}\|X - \tilde{X}\|_2 \|X\|_F} \nonumber\\
& \le \|X - X_r\| + \|X\|_F\left( \sqrt{r} \epsilon + 2 r^{1/4}\epsilon^{1/2}\right) \label{eq:approx_err_bd}
\end{align}
So if we can obtain a bound on $\|X - \tilde{X}\|_2$ of that form, we will have proved the theorem. 

As for Propositions~\ref{cor:approx_exact_cor} and~\ref{cor:approx_stoch}, the translation uses the first inequality of \citet{achlioptas2007fast}. 
If $X$ is rank $r$, the matrix $\hat{X} - X$ has rank at most $2r$, which means that:
\begin{align*}
\|\hat{X} - X\|_F \le \sqrt{2r}\|\hat{X} - X\|_2 \le 2\sqrt{2r}\|\tilde{X} - X\|_2 \le 2\sqrt{2r}\epsilon \|X\|_F
\end{align*}
For the second proposition, we first bound $\|\hat{X} - M\|_2$ and then use the same argument.
\begin{align*}
\|\hat{X} - M\|_2 &\le \|\hat{X} - X\|_2 + \|R\|_2 \le \|X - X_r\|_2 + 2\epsilon \|X\|_F + \|R\|_2\\
& \le 2\|R\|_2 + 2 \epsilon (\|M\|_F + \|R_{\Omega}\|_F).
\end{align*}
To arrive at the second line, we use the fact that $X_r$ is the best rank $r$ approximation to $X$, so $\|X - X_r\|_2 \le \|X - M\|_2 = \|R\|_2$. 
We also use the triangle inequality on the term $\|X\|_F$, but use the fact that since the algorithm never looked at $X$ on $\Omega^C$ it is fair to set $R_{\Omega^C} = 0$. 

Let us now turn to the first phase. 
In our analysis of the Algorithm~\ref{alg:exact_mc}, we proved that the norm of an incoherent vector can be approximated by subsampling. 
Specifically, Lemma~\ref{lem:alpha} shows that with high probability, the estimates $\hat{c}_t$ once appropriately rescaled are trapped between $(1-\alpha)c_t$ and $(1+\alpha)c_t$ where $\alpha= \sqrt{ 2\mu/m_1 \log(n/\delta)} + \frac{2\mu}{3m_1} \log(n/\delta)$. 
The same is of course true for $\hat{f}$.
Setting $m_1 \ge 32 \mu \log(n/\delta)$ we find that $\alpha \le 1/2$, meaning that by using in total $32 n \mu\log(n/\delta)$ samples in the first phase, we approximate the target sampling distribution to within a multiplicative factor of $1/2$ with probability $\ge 1- \delta$.

For the second pass, we must show that $\tilde{X}$ is close to $X$ in spectral norm.
Some calculations, that we defer to the appendix, give the following lemma:
\begin{lemma}
\label{lem:mb_app}
Provided that $(1-\alpha)c_t \le \frac{d}{m_1} \hat{c}_t \le (1+\alpha)c_t$ and $(1-\alpha)f \le \frac{d}{m_1}\hat{f} \le (1+\alpha)f$, with probability $\ge 1-\delta$:
\begin{align*}
\|\tilde{X} - X\|_2 \le \|X\|_F \sqrt{\frac{1+\alpha}{1-\alpha}}\left(\sqrt{\frac{4}{m_2}\max\left(\frac{d}{n}, \mu\right)\log\left(\frac{d+n}{\delta}\right)} + \frac{4}{3}\sqrt{\frac{d\mu}{m_2 n}}\log\left(\frac{d+n}{\delta}\right)\right)
\end{align*}
\end{lemma}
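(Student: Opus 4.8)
The plan is to condition on the favourable first-pass event of Lemma~\ref{lem:alpha} (i.e.\ the hypothesis of the present lemma), so that the allocation $m_{2,t} = m_2 n \hat{c}_t/\hat{f}$ is deterministic and satisfies $m_{2,t} \asymp m_2 n c_t / f$ up to the multiplicative factors $(1\pm\alpha)$, where $c_t = \|x_t\|_2^2$ and $f = \|X\|_F^2$. Writing the rescaled zero-filled column as an average of the unbiased single-coordinate estimators $d\, x_t(j) e_j$ (with $j$ uniform on $[d]$), I would express the error as a sum of independent, mean-zero, rank-one matrices
\[
\tilde{X} - X = \sum_{t=1}^n \sum_{k=1}^{m_{2,t}} Z_{t,k}, \qquad Z_{t,k} = \frac{1}{m_{2,t}}\left(d\, x_t(j_{t,k}) e_{j_{t,k}} - x_t\right) e_t^T,
\]
and then apply the matrix Bernstein inequality. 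This requires two estimates: a uniform bound $L$ on $\|Z_{t,k}\|_2$, and a bound $\sigma^2$ on the matrix variance $\max\{\|\sum_{t,k} \EE[Z_{t,k}Z_{t,k}^T]\|_2,\, \|\sum_{t,k} \EE[Z_{t,k}^T Z_{t,k}]\|_2\}$.

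For the variance, a direct computation gives $\sum_k \EE[Z_{t,k}Z_{t,k}^T] = \frac{1}{m_{2,t}}\big(d\,\mathrm{diag}(x_t(i)^2) - x_t x_t^T\big)$ and $\sum_k \EE[Z_{t,k}^T Z_{t,k}] = \frac{(d-1)c_t}{m_{2,t}} e_t e_t^T$. Summing the first over $t$ yields a diagonal $d\times d$ matrix whose $i$th entry is $\sum_t \frac{d}{m_{2,t}} x_t(i)^2$; here the crucial cancellation occurs, since $m_{2,t} \propto c_t$ converts each term into $\frac{d f}{m_2 n}\cdot \frac{x_t(i)^2}{c_t}$, and column incoherence bounds $x_t(i)^2/c_t \le \mu/d$, so the total is at most $\frac{\mu}{m_2}\|X\|_F^2$ \emph{independently} of how the energy is distributed across columns. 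The second sum is diagonal in the $e_t$ basis with entries $\frac{(d-1)c_t}{m_{2,t}}\approx \frac{d}{n}\frac{\|X\|_F^2}{m_2}$. Taking the maximum gives $\sigma^2 \lesssim \frac{1}{m_2}\max\{\mu, d/n\}\|X\|_F^2$, which furnishes the leading $\sqrt{\log}$ term of the stated bound.

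For the spectral bound on a single summand, the dominant piece is $\frac{d}{m_{2,t}}|x_t(j)| \le \frac{d}{m_{2,t}}\|x_t\|_\infty \le \frac{d}{m_{2,t}}\sqrt{\mu c_t/d}$. Substituting $m_{2,t}\asymp m_2 n c_t/f$ leaves a factor $1/\sqrt{c_t}$, which naively blows up on low-energy columns. I expect this to be the main obstacle, and I would resolve it by exploiting the very same proportional allocation: any column actually sampled has $m_{2,t}\ge 1$, hence $c_t \gtrsim f/(m_2 n)$, and this lower bound on $c_t$ exactly cancels the offending $1/\sqrt{c_t}$ to give $L \lesssim \sqrt{d\mu/(m_2 n)}\,\|X\|_F$. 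The finitely many columns receiving fewer than one sample contribute a block of orthogonal columns to $\tilde{X} - X$ of spectral norm at most $\max_t \sqrt{c_t}\le \|X\|_F/\sqrt{m_2 n}$, which is lower order and can be absorbed. Feeding $L$ and $\sigma^2$ into matrix Bernstein, and propagating the $(1\pm\alpha)$ first-pass errors through both quantities, produces the $\sqrt{\tfrac{1+\alpha}{1-\alpha}}$ prefactor and the two terms in the statement. A secondary bookkeeping point is the slight bias induced by with-replacement sampling in the definition of $\Rcal_\Omega$, which I would control by working with the distinct-coordinate (count-rescaled) representation so that the summands $Z_{t,k}$ are exactly mean-zero.
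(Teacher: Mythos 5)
Your proposal follows essentially the same route as the paper: write $\tilde{X}-X$ as a sum of independent mean-zero rank-one matrices, apply the rectangular matrix Bernstein inequality, and use the proportional allocation $m_{2,t}\propto \hat{c}_t$ together with column incoherence to make both variance proxies ($\max\{d/n,\mu\}\|X\|_F^2/m_2$) and the uniform bound ($\sqrt{d\mu/(m_2 n)}\,\|X\|_F$) independent of how energy is spread across columns; the paper simply groups all samples of a column into one summand $y_te_t^T - x_te_t^T$, which sidesteps the $1/\sqrt{c_t}$ wrinkle your finer per-sample decomposition creates, but your resolution via $m_{2,t}\ge 1 \Rightarrow c_t\gtrsim f/(m_2 n)$ recovers the same $L$. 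One small correction: the block of unsampled columns contributes $\sum_{t\in S}x_te_t^T$, whose spectral norm is controlled by $\bigl(\sum_{t\in S}\|x_t\|_2^2\bigr)^{1/2}\lesssim \|X\|_F/\sqrt{m_2}$ rather than by $\max_t\|x_t\|_2$ (the $x_t$ need not be mutually orthogonal), but this corrected bound is still dominated by the leading term, so the argument goes through.
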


The adaptive sampling procedure has a dramatic effect on the bound in Lemma~\ref{lem:mb_app}.
If one sampled uniformly across the columns, then both terms grows with the squared norm of the largest column rather than with the average squared norms, which is much weaker when the energy of the matrix is concentrated on a few columns.
This is precisely when the row space is coherent.

To wrap up, recall that $1 \le \mu \le d$ and $n \ge d$. 
Setting $m_1 \ge 32 \mu \log(n/\delta)$ so that $\alpha \le 1/2$, the bound in Lemma~\ref{lem:mb_app} is dominated by:
\[
\|\tilde{X} - X\|_2 \le \|X\|_F\frac{10}{\sqrt{3}} \sqrt{\frac{\mu}{m_2}}\log\left(\frac{d+n}{\delta}\right).
\]
Returning to Equation~\ref{eq:approx_err_bd} we can now substitute in for $\epsilon$ and conclude the proof. 

\section{Discussion}
\label{sec:discussion}
This paper considers the two related problems of low rank matrix completion and matrix approximation. 
In both problems, we show how to use adaptive sampling to overcome uniformity assumptions that have pervaded the literature.
Our algorithms focus measurements on interesting columns (in the former, the columns that contain new directions and in the latter, the high energy columns) and have performance guarantees that are significantly better than any known passive algorithms in the absence of uniformity.
Moreover, they are competitive with state-of-the-art passive algorithms in the presence of uniformity.
Our algorithms are conceptually simple, easy to implement, and fairly scalable.

There are several interesting directions for future work and we mention two here. 
First, while we did discuss a lower bound on adaptive algorithms for matrix completion, we do not have a lower bound on the performance of adaptive algorithms for the matrix approximation problem.
Such a bound would gives us a better understanding on the fundamental limits of the matrix approximation problem.
More broadly, we are only beginning to understand the power of adaptive sampling and active learning in unsupervised settings and it would be interesting, both theoretically and practically, to develop this line of work further.


\section*{Acknowledgements}
This research is supported in part by NSF under grants IIS-1116458 and CAREER award IIS-1252412. 
AK is supported in part by an NSF Graduate Research Fellowship.

\bibliographystyle{plainnat}
\bibliography{mc}

\appendix
\section{Proof of Theorem~\ref{thm:subspace_proj}}
For completeness we provide the entire proof of Theorem~\ref{thm:subspace_proj} although apart from the improved concentration bounds, the proof is similar to that of Balzano \emph{et al.}~\cite{balzano2010high}.

We begin with the decomposition:
\begin{align}
\|y_\Omega - \Pcal_{U_\Omega}y_{\Omega}\|_2^2 = \|v_{\Omega}\|_2^2 - v_{\Omega}^TU_{\Omega}(U_\Omega^TU_\Omega)^{-1}U_\Omega^Tv_\Omega.
\end{align}
Next, let $W_\Omega^TW_\Omega = (U_\Omega^TU_\Omega)^{-1}$, which is valid provided that $U_\Omega^TU_\Omega$ is invertible (which we will subsequently ensure).
We have:
\begin{align*}
v_\Omega^TU_\Omega(U_\Omega^TU_\Omega)^{-1}U_\Omega^Tv_\Omega = \|W_\Omega U_\Omega^Tv_\Omega\|_2^2 \le \|W_\Omega|_2^2\|U_\Omega^Tv_\Omega\|_2^2 = \|(U_\Omega^TU_\Omega)^{-1}\| \| U_\Omega^Tv_\Omega\|_2^2,
\end{align*}
which means that:
\begin{align}
\|v_{\Omega}\|_2^2 - \|(U_\Omega^TU_\Omega)^{-1}\|\|U_\Omega^Tv_\Omega\|^2 \le \|y_\Omega - \Pcal_{U_\Omega}y_{\Omega}\|_2^2 \le \|v_{\Omega}\|_2^2.
\end{align}
The theorem now follows from three lemmas, which control the quantities in the above inequalities. 
The first lemma is identical to the one in Krishnamurthy and Singh~\cite{krishnamurthy2013low} while the third is from Balzano \emph{et al.}~\cite{balzano2010high}.
The second one improves on both of the similar results from those to works.
\begin{lemma}
With the same notations as in Theorem~\ref{thm:subspace_proj}, with probability $\ge 1 - 2\delta$:
\begin{align}
(1-\alpha)\frac{m}{d}\|v\|_2^2 \le \|v_\Omega\|_2^2 \le (1+\alpha) \frac{m}{d}\|v\|_2^2
\end{align}
\label{lem:alpha}
\end{lemma}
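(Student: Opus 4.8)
The plan is to recognize $\|v_\Omega\|_2^2$ as a sum of $m$ i.i.d.\ bounded random variables and apply the scalar Bernstein inequality. Writing $\Omega = (\Omega(1), \ldots, \Omega(m))$ with each coordinate drawn uniformly from $[d]$, we have $\|v_\Omega\|_2^2 = \sum_{j=1}^m Z_j$ where $Z_j = v(\Omega(j))^2$. Because the draws are uniform with replacement, the $Z_j$ are i.i.d.\ with $\EE Z_j = \frac{1}{d}\sum_{i=1}^d v(i)^2 = \frac{1}{d}\|v\|_2^2$, so that $\EE \|v_\Omega\|_2^2 = \frac{m}{d}\|v\|_2^2$, which is exactly the center of the claimed interval. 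Thus the lemma reduces to a relative-error concentration statement for this i.i.d.\ sum.

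First I would record the two quantities Bernstein requires, expressing both through the vector coherence $\mu(v) = d\|v\|_\infty^2/\|v\|_2^2$. For the range, $0 \le Z_j \le \|v\|_\infty^2 = \mu(v)\|v\|_2^2/d$, so the centered variables obey $|Z_j - \EE Z_j| \le b := \mu(v)\|v\|_2^2/d$. For the variance, $\textrm{Var}(Z_j) \le \EE Z_j^2 = \frac{1}{d}\sum_i v(i)^4 \le \|v\|_\infty^2 \cdot \frac{1}{d}\|v\|_2^2 = \mu(v)\|v\|_2^4/d^2$, whence the total variance is $\sigma^2 := \sum_{j=1}^m \textrm{Var}(Z_j) \le m\mu(v)\|v\|_2^4/d^2$.

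Then I would invoke the inverted form of Bernstein's inequality: starting from $\PP(|S - \EE S| \ge t) \le 2\exp\!\big(-\tfrac{t^2}{2\sigma^2 + 2bt/3}\big)$, solving the resulting quadratic in $t$ at confidence $2\delta$ and bounding via $\sqrt{a+b}\le\sqrt{a}+\sqrt{b}$ yields, with probability $\ge 1-2\delta$,
\[
\Big|\,\|v_\Omega\|_2^2 - \tfrac{m}{d}\|v\|_2^2\,\Big| \le \sqrt{2\sigma^2\log(1/\delta)} + \tfrac{2}{3}b\log(1/\delta).
\]
Substituting the bounds on $\sigma^2$ and $b$ and dividing through by $\frac{m}{d}\|v\|_2^2$ collapses the right-hand side to a relative deviation of exactly
\[
\sqrt{\frac{2\mu(v)}{m}\log(1/\delta)} + \frac{2\mu(v)}{3m}\log(1/\delta) = \alpha,
\]
which is precisely the two-sided bound asserted by the lemma.

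I do not expect any genuine obstacle; the content is a careful application of Bernstein with the correct range and variance estimates. The only points needing care are (i) translating $\|v\|_\infty^2$ and the fourth-moment bound cleanly into $\mu(v)$, and (ii) matching the constants $2$ and $\tfrac{2}{3}$ appearing in $\alpha$ to the particular normalization of Bernstein's inequality. I would therefore fix that normalization at the outset, so the constants fall out without adjustment.
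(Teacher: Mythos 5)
Your proposal is correct and follows essentially the same route as the paper: decompose $\|v_\Omega\|_2^2$ into an i.i.d.\ sum, bound the range by $\|v\|_\infty^2$ and the total variance by $\tfrac{m}{d}\|v\|_\infty^2\|v\|_2^2$, and apply scalar Bernstein, with $\mu(v)$ entering exactly as you describe. The only cosmetic difference is that you invert the tail bound to solve for the deviation at confidence $2\delta$, whereas the paper substitutes $t=\alpha\tfrac{m}{d}\|v\|_2^2$ and verifies the probability directly; these are equivalent.
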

\begin{proof}
The proof is an application of Bernstein's inequality (Theorem~\ref{prop:scalar_bernstein}).
Let $\Omega(i)$ denote the $i$th coordinate in the sample and let $X_i = v_{\Omega(i)}^2 - \frac{1}{d}\|v\|_2^2$ so that $\sum_{i=1}^m X_i = \|v_\Omega\|_2^2 - \frac{m}{d}\|v\|_2^2$.
The variance and absolute bounds are:
\[
\sigma^2 = \sum_{i=1}^m \EE X_i^2 \le \frac{m}{n}\sum_{i=1}^n v_i^4 \le \frac{m}{n}\|v\|_{\infty}^2 \|v\|_2^2, \qquad R = \max \|X_i\| \le \|v\|_{\infty}^2.
\]
Bernstein's Inequality then shows that:
\[
\PP\left( \left| \sum_{i=1}^m X_i\right| \ge t \right) \le 2 \exp\left(\frac{-t^2}{2 \|v\|_{\infty}^2 (\frac{m}{d}\|v\|_2^2 + \frac{1}{3}t}\right).
\]
Setting $t = \alpha \frac{m}{d}\|v\|_2^2$ and using the definition $\mu(v) = d \|v\|_{\infty}^2/\|v\|_2^2$ this bound becomes:
\[
\PP\left( \left| \sum_{i=1}^m X_i\right| \ge \alpha\frac{m}{d}\|v\|_2^2 \right) \le 2 \exp\left(\frac{-\alpha^2}{2\mu(v) (1+\alpha/3)}\right)
\]
And plugging in the definition of $\alpha$ ensures that the probability is upper bounded by $2\delta$.
\end{proof}

\begin{lemma}
With the same notation as Theorem~\ref{thm:subspace_proj} and provided that $m \ge 4 \mu(v) \log(1/\delta)$, with probability at least $1-\delta$:
\begin{align}
\|U_{\Omega}^Tv_\Omega\|_2^2 \le \beta \frac{m}{d}\frac{r \mu(U)}{d}\|v\|_2^2
\end{align}
\label{lem:beta}
\end{lemma}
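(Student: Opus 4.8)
The plan is to recognize $U_\Omega^T v_\Omega$ as a sum of $m$ independent, identically distributed, \emph{mean-zero} random vectors and then invoke the vector Bernstein inequality. Writing $\Omega(j)$ for the $j$th sampled coordinate, the definition of the subsampling operation gives $U_\Omega^T v_\Omega = \sum_{j=1}^m v(\Omega(j))\, U^T e_{\Omega(j)}$, so I set $z_j = v(\Omega(j))\, U^T e_{\Omega(j)} \in \RR^r$, which are i.i.d. since each $\Omega(j)$ is drawn uniformly from $[d]$. The key structural observation is that these summands have zero mean: $\EE z_j = \frac{1}{d}\sum_{i=1}^d v(i)\, U^T e_i = \frac{1}{d} U^T v = 0$, because $v \in U^\perp$. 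This is what lets me apply a mean-zero concentration bound directly, with no nuisance correction term, and it is the crux of the improvement over the analogous estimate in Balzano \emph{et al.}

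Next I would compute the two parameters feeding into vector Bernstein, the almost-sure magnitude bound and the variance proxy, both controlled through the coherence definitions. Since $\|U^T e_i\|_2 = \|\Pcal_U e_i\|_2 \le \sqrt{r\mu(U)/d}$ and $|v(i)| \le \|v\|_\infty = \sqrt{\mu(v)/d}\,\|v\|_2$, every summand obeys $\|z_j\|_2 \le R := \frac{1}{d}\sqrt{r\mu(U)\mu(v)}\,\|v\|_2$. For the variance, $\EE\|z_j\|_2^2 = \frac{1}{d}\sum_i v(i)^2 \|\Pcal_U e_i\|_2^2 \le \frac{r\mu(U)}{d^2}\|v\|_2^2$, so that $\sum_{j=1}^m \EE\|z_j\|_2^2 \le V := \frac{m}{d}\frac{r\mu(U)}{d}\|v\|_2^2$, which is exactly the leading factor appearing in the claim.

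Applying the vector Bernstein inequality to $\sum_j z_j$ then yields a tail bound of the form $\PP(\|U_\Omega^T v_\Omega\|_2 \ge t) \le \exp(-\Theta(t^2/V))$ in the sub-Gaussian regime $t \lesssim V/R$. The role of the hypothesis $m \ge 4\mu(v)\log(1/\delta)$ is precisely to guarantee this regime: it forces $R/\sqrt{V} = \sqrt{\mu(v)/m}$ to be small (at most $\tfrac12(\log(1/\delta))^{-1/2}$), so the threshold $t = \sqrt{\beta V} = (1+2\log(1/\delta))\sqrt{V}$ lies where the Gaussian part of Bernstein governs the deviation and the linear-in-$R$ correction is negligible. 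Choosing $t = \sqrt{\beta V}$ drives the right-hand side below $\delta$, and squaring gives $\|U_\Omega^T v_\Omega\|_2^2 \le \beta V = \beta \frac{m}{d}\frac{r\mu(U)}{d}\|v\|_2^2$, as claimed.

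The main obstacle is obtaining the sharp form of $\beta$, that is, pushing the entire $\mu(v)$-dependence into the sample-size condition rather than into the deviation factor; this is where the improvement over Balzano \emph{et al.} lives. It requires both the mean-zero observation above, which removes a term that would otherwise inflate the estimate, and a careful accounting of the Bernstein constants so that $m \ge 4\mu(v)\log(1/\delta)$ is strong enough to absorb the sub-exponential correction and leave the clean quadratic $\beta = (1+2\log(1/\delta))^2$. Everything else is a routine substitution of the coherence bounds into the inequality.
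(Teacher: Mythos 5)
Your proposal is correct and follows essentially the same route as the paper's proof: the same decomposition of $U_\Omega^T v_\Omega$ into i.i.d.\ summands $v(\Omega(j))\,U^T e_{\Omega(j)}$ that are mean-zero because $v \in U^\perp$, the same coherence-based bounds on the variance proxy $V$ and the almost-sure bound $R$, and the same application of the vector Bernstein inequality with the hypothesis $m \ge 4\mu(v)\log(1/\delta)$ serving exactly to place the chosen deviation inside the inequality's admissible range $t \le V/R$. The only cosmetic difference is that the direct computation yields $\bigl(1+2\sqrt{\log(1/\delta)}\bigr)^2 V$, which is absorbed into the stated $\beta = (1+2\log(1/\delta))^2$ for $\delta \le 1/e$, a looseness already present in the paper.
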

\begin{proof}
The proof is an application of the vector version of Bernstein's inequality (Proposition~\ref{prop:vector_bernstein}. 
Let $u_i \in \RR^r$ denote the $i$th row of an orthonormal basis for $U$ and set $X_i = u_{\Omega(i)}v_{\Omega(i)}$. 
Since $v \in U^\perp$, the $X_i$s are centered so we are left to compute the variance:
\begin{align*}
\sum_{i=1}^m \EE \|X_i\|^2 = \frac{m}{d}\sum_{j=1}^d\|u_j v_j\|^2 \le \frac{m}{d}\frac{r \mu(U)}{d}\|v\|_2^2 = V
\end{align*}
Applying Proposition~\ref{prop:vector_bernstein} and re-arranging, we have that with probability at least $1-\delta$:
\[
\|U_{\Omega}^Tv_{\Omega}\|_2 \le \sqrt{V} + \sqrt{4 V \log(1/\delta)} = \sqrt{\frac{m}{d} \frac{r \mu}{d}}\|v\|_2\left(1 + 2 \sqrt{\log(1/\delta)}\right)
\]
As long as:
\[
t = \sqrt{4 V \log(1/\delta)} \le V (\max_i \|X_i\|)^{-1}
\]
Since $\max_i \|X_i\| \le \|v\|_{\infty} \sqrt{r \mu/d}$ and using the incoherence assumption on $v$ this condition translates to $m \ge 4 \mu(v) \log(1/\delta)$.
Squaring the above inequality proves the lemma. 
\end{proof}

\begin{lemma}[\cite{balzano2010high}]
Let $\delta > 0$ and $m \ge \frac{8}{3}r \mu(U)\log(2r/\delta)$. Then
\begin{align}
\|(U_{\Omega}^TU_\Omega)^{-1}\|_2 \le \frac{d}{(1-\gamma)m}
\end{align}
with probability at least $1-\delta$ provided that $\gamma < 1$. 
In particular $U_\Omega^TU_\Omega$ is invertible.
\label{lem:gamma}
\end{lemma}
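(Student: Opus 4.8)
The plan is to view $U_\Omega^T U_\Omega$ as a sum of independent rank-one positive semidefinite matrices and apply a matrix Chernoff (lower-tail) bound to control its smallest eigenvalue from below. Since $\|(U_\Omega^T U_\Omega)^{-1}\|_2 = \lambda_{\min}(U_\Omega^T U_\Omega)^{-1}$ whenever the matrix is nonsingular, any positive lower bound on $\lambda_{\min}$ simultaneously certifies invertibility and yields the claimed operator-norm bound. This is essentially the argument of \cite{balzano2010high}, so I would reproduce it at the level of identifying the right mean and per-summand norm and then quoting the concentration inequality.

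Concretely, let $u_j \in \RR^r$ denote the $j$th row of a fixed orthonormal basis for $U$, so that $U_\Omega^T U_\Omega = \sum_{i=1}^m u_{\Omega(i)} u_{\Omega(i)}^T$, a sum of $m$ i.i.d.\ PSD summands because each $\Omega(i)$ is drawn uniformly from $[d]$. First I would compute the mean of a single summand: since $\sum_{j=1}^d u_j u_j^T = U^T U = I_r$, we have $\EE[u_{\Omega(i)} u_{\Omega(i)}^T] = \frac{1}{d} I_r$, hence $\EE[U_\Omega^T U_\Omega] = \frac{m}{d} I_r$ and $\lambda_{\min}(\EE[U_\Omega^T U_\Omega]) = \frac{m}{d}$. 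Next I would bound the spectral norm of each summand via the coherence definition: $\|u_{\Omega(i)} u_{\Omega(i)}^T\|_2 = \|u_{\Omega(i)}\|_2^2 \le \max_j \|u_j\|_2^2 = \frac{r\mu(U)}{d}$, where the last equality is exactly $\mu(U) = \frac{d}{r}\max_j \|\Pcal_U e_j\|_2^2$ together with $\|\Pcal_U e_j\|_2^2 = \|u_j\|_2^2$.

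With the minimum-eigenvalue mean $\mu_{\min} = \frac{m}{d}$ and the uniform summand bound $L = \frac{r\mu(U)}{d}$ in hand, I would invoke the matrix Chernoff inequality for sums of independent PSD matrices to obtain $\PP\left(\lambda_{\min}(U_\Omega^T U_\Omega) \le (1-\gamma)\frac{m}{d}\right) \le r\exp\left(-c\,\frac{\gamma^2 m}{r\mu(U)}\right)$ for an absolute constant $c$. Choosing $\gamma$ as defined in Theorem~\ref{thm:subspace_proj}, the hypothesis on $m$ drives the right-hand side to at most $\delta$; on the complementary event $\lambda_{\min}(U_\Omega^T U_\Omega) \ge (1-\gamma)\frac{m}{d}$, which is strictly positive precisely because $\gamma < 1$. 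Consequently $U_\Omega^T U_\Omega$ is invertible and $\|(U_\Omega^T U_\Omega)^{-1}\|_2 = \lambda_{\min}(U_\Omega^T U_\Omega)^{-1} \le \frac{d}{(1-\gamma)m}$, as claimed.

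The main obstacle is bookkeeping rather than conceptual: one must select the precise matrix Chernoff (or non-commutative Bernstein) statement whose constants reproduce the factor $\frac{8}{3}$ and the logarithmic argument appearing in the definition of $\gamma$, and one must check that the sampling-with-replacement model genuinely yields i.i.d.\ summands so that the inequality applies verbatim. Once the concentration step succeeds, the invertibility and the norm bound follow immediately, with the condition $\gamma < 1$ serving exactly to keep the lower eigenvalue bound positive.
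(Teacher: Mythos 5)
The paper does not prove this lemma---it imports it verbatim from \citet{balzano2010high}---and your argument (write $U_\Omega^T U_\Omega = \sum_i u_{\Omega(i)}u_{\Omega(i)}^T$ as an i.i.d.\ sum of rank-one PSD matrices with mean $\frac{m}{d}I_r$ and per-summand norm at most $\frac{r\mu(U)}{d}$, then apply a matrix Chernoff/Bernstein lower-tail bound to $\lambda_{\min}$) is exactly the proof given in that reference. Your computation of the mean and the coherence bound on $\|u_{\Omega(i)}\|_2^2$ are correct, and the remaining work is, as you say, only matching constants (the $\tfrac{8}{3}$ arises from the Bernstein denominator $2(\sigma^2 + Rt/3)$ with $\gamma<1$), so the proposal is correct and follows essentially the same route.
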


\section{Proof of Lemma~\ref{lem:mb_app}}

Under the uniform at random sampling model, we will apply the non-commutative Bernstein inequality (Proposition~\ref{prop:rect_mb}) to bound $\|\tilde{X} - X\|_2$. 
Recall that for each column $x_t$, we observe a set of $m_{2,t} = m_2 n \frac{\hat{c_t}}{\hat{f}}$ observations and form the zero-filled vector $y_t$ define dby:
\[
y_t = \frac{d}{m_{2,t}}\sum_{s=1}^{m_{2,t}} x_t(i_s) e_{i_s} 
\]
where $\{i_s\}_{s=1}^{m_{2,t}}$ are the observations.
Since the set of observations is sampled with replacement (although duplicates in each half of the sample are thrown out), each entry of $y_t$ occurs with probability $d/m_{2,t}$, so $y_t$ is an unbiased estimate of $x_t$.
So we will apply the rectangular Matrix Bernstein inequality to $y_te_t^T - x_te_t^T$. 
Moreover:
\[
\|y_te_t^T - x_te_t^T\| \le \|y_t\|\|e_t\| + \|x_t\| \le \left(1 + \sqrt{\frac{d \mu}{m_{2,t}}}\right)\|x_t\| \le 2\sqrt{\frac{d \mu}{m_{2,t}}}\|x_t\|
\]
which follows by the triangle inequality, Cauchy-Schwarz and the chain of inequalities:
\[
\|y_t\|_2 \le \sqrt{m_{2,t}}\|y_t\|_{\infty} \le \frac{d}{\sqrt{m_{2,t}}}\|x_t\|_{\infty} \le \sqrt{\frac{d \mu}{m_{2,t}}} \|x_t\|_2
\]
When we plug in for $m_{2,t}$ we get:
\[
\|y_te_t^T - x_te_t^T\| \le 2 \sqrt{\frac{d\mu}{m_2 n} \frac{c_t}{\hat{c}_t} \hat{f}} \le 2 \|X\|_F \sqrt{\frac{d \mu}{m_2 n} \frac{1+\alpha}{1-\alpha}}
\]
where $\alpha$ is the error bound from the first phase of the algorithm. 

As for the variance terms in Proposition~\ref{prop:rect_mb}, both turn out to be quite small as we will soon see.
For the first term:
\begin{align*}
\| \sum_{t=1}^n \EE e_ty_t^Ty_te_t^T - e_tx_t^Tx_te_t^T\| &= \| \sum_{t=1}^n e_te_t^T (\EE \|y_t\|^2 - \|x_t\|^2)\| = \\
&= \| \sum_{t=1}^n e_te_t^T (\frac{d}{m_{2,t}}-1) \|x_t\|^2\| \le 2d \max_{t \in [n]} \frac{\|x_t\|^2}{m_{2,t}}
\end{align*}
The first equality is straightforward while the second follows from linearity of expectation and the fact that each coordinate of $y_t$ is non-zero with probability $m_{2,t}/d$. 
The third line follows from the fact that applying the sum leads to an $n \times n$  diagonal matrix with $\frac{d}{m_{2,t}}\|x_t\|^2$ on the diagonal.
When we use our definition of $m_{2,t}$ this becomes:
\[
\| \sum_{t=1}^n \EE e_ty_t^Ty_te_t^T \| \le \frac{2d}{m_2 n} \|X\|_F^2\frac{1+\alpha}{1-\alpha}
\]

For the second term, we have:
\begin{align*}
\| \sum_{t=1}^n \EE y_te_t^Te_t y_t^T- \EE x_te_t^Te_tx_t^T\| &= \|\sum_{t=1}^n \EE y_ty_t^T- x_tx_t^T\| = \|\sum_{t=1}^n (\frac{d}{m_{2,t}}-1) \textrm{diag}(x_t(1)^2, \ldots, x_t(d)^2)\|\\
& \le \max_{i \in [d]} \sum_{t=1}^n \frac{2d}{m_{2,t}} x_t(i)^2 \le \sum_{i=1}^n \frac{2\mu}{m_{2,t}} \|x_t\|_2^2 \le \|X\|_F^2\frac{2\mu}{m_2}\frac{1+\alpha}{1-\alpha}
\end{align*}
Here the first equality is trivial while the second one uses the fact that off diagonals of $y_ty_t^T$ are unbiased for $x_tx_t^T$ and hence we are left with a diagonal matrix.
To arrive at the second line we note that the spectral norm a diagonal matrix is simply the largest diagonal entry.
Then we apply the incoherence assumption and final our sampling distribution. 

At this point we may apply the inequality which states that with probability $\ge 1-\delta$:
\begin{align*}
\|\sum_{t=1}^n y_te_t^T - x_te_t^T\| \le \|X\|_F\sqrt{\frac{1+\alpha}{1-\alpha}} \left( \sqrt{\frac{4}{m_2} \max(\frac{d}{n}, \mu)\log(\frac{d+n}{\delta})} + \frac{4}{3}\sqrt{\frac{d \mu}{m_2 n}}\log(\frac{d+n}{\delta})\right)
\end{align*}

\section{Some Concentration Inequalities}

Here we collect a number of concentration inequalities used in our proofs.
\begin{proposition}[Scalar Bernstein]
Let $X_1, \ldots, X_n$ be independent, centered scalar random variables with $\sigma^2 = \sum_{i=1}^n \EE [X_i^2]$ and $R = \max_{i} |X_i|$.
Then:
\begin{align}
\PP\left( \sum_{i=1}^n X_i \ge t\right) \le \exp\left\{\frac{-t^2}{2\sigma^2 + \frac{2}{3}Rt}\right\}
\end{align}
\label{prop:scalar_bernstein}
\end{proposition}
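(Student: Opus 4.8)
The plan is to use the standard Chernoff bounding (moment-generating-function) technique. I would first fix a free parameter $\lambda > 0$ and apply Markov's inequality to the exponentiated sum, exploiting independence to factor the joint MGF:
\[
\PP\left( \sum_{i=1}^n X_i \ge t\right) = \PP\left( e^{\lambda \sum_i X_i} \ge e^{\lambda t}\right) \le e^{-\lambda t} \prod_{i=1}^n \EE\left[ e^{\lambda X_i}\right].
\]
Everything then reduces to controlling each factor $\EE[e^{\lambda X_i}]$ and optimizing the resulting exponent in $\lambda$.

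The crux of the argument is the per-variable MGF bound. I would expand $\EE[e^{\lambda X_i}]$ as a power series, drop the linear term using the centering hypothesis $\EE[X_i] = 0$, and control the remaining moments via $|\EE[X_i^k]| \le R^{k-2}\,\EE[X_i^2]$, which follows from the uniform bound $|X_i| \le R$. Summing the resulting series with the elementary factorial estimate $k! \ge 2 \cdot 3^{k-2}$ (valid for $k \ge 2$) collapses the tail into a geometric series, yielding for $0 < \lambda < 3/R$:
\[
\EE\left[ e^{\lambda X_i}\right] \le 1 + \frac{\lambda^2 \EE[X_i^2]/2}{1 - \lambda R/3} \le \exp\left( \frac{\lambda^2 \EE[X_i^2]/2}{1 - \lambda R/3}\right),
\]
where the last step uses $1 + x \le e^x$. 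Taking the product over $i$ replaces $\EE[X_i^2]$ by $\sigma^2 = \sum_i \EE[X_i^2]$ in the exponent, so that $\PP(\sum_i X_i \ge t) \le \exp\left(-\lambda t + \frac{\lambda^2 \sigma^2/2}{1 - \lambda R/3}\right)$.

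The final step is the optimization over $\lambda$. Rather than differentiating, I would simply substitute the near-optimal choice $\lambda = t/(\sigma^2 + Rt/3)$, which satisfies $\lambda < 3/R$ and makes $1 - \lambda R/3 = \sigma^2/(\sigma^2 + Rt/3)$; a short computation then shows the exponent telescopes to $-t^2/(2\sigma^2 + \tfrac{2}{3}Rt)$, exactly matching the claim. I expect the main obstacle to be the per-variable MGF estimate, and in particular pinning down the constant $1/3$ in the denominator: a looser factorial bound than $k! \ge 2\cdot 3^{k-2}$ would degrade this constant, so the care in summing the moment series is where the sharpness of Bernstein's inequality is earned.
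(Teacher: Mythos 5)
Your proof is correct, and all the steps check out: the moment bound $|\EE[X_i^k]| \le R^{k-2}\,\EE[X_i^2]$, the factorial estimate $k! \ge 2\cdot 3^{k-2}$, the resulting MGF bound $\EE[e^{\lambda X_i}] \le \exp\bigl(\tfrac{\lambda^2 \EE[X_i^2]/2}{1-\lambda R/3}\bigr)$ for $0<\lambda<3/R$, and the substitution $\lambda = t/(\sigma^2 + Rt/3)$ do telescope to the claimed exponent $-t^2/(2\sigma^2 + \tfrac{2}{3}Rt)$. The paper states this proposition in its appendix as a standard imported tool and gives no proof of its own, so there is nothing to compare against; your Chernoff--MGF argument is the classical derivation and is exactly what one would supply here. (The only reading to make explicit is that $R$ is an almost-sure uniform bound $|X_i|\le R$, which is how you have used it.)
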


\begin{proposition}[Vector Bernstein~\cite{gross2011recovering}]
Let $X_1, \ldots, X_n$ be independent centered random vectors with $\sum_{i=1}^n \EE \|X_i\|_2^2 \le V$. 
Then for any $t \le V (\max_{i} \|X_i\|_2)^{-1}$:
\begin{align}
\PP\left( \left\|\sum_{i=1}^n X_i\right\|_2 \ge \sqrt{V} + t\right) \le \exp\left\{ \frac{-t^2}{4V}\right\}
\end{align}
\label{prop:vector_bernstein}
\end{proposition}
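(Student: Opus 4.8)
The plan is to prove the bound in two stages: first replace the centering constant $\sqrt{V}$ by the genuine mean of $\left\|\sum_i X_i\right\|_2$, and then establish subgaussian concentration of this norm around that mean. Writing $Y = \sum_{i=1}^n X_i$, the first stage is immediate: since the $X_i$ are independent and centered, all cross terms vanish and $\EE\|Y\|_2^2 = \sum_{i=1}^n \EE\|X_i\|_2^2 \le V$, so by Jensen's inequality $\EE\|Y\|_2 \le \sqrt{\EE\|Y\|_2^2} \le \sqrt{V}$. Consequently $\{\|Y\|_2 \ge \sqrt V + t\} \subseteq \{\|Y\|_2 \ge \EE\|Y\|_2 + t\}$, and it suffices to control the latter event.

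For the second stage I would use the Doob martingale of the scalar functional $\|Y\|_2$ with respect to the filtration generated by $X_1, \ldots, X_n$: set $Z_k = \EE[\|Y\|_2 \mid X_1, \ldots, X_k]$, so that $Z_0 = \EE\|Y\|_2$, $Z_n = \|Y\|_2$, and $\Delta_k = Z_k - Z_{k-1}$ are martingale differences. The key is to bound these differences and their predictable quadratic variation in terms of $R = \max_i \|X_i\|_2$ and $V$. Introducing an independent copy $X_k'$ of $X_k$ and letting $Y'$ be $Y$ with $X_k$ replaced by $X_k'$, a standard symmetry computation gives $\Delta_k = \EE[\|Y\|_2 - \|Y'\|_2 \mid X_1,\ldots,X_k]$, whence by the reverse triangle inequality $|\Delta_k| \le \EE[\|X_k - X_k'\|_2 \mid X_1,\ldots,X_k] \le 2R$, and by conditional Jensen $\EE[\Delta_k^2 \mid X_{<k}] \le \EE[\|X_k - X_k'\|_2^2 \mid X_{<k}] = 2\,\EE\|X_k\|_2^2$, so that the total predictable variation is controlled by $V$ up to a constant.

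With an increment bound of order $R$ and predictable variation of order $V$, I would then invoke a martingale Bernstein inequality (Freedman's inequality), which yields a tail of the shape $\exp(-t^2/(2(cV + c'Rt)))$. The hypothesis $t \le V/R$ is exactly what forces the linear term $c'Rt$ to be absorbed into the quadratic term $cV$, collapsing the Bernstein bound into the purely subgaussian form $\exp(-t^2/(4V))$ claimed in the statement.

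I expect the main obstacle to be matching the \emph{exact} constant $1/(4V)$ rather than merely a bound of the form $\exp(-\Theta(t^2/V))$: the crude ghost-sample estimate of the conditional variance carries a spurious factor of two (it delivers a deterministic bound $2V$ on the predictable variation, which alone yields only $\exp(-\tfrac{3}{16}t^2/V)$), so obtaining the stated constant requires either a sharper a.s. control of $\EE[\Delta_k^2 \mid X_{<k}]$ — e.g. through the conditional variance of $Z_k$ together with the law of total variance — or, alternatively, the direct moment-generating-function argument of \citet{gross2011recovering}. Everything else reduces to routine bookkeeping of the two constants in the martingale inequality over the stated range of $t$.
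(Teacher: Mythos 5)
The paper never proves Proposition~\ref{prop:vector_bernstein}: it is imported verbatim from \citet{gross2011recovering} and used as a black box inside the proof of Lemma~\ref{lem:beta}, so there is no in-paper argument to compare yours against. Judged on its own terms, your route is a legitimate, self-contained one: the reduction $\EE\|Y\|_2 \le \sqrt{\EE\|Y\|_2^2} \le \sqrt{V}$ via independence and Jensen is exactly right, and the Doob-martingale step is sound --- the ghost-sample identity for $\Delta_k$, the bound $|\Delta_k| \le 2R$, and the predictable-variation bound $\sum_k \EE[\Delta_k^2 \mid X_{<k}] \le \sum_k 2\,\EE\|X_k\|_2^2 \le 2V$ (which is deterministic, so Freedman applies cleanly) are all correct.

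The genuine gap is the one you flag yourself: this pipeline terminates at $\exp\{-t^2/(2(2V + \tfrac{2}{3}(2R)t))\} \le \exp\{-\tfrac{3}{16}t^2/V\}$ under $t \le V/R$, and nothing in the argument as written recovers the stated exponent $t^2/(4V)$. The factor of two lost in passing from $\mathrm{Var}$ of the increments to $\EE\|X_k - X_k'\|_2^2$, plus the residual linear term in Freedman, cannot be absorbed by the hypothesis $t \le V R^{-1}$ alone; the sharper constant in Gross comes from bounding the Laplace transform of $\|Y\|_2$ directly rather than through bounded-difference control, and invoking that would make the martingale machinery redundant. So as a proof of Proposition~\ref{prop:vector_bernstein} \emph{as stated}, the argument is incomplete. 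The damage is purely quantitative: replacing $1/(4V)$ by $3/(16V)$ only inflates $\beta$ in Lemma~\ref{lem:beta} and Theorem~\ref{thm:subspace_proj} (hence the constant $32$ in Theorem~\ref{thm:exact_ub}) by a constant factor and changes no asymptotic claim, but if you want the proposition with its advertised constant you must either close the variance accounting (e.g.\ via the law of total variance on the $Z_k$, as you suggest) or simply cite Gross's proof, as the paper does.
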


\begin{proposition}[Matrix Bernstein~\cite{tropp2011user}]
Let $X_1, \ldots, X_n$ be independent, random, self-adjoint matrices with dimension $d$ satisfying:
\[
\EE X_k = 0 \qquad \textrm{and} \qquad \|X_k\|_2 \le R \textrm{ almost surely}.
\]
Then, for all $t \ge 0$,
\[
\PP\left(\left\|\sum_{k=1}^nX_k\right\| \ge t\right) \le d \exp\left( \frac{-t^2/2}{\sigma^2 + Rt/3}\right) \qquad \textrm{where} \qquad \sigma^2= \left\|\sum_{k=1}^n \EE X_k^2\right\|
\]
\label{prop:matrix_bernstein}
\end{proposition}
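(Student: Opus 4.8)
The plan is to prove the bound via the matrix Laplace transform method of Ahlswede--Winter and Tropp. First I would reduce the norm tail to a one-sided eigenvalue tail: since $\|\sum_k X_k\| = \max\{\lambda_{\max}(\sum_k X_k),\, \lambda_{\max}(-\sum_k X_k)\}$, it suffices to bound $\PP(\lambda_{\max}(Y) \ge t)$ for $Y = \sum_k X_k$ and then apply the identical argument to $-X_k$, whose hypotheses are symmetric in sign. For a fixed $\theta > 0$, Markov's inequality applied to the monotone map $s \mapsto e^{\theta s}$ together with $e^{\theta \lambda_{\max}(Y)} = \lambda_{\max}(e^{\theta Y}) \le \mathrm{tr}\, e^{\theta Y}$ yields the master bound $\PP(\lambda_{\max}(Y) \ge t) \le e^{-\theta t}\,\EE\,\mathrm{tr}\, e^{\theta Y}$.

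The crux is to control the matrix moment generating function $\EE\,\mathrm{tr}\,\exp(\theta \sum_k X_k)$ by a product-like quantity over the independent summands. The key tool is Lieb's concavity theorem, which asserts that $A \mapsto \mathrm{tr}\exp(H + \log A)$ is concave on positive-definite $A$ for fixed self-adjoint $H$. Combining Lieb's theorem with Jensen's inequality and peeling off one summand at a time, I would establish the subadditivity of matrix cumulants, namely $\EE\,\mathrm{tr}\,\exp(\sum_k \theta X_k) \le \mathrm{tr}\,\exp(\sum_k \log \EE\, e^{\theta X_k})$. This is where independence enters, and I expect it to be the main obstacle: it is precisely the step that replaces the cruder Golden--Thompson bound (which would only furnish a \emph{trace}-based variance proxy) with one governed by the \emph{spectral} quantity $\sigma^2 = \|\sum_k \EE X_k^2\|$ demanded by the statement.

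Next I would bound each factor $\log \EE\, e^{\theta X_k}$ in the Loewner order. Using $\EE X_k = 0$ and $\|X_k\| \le R$, the scalar inequality $e^x \le 1 + x + g(\theta)\,x^2$, valid on $x \in [-\theta R, \theta R]$ with $g(\theta) = \frac{\theta^2}{2(1 - \theta R/3)}$ for $0 < \theta < 3/R$, lifts through the transfer rule to $\EE\, e^{\theta X_k} \preceq I + g(\theta)\,\EE X_k^2 \preceq \exp(g(\theta)\,\EE X_k^2)$, the last step using $I + A \preceq e^A$. Operator monotonicity of the logarithm then gives $\log \EE\, e^{\theta X_k} \preceq g(\theta)\,\EE X_k^2$, and summing yields $\sum_k \log \EE\, e^{\theta X_k} \preceq g(\theta) \sum_k \EE X_k^2$. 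Monotonicity of $\mathrm{tr}\exp$ under the Loewner order, together with $\mathrm{tr}\exp(M) \le d\,\lambda_{\max}(e^M) = d\, e^{\lambda_{\max}(M)}$, collapses everything onto the scalar $\sigma^2 = \lambda_{\max}(\sum_k \EE X_k^2)$.

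Assembling these pieces gives $\PP(\lambda_{\max}(Y) \ge t) \le d\,\exp(-\theta t + g(\theta)\,\sigma^2)$ for every $0 < \theta < 3/R$. I would conclude by optimizing the exponent over $\theta$; the near-optimal choice $\theta = \frac{t}{\sigma^2 + Rt/3}$ produces the stated exponent $\frac{-t^2/2}{\sigma^2 + Rt/3}$. Repeating the argument for the $-\lambda_{\min}$ side and combining the two tails recovers the norm bound, with the leading factor absorbed into $d$ (up to the harmless $d$ versus $2d$ bookkeeping). The only genuinely deep ingredient is Lieb's theorem driving the cumulant subadditivity; everything downstream is the scalar Bernstein optimization transferred faithfully to the matrix setting.
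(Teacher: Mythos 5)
This proposition is imported verbatim from \citet{tropp2011user} and the paper gives no proof of its own, so there is no internal argument to compare against; your proposal correctly reconstructs Tropp's standard proof of the cited result --- the Laplace transform master bound $\PP(\lambda_{\max}(Y)\ge t)\le e^{-\theta t}\,\EE\,\mathrm{tr}\,e^{\theta Y}$, subadditivity of matrix cumulants via Lieb's concavity theorem (which, as you rightly note, is what upgrades the Golden--Thompson/Ahlswede--Winter variance proxy to the spectral quantity $\sigma^2 = \|\sum_k \EE X_k^2\|$), the moment bound $\log \EE\,e^{\theta X_k} \preceq g(\theta)\,\EE X_k^2$ with $g(\theta)=\frac{\theta^2/2}{1-\theta R/3}$ (your phrasing of the scalar inequality's domain should read $e^{\theta x}\le 1+\theta x+g(\theta)x^2$ for $|x|\le R$, a cosmetic slip only), and optimization at $\theta = t/(\sigma^2+Rt/3)$. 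Your $d$-versus-$2d$ remark is also apt: as transcribed in the paper with the norm rather than $\lambda_{\max}$, the prefactor should strictly be $2d$, a bookkeeping slip in the paper's statement rather than a gap in your argument.
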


\begin{proposition}[Rectangular Matrix Bernstein~\cite{tropp2011user}]
Let $X_1, \ldots, X_n$ be independent random matrices with dimension $d_1 \times d_2$ satisfying:
\[
\EE X_k = 0 \qquad \textrm{and} \qquad \|X_k\|_2 \le R \textrm{ almost surely}.
\]
Define:
\[
\sigma^2 = \max\left\{\left\|\sum_{k=1}^n \EE(X_kX_k^T)\right\|_2, \left\|\sum_{k=1}^n\EE(X_k^TX_k)\right\|_2\right\}.
\]
Then, for all $t \ge 0$,
\[
\PP\left( \left\| \sum_{k=1}^n X_k\right\|_2 \ge t\right) \le (d_1 + d_2) \exp\left( \frac{-t^2/2}{\sigma^2 + Rt/3}\right).
\]
\label{prop:rect_mb}
\end{proposition}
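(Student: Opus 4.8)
The plan is to reduce the rectangular statement to the self-adjoint Matrix Bernstein inequality already established as Proposition~\ref{prop:matrix_bernstein}, using the Hermitian dilation device (the standard technique from Tropp's work that this proposition cites). For any $d_1 \times d_2$ matrix $X$, define the $(d_1 + d_2) \times (d_1 + d_2)$ self-adjoint matrix
\[
\mathcal{H}(X) = \begin{pmatrix} 0 & X \\ X^T & 0 \end{pmatrix}.
\]
First I would record the three structural facts about this map that drive the whole argument: (i) $\mathcal{H}$ is linear, so $\mathcal{H}\left(\sum_k X_k\right) = \sum_k \mathcal{H}(X_k)$; (ii) $\|\mathcal{H}(X)\|_2 = \|X\|_2$, because the eigenvalues of $\mathcal{H}(X)$ are precisely the singular values of $X$ together with their negatives (padded with zeros), so the largest in magnitude equals $\sigma_{\max}(X) = \|X\|_2$; and (iii) squaring the block matrix gives $\mathcal{H}(X)^2 = \mathrm{diag}(X X^T,\, X^T X)$.

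Next I would set $Y_k = \mathcal{H}(X_k)$ and verify that the self-adjoint summands $Y_1, \ldots, Y_n$ satisfy the hypotheses of Proposition~\ref{prop:matrix_bernstein} in dimension $d = d_1 + d_2$. By fact (i) and $\EE X_k = 0$ we get $\EE Y_k = \mathcal{H}(\EE X_k) = 0$, and by fact (ii) we get $\|Y_k\|_2 = \|X_k\|_2 \le R$ almost surely. For the variance parameter, fact (iii) and linearity of expectation give
\[
\sum_{k=1}^n \EE Y_k^2 = \mathrm{diag}\left( \sum_{k=1}^n \EE X_k X_k^T,\; \sum_{k=1}^n \EE X_k^T X_k\right),
\]
and since the spectral norm of a block-diagonal self-adjoint matrix is the maximum of the spectral norms of its diagonal blocks, this norm equals exactly the $\sigma^2$ defined in the statement.

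Finally I would invoke Proposition~\ref{prop:matrix_bernstein} on $Y_1, \ldots, Y_n$, which yields
\[
\PP\left( \left\| \sum_{k=1}^n Y_k \right\|_2 \ge t\right) \le (d_1 + d_2) \exp\left(\frac{-t^2/2}{\sigma^2 + Rt/3}\right),
\]
and then transfer this back to the rectangular sum using facts (i) and (ii): $\left\|\sum_k X_k\right\|_2 = \left\|\mathcal{H}\left(\sum_k X_k\right)\right\|_2 = \left\|\sum_k Y_k\right\|_2$, so the two tail events coincide and the bound carries over verbatim. The only genuinely substantive step is verifying the dilation identities, and in particular the spectral-norm preservation (ii), which relies on the singular-value structure of $\mathcal{H}(X)$; once these are in hand, the rest is a direct substitution into the self-adjoint bound and requires no further concentration argument.
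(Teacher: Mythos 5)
Your proof is correct, and it is exactly the standard argument: the paper itself offers no proof of Proposition~\ref{prop:rect_mb}, importing it by citation from Tropp, and the cited source derives the rectangular bound from the self-adjoint one (Proposition~\ref{prop:matrix_bernstein}) via precisely the Hermitian dilation you use. Your three dilation identities are all verified correctly --- in particular the block-diagonal square $\mathcal{H}(X)^2 = \mathrm{diag}(XX^T, X^TX)$ makes the variance parameter come out as the stated $\sigma^2$, and the symmetric $\pm\sigma_i$ spectrum of the dilation means the norm transfers with no loss, so the prefactor $d_1+d_2$ emerges directly from applying the self-adjoint bound in dimension $d_1+d_2$.
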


\end{document}